\documentclass{article} 
\usepackage{iclr2027_conference,times}

\usepackage{hyperref}
\usepackage{url}

\usepackage{amsfonts} 
\usepackage{amsmath}
\usepackage{amssymb}
\usepackage{mathtools}
\usepackage{amsthm}
\usepackage{graphicx}
\usepackage{subcaption}
\usepackage{wrapfig}
\usepackage{algorithm}
\usepackage{algorithmic}

\title{Partial AUC Maximization from \\ Positive-unlabeled Data}

\author{
\textbf{Atsutoshi Kumagai},
\textbf{Tomoharu Iwata},
\textbf{Taishi Nishiyama},
\textbf{Hiroshi Takahashi}, \\
\textbf{Kazuki Adachi},
\textbf{Yasuhiro Fujiwara} \\
NTT, Inc. \\
\texttt{atsutoshi.kumagai@ntt.com}
}

\newtheorem{theorem}{Theorem}[section]
\newtheorem{lemma}[theorem]{Lemma}
\newtheorem{proposition}[theorem]{Proposition}

\newtheorem{assumption}[theorem]{Assumption}
\theoremstyle{remark}

\iclrfinalcopy 
\begin{document}

\maketitle

\lhead{Preprint}

\begin{abstract}
The partial area under the receiver operating characteristic curve (pAUC) is an important performance metric for binary classification that summarizes true positive rates within a specific range of false positive rates (FPRs).
Classifiers that achieve high pAUC need to be obtained in many real-world applications such as cybersecurity, medical care, and advertising. Although many methods for maximizing the pAUC have been proposed, they typically require both labeled positive and negative data for training. However, in practice, labeled negative data are often difficult to collect due to privacy concerns or the need for high expertise to annotate them. 
In this paper, we propose a method for maximizing the pAUC from positive and unlabeled (PU) data without negative data.
Within an empirical risk minimization framework, we show that the pAUC, including its FPR-dependent thresholds, can be represented using only the positive and marginal densities, and derive an empirical estimator from PU data.
A classifier is then trained by maximizing the derived smoothed empirical pAUC estimator.
We experimentally demonstrate the effectiveness of the proposed method with ten real-world datasets.
\end{abstract}

\section{Introduction}
\label{sec:intro}

The area under the receiver operating characteristic (ROC)
curve (AUC) is a widely used evaluation metric for binary classification, especially in class-imbalanced scenarios where positive data are much smaller than negative data
\citep{bradley1997use,cortes2003auc}.
The ROC curve is obtained by plotting the true positive rate (TPR) 
against the false positive rate (FPR) at various threshold settings.
By maximizing the AUC, we can obtain classifiers that achieve high average TPRs over all FPR values from zero to one
\citep{yang2022auc}.

\begin{wrapfigure}[14]{r}{60mm}
  \centering
  \includegraphics[width=0.98\linewidth]{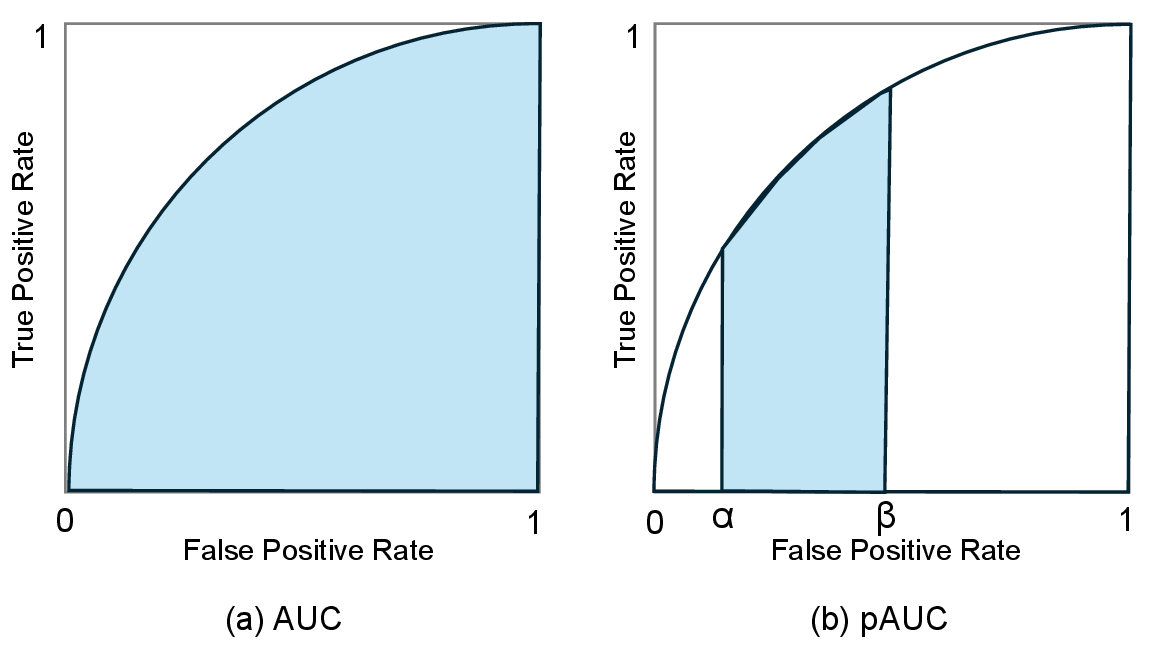}
  \caption{The comparison between the AUC and pAUC: (a) AUC. (b) pAUC within FPR range $[\alpha, \beta]$. }
  \label{fig:pauc}
\end{wrapfigure}
In many real-world applications, 
we are interested in achieving high TPRs within a specific range of FPRs.
For example, in cybersecurity, 
detection systems often need a low FPR since falsely detecting benign software or communications as malicious intensely hinders the work of end users and operators,
but they still require high TPRs to reliably detect truly malicious activities~\citep{nishiyama2024partial}.
In medical care, a low FPR needs to be maintained to avoid unnecessary biopsies, which are invasive and impose physical and psychological burdens on patients as well as financial costs
\citep{baker2001proposed,iwata2020semi}.
In advertising, a low FPR is also important since irrelevant advertisements substantially degrade the user experience.
In such applications, a partial AUC (pAUC), which is the partial area under the ROC curve within a specific FPR range, 
is a more appropriate metric than the AUC
\citep{yang2022auc}.
Figure~\ref{fig:pauc} illustrates the difference between the AUC and pAUC.

Although many methods for maximizing the pAUC have been proposed
\citep{narasimhan2013structural,ueda2018partial,iwata2020semi,yao2022large,zhu2022auc},
they require both labeled positive and negative (PN) data for training.
However, in practice, labeled negative data are often difficult to collect.
For example, in cybersecurity, 
although some malicious data (positive data) can be obtained from public blocklists, 
benign data (negative data) are often unavailable due to privacy reasons, and identifying benign data within unlabeled data requires a high level of expertise
\citep{mirsky2018kitsune,kumagaiauc,kumagaipositive}.
In medical care and advertising, although positive data can be obtained from biopsy-confirmed cases or clicked ads, data that are not biopsy-confirmed cases or clicked ads cannot be regarded as negative because they may contain false negatives, such as early-stage patients incorrectly classified as healthy or advertisements that users are interested in but happened not to click
\citep{bekker2020learning}.
Therefore, such data are better treated as unlabeled data.
In this setting, most existing pAUC maximization methods cannot be applied.

The only exception is the work by Iwata et al.
\citeyearpar{iwata2020semi}, 
which proposed a semi-supervised learning method for maximizing the pAUC and presented an approximation of the pAUC from positive and unlabeled (PU) data.
This method sorts the unlabeled instances by their current classifier scores and heuristically treats all instances within a specified rank range as negatives, assuming that the selected instances are indeed negative.
However, the resulting objective generally does not coincide with the pAUC. Moreover, when this heuristic assumption is violated, its performance can degrade.

In this paper, we propose an approach for maximizing the pAUC from PU data by directly reformulating the pAUC without relying on this heuristic assumption. Our approach builds on the empirical risk minimization (ERM) framework of PU learning,
which allows us to rewrite the risk (i.e., the objective for classifier training) using only the positive and marginal densities
\citep{sugiyama2022machine,du2015convex}.
However, unlike standard classification and AUC objectives used in prior PU learning work
\citep{kiryo2017positive,sakai2018semi,xie2018semi}, simply applying this risk rewriting to the pAUC is insufficient because the pAUC additionally depends on FPR-dependent score thresholds, which are determined by the unknown negative density.
We show that the FPR, and consequently the FPR-dependent thresholds required for the pAUC, can also be represented using only the positive and marginal densities, thereby enabling a direct reformulation of the pAUC.
Based on this reformulation, we derive an empirical estimator of the pAUC from PU data.
By maximizing the derived smoothed empirical pAUC estimator, we can train classifiers that achieve high pAUC even without negative data.
The proposed method can use any differentiable classifiers such as linear classifiers,
kernel-based classifiers, and neural network-based classifiers, which is preferable in practical use.

Our main contributions are summarized as follows: (1) We show that the pAUC, including its FPR-dependent thresholds, can be reformulated using only the positive and marginal densities, and derive an empirical estimator of the pAUC from PU data. 
(2)
We propose a method for learning classifiers that achieve high pAUC by maximizing the derived smoothed empirical pAUC estimator with PU data. (3) We experimentally demonstrate the effectiveness of the proposed method against existing PU learning, AUC maximization, and pAUC maximization methods with ten real-world datasets.

\section{Related Work}
\label{sec:related}

Many methods for maximizing the AUC have been proposed
\citep{yang2022auc,ying2016stochastic,yuan2021large,zhang2023federated,yuan2021compositional}.
Although these methods perform well, particularly on class-imbalanced data
\citep{yuan2021compositional,yuan2021large}, 
they require labeled PN data for training and are not designed for maximizing the pAUC.
Since classifiers achieving high AUC are not necessarily optimal with respect to the pAUC
\citep{narasimhan2013structural,narasimhan2013svmpauctight},
pAUC needs to be directly maximized.

PU learning methods have been proposed for training binary classifiers from only PU data
\citep{bekker2020learning,elkan2008learning}.
A representative approach is the ERM, which rewrites the risk using only PU data~\citep{sugiyama2022machine,du2015convex}.
Some methods use this approach to maximize the AUC with PU data
\citep{sakai2018semi,xie2024weakly,xie2018semi}.
However, extending this approach to the pAUC is nontrivial because, unlike the AUC, the pAUC depends on FPR-dependent score thresholds determined by the negative density. 
Thus, rewriting the pairwise terms alone is insufficient.
We show that these thresholds can also be represented using the positive and marginal densities, enabling the pAUC to be reformulated from PU data.

Many pAUC maximization methods have been proposed
\citep{narasimhan2013structural,narasimhan2013svmpauctight,iwata2020semi,yao2022large,zhu2022auc,ueda2018partial,zhou2025stochastic}.
Although these methods are promising, they require labeled PN data for training.
Iwata et al.
\citeyearpar{iwata2020semi} 
presented an approximation of the pAUC from PU data by heuristically treating all unlabeled instances whose ranks fall within a specified range as negative. However, the resulting objective generally does not coincide with the pAUC. In contrast, the proposed method reformulates the pAUC from PU data without relying on this heuristic. Although Xie et al.
\citeyearpar{xie2024weakly}
also use score-based instance selection, their method targets the full AUC rather than the pAUC.

\section{Preliminaries}
\label{sec:prelim}

In this section, we briefly explain AUC and pAUC.
Let input instance ${\bf x} \in \mathcal{X} $ and its class label $y \in \{-1,+1\}$ be associated with probability density $p({\bf x},y)$,
where $+1$ and $-1$ correspond to PN classes, respectively.
The conditional probability densities for PN classes are denoted as
$p^{{\rm p}} ({\bf x}) := p({\bf x} | y=+1)$ and $p^{{\rm n}} ({\bf x}) := p({\bf x} | y=-1)$.
Let $s: \mathcal{X} \to \mathbb{R} $ be a score function that represents the relative tendency of an instance to be positive.
The classifier is obtained by the score function with threshold $t$: $y= {\rm sign} ( s({\bf x}) - t )$, where ${\rm sign}$ denotes a sign function.

The TPR with threshold $t$ is defined as the probability that the score of positive data is higher than threshold $t$:
\begin{align}
\label{eq:tpr}
{\rm TPR} (t) := \mathbb{E} _{{\bf x} \sim p^{{\rm p}}} \left[ I(s({\bf x}) > t)  \right],
\end{align}
where ${\mathbb E}_{{\bf x} \sim p} \left[ \cdot \right]$ is the expectation over density $p$, and $I(\cdot)$ is the indicator function; $I(Z) = 1$ if $Z$ is true and $I(Z)=0$ otherwise. Similarly, the FPR with threshold $t$ is defined as the probability that the score of negative data is higher than threshold $t$:
\begin{align}
\label{eq:fpr}
{\rm FPR} (t) := \mathbb{E} _{{\bf x} \sim p^{{\rm n}}} \left[ I(s({\bf x}) > t)  \right].
\end{align}
The AUC is defined as the area under the ROC curve plotted by ${\rm TPR}(t)$ against ${\rm FPR}(t)$ with varying threshold $t$. 
Assuming there are no ties, it can be calculated by
\begin{align}
{\rm AUC} := \int_0^1 {\rm TPR} ({\rm FPR}^{-1}(u))du = \mathbb{E}_{{\bf x}^{{\rm p}} \sim p^{{\rm p}}}  \mathbb{E}_{{\bf x}^{{\rm n}} \sim p^{{\rm n}}} \left[ I(s({\bf x}^{{\rm p}}) > s({\bf x}^{{\rm n}})) \right],
\end{align}
where  
\begin{align}
{\rm FPR}^{-1}(u) = {\rm inf} \{ t \in \mathbb{R} \ : \ {\rm FPR}(t) \le u\}
\end{align}
\citep{yang2022auc}.
${\rm FPR}^{-1}(u)$ returns the lower threshold at which the FPR is no greater than $u$.
The AUC can be regarded as the probability that the score of positive data is higher than that of negative data.

The pAUC between $\alpha$ and $\beta$, where $0 \le \alpha < \beta \le 1$, is defined as the normalized area under
the ROC curve plotted by ${\rm TPR}(t)$ against ${\rm FPR}(t)$ where ${\rm FPR}(t)$ is between $\alpha$ and $\beta$ as follows:
\begin{align}
\label{eq:epauc}
{\rm pAUC} (\alpha, \beta) &:= \frac{1}{\beta-\alpha} \int_{\alpha}^{\beta} {\rm TPR} ({\rm FPR}^{-1}(u))du \nonumber\\
&= \frac{1}{\beta-\alpha} \mathbb{E}_{{\bf x}^{{\rm p}} \sim p^{{\rm p}}}  \mathbb{E}_{{\bf x}^{{\rm n}} \sim p^{{\rm n}}} \left[ I(s({\bf x}^{{\rm p}}) > s({\bf x}^{{\rm n}})) I(s({\bf x} ^{{\rm n}}) \in [t_{\beta}, t_{\alpha}]) \right],
\end{align}
where $t_{\alpha} := {\rm FPR}^{-1}(\alpha)$ and $t_{\beta} := {\rm FPR}^{-1}(\beta)$
\citep{yang2022auc}.
The pAUC between $\alpha$ and $\beta$ is the probability that positive data have higher scores than negative data whose score is between $t_{\beta} = {\rm FPR}^{-1}(\beta)$ and $t_{\alpha} = {\rm FPR}^{-1}(\alpha)$.
Due to the normalization factor $1/(\beta-\alpha)$, the pAUC takes values between zero and one.

Given $N^{{\rm p}}$ positive instances $\{ {\bf x}^{{\rm p}} _1, \dots, {\bf x}^{{\rm p}} _{N^{{\rm p}}} \} $ drawn from $p^{{\rm p}} ({\bf x})$ and $N^{{\rm n}}$ negative instances  $\{ {\bf x}^{{\rm n}} _1, \dots, {\bf x}^{{\rm n}} _{N^{{\rm n}}} \} $ drawn from $p^{{\rm n}} ({\bf x})$,
the empirical estimator of the pAUC is calculated as
\begin{align}
{\widehat {\rm pAUC}} (\alpha, \beta) =& \frac{1}{(\beta - \alpha)N^{{\rm p}}N^{{\rm n}}} \sum_{n=1}^{N^{{\rm p}}} \sum_{m=j_{\alpha}+1}^{j_{\beta}} I(s({\bf x} _n^{{\rm p}}) > s({\bf x} _{(m)}^{{\rm n}})),
\end{align}
where $j_{\alpha} := \alpha N^{{\rm n}}$, $j_{\beta} := \beta N^{{\rm n}}$, and 
${\bf x}_{(m)}^{{\rm n}}$ is the negative instance ranked in the $m$-th position among negative data in a descending order of scores $s({\bf x})$\footnote{For simplicity, we assumed that
$j_{\alpha}$ and $j_{\beta}$ are both integers following the previous studies
\citep{yao2022large,narasimhan2013svmpauctight}.}
\citep{dodd2003partial}.
The empirical pAUC is the empirical probability that positive data
have higher scores than negative data that are ranked between $j_{\alpha}$ and $j_{\beta}$.

\section{Proposed Method}
\label{sec:prop}

We first explain our problem setting in Section \ref{subsec:probset}.
Then, we derive the empirical estimator of the pAUC with PU data in Section \ref{subsec:pauc_estimator}.
Lastly, we describe the pAUC maximization method with the derived pAUC estimator in Section \ref{subsec:pauc_maximization}.

\subsection{Problem Setting}
\label{subsec:probset}

Suppose that we are given a set of positive instances $X^{{\rm p}}$ and a set of unlabeled instances $X$:
\begin{align}
&X^{{\rm p}} := \{ {\bf x}^{{\rm p}} _{n} \}_{n=1}^{N ^{{\rm p}}} \overset{\text{i.i.d.}}{\sim} p^{{\rm p}} ({\bf x}), \ \ X := \{ {\bf x} _{n} \}_{n=1}^{N} \overset{\text{i.i.d.}}{\sim} p ({\bf x}) = \pi p^{{\rm p}} ({\bf x}) + (1-\pi) p^{{\rm n}} ({\bf x}), \label{eq:mar_train2}
\end{align}
where $p ({\bf x})$ is the marginal density, and $\pi := p(y=+1)$ is the positive class-prior.
We assume that $\pi$ is known as in the previous PU learning studies
\citep{du2015convex,kiryo2017positive,iwata2020semi}.
When $\pi$ is unknown, it can be estimated from PU data
\citep{scott2015rate,ramaswamy2016mixture,yao2021rethinking},
as also illustrated in our experiments.
Our goal is to learn score function $s$ that can maximize the pAUC between $\alpha$ and $\beta$ from given PU data $X^{{\rm p}} \cup X$.

\subsection{pAUC Estimator with PU Data}
\label{subsec:pauc_estimator}

We derive the empirical estimator of the pAUC with PU data.
First, from the definition of marginal density $p({\bf x}) = \pi p^{{\rm p}} ({\bf x}) + (1-\pi) p^{{\rm n}} ({\bf x})$ in Eq. \eqref{eq:mar_train2},
the negative density can be represented as
\begin{align}
\label{eq:neg_den}
p^{{\rm n}} ({\bf x}) = \frac{1}{1-\pi} \left[ p({\bf x}) - \pi p^{{\rm p}} ({\bf x}) \right].
\end{align}
By substituting Eq. \eqref{eq:neg_den} into Eq. \eqref{eq:epauc}, we obtain
\begin{align}
\label{eq:epauc_pu}
{\rm pAUC} (\alpha, \beta) =& \frac{1}{(\beta-\alpha)(1-\pi)} \left[ \mathbb{E}_{{\bf x}^{{\rm p}} \sim p^{{\rm p}}} \mathbb{E}_{{\bf x} \sim p} \left[ I(s({\bf x}^{{\rm p}}) > s({\bf x})) I(s({\bf x} ) \in [t_{\beta}, t_{\alpha}]) \right] \right. \nonumber\\
& \left. - \pi \mathbb{E}_{{\bf x}^{{\rm p}} \sim p^{{\rm p}}} \mathbb{E}_{{\bar {\bf x}} ^{{\rm p}} \sim p^{{\rm p}}} \left[ I(s({\bf x}^{{\rm p}}) > s({\bar {\bf x}}^{{\rm p}})) I(s({\bar {\bf x}} ^{{\rm p}}) \in [t_{\beta}, t_{\alpha}]) \right] \right].
\end{align}
In Eq. \eqref{eq:epauc_pu}, each expectation depends on positive or marginal densities; they can be approximated with 
PU data $X^{{\rm p}} \cup X$. However, thresholds $t_{\alpha} = {\rm FPR}^{-1}(\alpha)$ and $t_{\beta} = {\rm FPR}^{-1}(\beta)$ still depend
on negative density $p^{{\rm n}} ({\bf x})$.
Therefore, we next represent thresholds $t_{\alpha} = {\rm FPR}^{-1}(\alpha)$ and $t_{\beta} = {\rm FPR}^{-1}(\beta)$ using only the positive and marginal densities.

From Eq. \eqref{eq:neg_den}, the FPR in Eq. \eqref{eq:fpr} can be also rewritten as
\begin{align}
\label{eq:fpr_pu}
& {\rm FPR} (t) = \mathbb{E} _{{\bf x} \sim p^{{\rm n}}} \left[ I(s({\bf x}) > t)  \right] = \frac{1}{1-\pi} \left[ \mathbb{E} _{{\bf x} \sim p} \left[ I(s({\bf x}) > t)  \right] - \pi \mathbb{E} _{{\bf x}^{{\rm p}} \sim p^{{\rm p}}} \left[ I(s({\bf x}^{{\rm p}}) > t)  \right] \right].
\end{align}
Since $t_{\alpha} = {\rm FPR}^{-1}(\alpha)$ and $t_{\beta} = {\rm FPR}^{-1}(\beta)$, 
Eq. \eqref{eq:fpr_pu} implies that these population thresholds are determined only by the positive and marginal densities.
Thus, Eqs. \eqref{eq:epauc_pu} and \eqref{eq:fpr_pu} show that the pAUC, including its thresholds, can be represented using only the positive and marginal densities.
From Eq. \eqref{eq:fpr_pu}, the empirical estimator of the FPR with threshold $t$ is represented as
\begin{align}
\label{eq:fpr_pu_emp}
& {\widehat {\rm FPR}} (t) = \frac{1}{1-\pi} \left[ \frac{1}{N} \sum_{n=1}^{N} I(s({\bf x}_n) > t) -\frac{\pi}{N^{{\rm p}}} \sum_{n=1}^{N^{{\rm p}}} I(s({\bf x}_n^{{\rm p}}) > t) \right].
\end{align}
Since $t_{\alpha} = {\rm FPR}^{-1}(\alpha) = {\rm inf} \{ t \in \mathbb{R} \ : \ {\rm FPR}(t) \le \alpha \}$, 
we want to estimate the lower threshold at which Eq. \eqref{eq:fpr_pu_emp} is no greater than $\alpha$.
Since the value of Eq. \eqref{eq:fpr_pu_emp} changes discretely when threshold $t$ crosses the score of training instance,
we can determine threshold ${\hat t}_{\alpha}$ by simply examining the scores of training PU data $X^{{\rm p}} \cup X$, as shown in Figure~\ref{fig:fpr}. Specifically, threshold ${\hat t}_{\alpha}$ can be determined as
\begin{align}
\label{eq:th_a_pu}
{\hat t}_{\alpha} = {\rm min} \{ s({\bf x}) \ : \  {\widehat {\rm FPR}} (s({\bf x})) \le \alpha, {\bf x} \in X^{{\rm p}} \cup X \}. 
\end{align}

Similarly, threshold ${\hat t}_{\beta}$ can be determined as
\begin{align}
\label{eq:th_b_pu}
{\hat t}_{\beta} = {\rm min} \{ s({\bf x}) \ : \  {\widehat {\rm FPR}} (s({\bf x})) \le \beta, {\bf x} \in X^{{\rm p}} \cup X \}.
\end{align}
Note that when the value of Eq. \eqref{eq:fpr_pu_emp} was less than zero or greater than one, we clipped the value to zero and one, respectively.

By substituting estimated thresholds ${\hat t}_{\alpha}$ and ${\hat t}_{\beta}$ into Eq. \eqref{eq:epauc_pu},
the empirical estimator of the pAUC can be represented as
\begin{align}
\label{eq:emp_pauc_pu}
{\widehat {\rm pAUC}} (\alpha, \beta) =& \frac{1}{(\beta-\alpha)(1-\pi)}\left[ \sum_{n=1}^{N^{{\rm p}}} \sum_{m=1}^{N} \left[ \frac{I(s({\bf x} _n^{{\rm p}}) > s({\bf x}_m)) I(s({\bf x}_m ) \in [{\hat t} _{\beta}, {\hat t} _{\alpha}])}{N N^{{\rm p}}} \right] \right. \nonumber\\
& \left. - \pi \sum_{n \ne m}^{N^{{\rm p}}, N^{{\rm p}}} \left[ \frac{I(s({\bf x}_n^{{\rm p}}) > s({\bf x}_m^{{\rm p}})) I(s({\bf x}_m ^{{\rm p}}) \in [{\hat t}_{\beta}, {\hat t}_{\alpha}])}{N^{{\rm p}} (N^{{\rm p}}-1)} \right] \right],
\end{align}
where we omitted the case of $n=m$ in the second term to avoid biases following the study
\citep{sakai2018semi}.
The finite-sample properties of this estimator are analyzed in Section~\ref{sec:theoretical_analysis}.

\begin{figure}[t]
  \centering
  \includegraphics[width=0.55\linewidth]{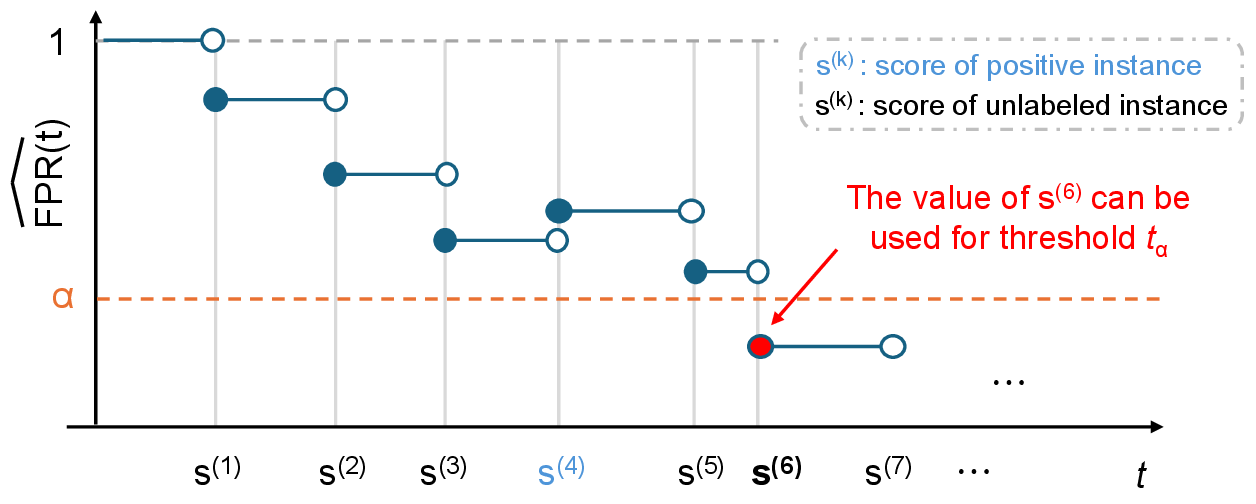}
  \caption{A visual explanation of determining threshold ${\hat t} _{\alpha}$: The vertical and horizontal axes represent ${\widehat {\rm FPR}} (t)$ and threshold $t$, respectively. $s^{(k)}$ denotes the $k$-th smallest score within the scores of all training data $\{ s({\bf x}) \} $, where ${\bf x} \in X^{{\rm p}} \cup X$. From Eq. \eqref{eq:fpr_pu_emp}, ${\widehat {\rm FPR}} (t) = 1$ for any $t$ smaller than $s^{(1)}$ and ${\widehat {\rm FPR}} (t) = 0$ for any $t$ greater than or equal to $s^{(N^{{\rm p}} + N)}$. ${\widehat {\rm FPR}} (t)$ is a discrete function where its value changes only when $t$ crosses one of the score $s^{(k)}$.
  When threshold $t$ exceeds the score of the unlabeled (positive) instance, ${\widehat {\rm FPR}} (t)$ decreases (increases).
  By checking $s^{(1)}, s^{(2)}, \cdots$ in ascending order and identifying the first (minimum) score at which ${\widehat {\rm FPR}} (t) \le \alpha$, we can take the score ($s^{(6)}$ in the figure) as the threshold ${\hat t} _{\alpha}$.}
  \label{fig:fpr}
\end{figure}

\subsection{pAUC Maximization with PU Data}
\label{subsec:pauc_maximization}

We explain the pAUC maximization method using Eq. \eqref{eq:emp_pauc_pu}.
Since the gradient of the indicator function $I(\cdot)$ is zero almost everywhere, 
the empirical pAUC in Eq. \eqref{eq:emp_pauc_pu} cannot be maximized by using gradient descent-based methods.
To make the empirical pAUC differentiable, following the previous studies
\citep{iwata2020semi,kumagaiauc,kumagaipositive},
we used the sigmoid function $\sigma(s({\bf x}) - s({\bf x}')) = 1/(1+{\rm exp}(- s({\bf x}) + s({\bf x}')))$ instead of 
$I(s({\bf x}) > s({\bf x}'))$ in Eq. \eqref{eq:emp_pauc_pu}: 
\begin{align}
\label{eq:emp_pauc_pu_dif}
{\widetilde {\rm pAUC}} (\alpha, \beta) =& \frac{1}{(\beta-\alpha)(1-\pi)}\left[ \sum_{n=1}^{N^{{\rm p}}} \sum_{m=1}^{N} \left[ \frac{\sigma(s({\bf x} _n^{{\rm p}}) - s({\bf x}_m)) I(s({\bf x}_m ) \in [{\hat t} _{\beta}, {\hat t} _{\alpha}])}{N N^{{\rm p}}} \right] \right. \nonumber\\
& \left. -\pi \sum_{n \ne m}^{N^{{\rm p}}, N^{{\rm p}}} \left[ \frac{\sigma (s({\bf x}_n^{{\rm p}}) - s({\bf x}_m^{{\rm p}})) I(s({\bf x}_m ^{{\rm p}}) \in [{\hat t}_{\beta}, {\hat t}_{\alpha}])}{N^{{\rm p}} (N^{{\rm p}}-1)} \right] \right].
\end{align}
Here, we do not replace the indicator function term for instance selection $I(s({\bf x} ) \in [{\hat t} _{\beta}, {\hat t} _{\alpha}])$ with the sigmoid function for simplicity and stability.
This treatment is standard in previous pAUC maximization studies
\citep{ueda2018partial,iwata2020semi,yao2022large}.
Thus, gradients are only propagated through the sigmoid function terms during training.
Note that we can use any surrogate functions such as logistic and ramp functions instead of the sigmoid function
\citep{sakai2018semi}.
Maximizing ${\widetilde {\rm pAUC}} (\alpha, \beta)$ is equivalent to minimizing $1 - {\widetilde {\rm pAUC}} (\alpha, \beta)$.
Therefore,
our loss function for learning $s$ is given as
\begin{align}
\label{eq:abs_loss}
{\mathcal L}(s; \alpha, \beta) = 1 - {\widetilde {\rm pAUC}} (\alpha, \beta).
\end{align}

Algorithm \ref{arg} shows the training procedure of the proposed method with stochastic gradient descent methods.
We first sample PU data from $X^{{\rm p}} \cup X $ (Lines 2--3). 
Then, we calculate thresholds ${\hat t}_{\alpha}$ and ${\hat t}_{\beta}$ by using Eqs. \eqref{eq:th_a_pu} and \eqref{eq:th_b_pu}, respectively (Line 4).
By using the estimated thresholds, 
we calculate the loss in Eq.~\eqref{eq:abs_loss} for learning score function $s$ (Line 5).
Here, estimating the thresholds and evaluating the loss on the same mini-batch is natural, 
because the thresholds themselves are part of the empirical pAUC objective.
Lastly, we update parameters of score function $s$ by using the gradient of the loss (Line 6).

\begin{algorithm}[t]
\caption{Training procedure of the proposed method}
\label{arg}
\begin{algorithmic}[1]
\REQUIRE PU data $X^{{\rm p}} \cup X$, positive class-prior $\pi$, FPR range values $\alpha$ and $\beta$, positive mini-batch size $P$, and unlabeled mini-batch size $U$
\ENSURE Model parameters of score function $s$
\REPEAT
\STATE{Sample unlabeled data with size $U$ from $X$}
\STATE{Sample positive data with size $P$ from $X^{{\rm p}}$}
\STATE{Calculate thresholds ${\hat t}_{\alpha}$ and ${\hat t}_{\beta}$ in Eqs. \eqref{eq:th_a_pu} and \eqref{eq:th_b_pu} with the sampled data}
\STATE{Calculate the loss in Eq. \eqref{eq:abs_loss} on the sampled data}
\STATE{Update parameters of $s$ with the gradient of the loss}
\UNTIL{End condition is satisfied;}
\end{algorithmic}
\end{algorithm}

\section{Experiments}
\label{sec:exp}

\subsection{Data}
\label{subsec:data}

We used eight real-world datasets in the main paper: 
MNIST
\citep{lecun1998gradient}, 
Fashion MNIST (FMNIST)
\citep{xiao2017fashion}, 
SVHN
\citep{netzer2011reading}, 
CIFAR10
\citep{krizhevsky2009learning},
SUSY
\citep{baldi2014searching},
CODRNA
\citep{uzilov2006detection},
IJCNN
\citep{prokhorov2001ijcnn},
and EPSILON
\citep{tsoumakas2008effective}.
These datasets have been commonly used in PU learning studies
\citep{kumagaiauc,kumagaipositive,xie2024weakly,kiryo2017positive,jiang2023positive,sakai2019covariate,hammoudeh2020learning}. 
In Section \ref{sec:medical_advertising},
we used Mammography
\citep{vanschoren2014openml}
and Online-shoppers
\citep{sakar2019real},
which are medical- and advertising-related datasets.

MNIST comprises handwritten images of 10 digit classes, and FMNIST comprises images belonging to 10 fashion categories.
Each image in MNIST and FMNIST is represented as a $28 \times 28$ grayscale image.
SVHN consists of images of 10 printed digits cropped from photographs of house number plates, while CIFAR-10 consists of images of 10 animal and vehicle categories.
Each image in SVHN and CIFAR-10 is represented as a $32 \times 32$ RGB image.
SUSY, CODRNA, IJCNN, and EPSILON are binary tabular datasets taken from the site of LIBSVM
\citep{chang2011libsvm}.
The numbers of features of SUSY, CODRNA, IJCNN, and EPSILON are 18, 8, 22, and 2,000, respectively.
For image datasets, we created binary classification tasks from each dataset following the data creation procedure of previous studies
\citep{kumagaiauc,kumagaipositive,xie2024weakly}.
Specifically, for MNIST and SVHN, we used odd digits as positive and even digits as negative.
For FMNIST, 
we used non-upper garment classes (1, 5, 7, 8, and 9) as positive and the others as negative.
For CIFAR10, we used the vehicle classes (0, 1, 8, and 9) as positive and the animal classes as negative.

For training in each dataset, we used 5,000 unlabeled data and changed the numbers of positive data within $\{ 30, 50, 100\}$.
Additionally, we used 50 positive and 1,000 unlabeled data for validation.
For unlabeled data in training and validation, we changed class-prior $\pi$ within $\{ 0.1, 0.2 \}$.
We used 2,500 positive and 2,500 negative data as test data for evaluation.
Training, validation, and test data did not overlap.
We conducted 10 different experiments for each positive class-prior and labeled positive data size changing the random seeds and evaluated mean test pAUC.

\subsection{Comparison Methods}
\label{subsec:comp}

We compared the proposed method ({Ours) with five methods:
non-negative PU learning method (NNPU)
\citep{kiryo2017positive},
AUC maximization method with PU data (PUAUC)
\citep{xie2018semi,xie2024weakly},
pAUC maximization method (PAUC)
\citep{ueda2018partial},
pAUC maximization method with PU data based on the heuristic scoring assumption (PUpAUCh)
\citep{iwata2020semi},
and 
two-step pAUC maximization method with PU data (2step).
All methods including the proposed method used neural networks for modeling classifiers (score functions).

NNPU is a widely used PU learning method for balanced binary classification, which trains the classifier by minimizing the non-negative classification risk rewritten with PU data.
PUAUC trains the classifier by maximizing the AUC rewritten with PU data.
PAUC maximizes the pAUC by using PU data regarding all unlabeled data as negative.
PUpAUCh maximizes the pAUC from PU data by heuristically treating all unlabeled instances within a specified rank range based on their current scores as negative during training.
2step first trains a classifier by using PUAUC, and then assigns pseudo PN labels to the unlabeled data by using the trained classifier.
Specifically, it selects the top $\pi N$ unlabeled instances according to the classifier scores as pseudo-positive, and assigns the remaining instances as pseudo-negative.
Then, it applied PAUC for the positive and pseudo PN data to train the classifier.
The proposed method, NNPU, PUpAUCh, and 2step used class-prior value $\pi$ for training.
The details of settings such as network architectures are described in section \ref{subsec:settings}.

\begin{table*}[t]
\caption{Test average pAUCs and their standard deviations with $\alpha=0$, $\beta=0.1$ over different labeled positive data sizes $N^{\rm p} = 30$, $50$, and $100$.
Values in bold are not statistically different at the $5\%$ level from the best performing method in each row according to a paired t-test.
`\# Best' row represents the number of results of each method that are the best or comparable to it.}
\label{table:all_results_1}
\centering
\centering
\scalebox{0.78}{
\begin{tabular}{l r r r r r r r}
\hline
Data & $\pi$ & Ours & NNPU & PUAUC & PAUC & PUpAUCh & 2step \\
\hline
MNIST   & 0.1 & \textbf{0.683 $\pm$ 0.074} & 0.522 $\pm$ 0.131  & 0.627 $\pm$ 0.056  & 0.597 $\pm$ 0.070  & 0.675 $\pm$ 0.073  & 0.633 $\pm$ 0.058  \\
        & 0.2 & \textbf{0.633 $\pm$ 0.071} & 0.577 $\pm$ 0.068 & 0.592 $\pm$ 0.054  & 0.473 $\pm$ 0.058 & 0.624 $\pm$ 0.064 & 0.592 $\pm$ 0.054  \\
FMNIST  & 0.1 & \textbf{0.924 $\pm$ 0.025} & 0.756 $\pm$ 0.022 & 0.895 $\pm$ 0.037 & 0.824 $\pm$ 0.064 & \textbf{0.925 $\pm$ 0.028} & 0.907 $\pm$ 0.037 \\
        & 0.2 & \textbf{0.914 $\pm$ 0.028} & 0.770 $\pm$ 0.035 & 0.867 $\pm$ 0.047 & 0.649 $\pm$ 0.119 & \textbf{0.912 $\pm$ 0.036} & 0.873 $\pm$ 0.047 \\
SVHN    & 0.1 & \textbf{0.247 $\pm$ 0.081} & 0.055 $\pm$ 0.004 & 0.208 $\pm$ 0.070 & 0.201 $\pm$ 0.053 & 0.233 $\pm$ 0.078 & 0.202 $\pm$ 0.075 \\
        & 0.2 & \textbf{0.191 $\pm$ 0.073} & 0.054 $\pm$ 0.004 & 0.171 $\pm$ 0.053 & 0.161 $\pm$ 0.032 & 0.152 $\pm$ 0.065 &  0.157 $\pm$ 0.053 \\
CIFAR10 & 0.1 & \textbf{0.478 $\pm$ 0.078} & 0.050 $\pm$ 0.039 & 0.459 $\pm$ 0.078 & 0.362 $\pm$ 0.075 & 0.466 $\pm$ 0.076 & \textbf{0.469 $\pm$ 0.079} \\
        & 0.2 & \textbf{0.433 $\pm$ 0.075} & 0.070 $\pm$ 0.069 & \textbf{0.420 $\pm$ 0.078} & 0.256 $\pm$ 0.043 & \textbf{0.426 $\pm$ 0.075} & \textbf{0.423 $\pm$ 0.079} \\
SUSY    & 0.1 & \textbf{0.454 $\pm$ 0.034} & 0.357 $\pm$ 0.078 & 0.414 $\pm$ 0.051 & 0.366 $\pm$ 0.055 & 0.437 $\pm$ 0.032 & 0.415 $\pm$ 0.052 \\
        & 0.2 & \textbf{0.424 $\pm$ 0.045} & 0.370 $\pm$ 0.058 & 0.389 $\pm$ 0.054 & 0.245 $\pm$ 0.052 & 0.389 $\pm$ 0.047 & 0.385 $\pm$ 0.054 \\
CODRNA  & 0.1 & 0.741 $\pm$ 0.047 & 0.317 $\pm$ 0.266 & 0.749 $\pm$ 0.057 & 0.630 $\pm$ 0.071 & 0.744 $\pm$ 0.050 & \textbf{0.754 $\pm$ 0.054} \\        
        & 0.2 & \textbf{0.716 $\pm$ 0.056} & 0.691 $\pm$ 0.064 & 0.719 $\pm$ 0.062 & 0.495 $\pm$ 0.085 & \textbf{0.720 $\pm$ 0.060} & \textbf{0.726 $\pm$ 0.060} \\
IJCNN   & 0.1 & \textbf{0.620 $\pm$ 0.094} & 0.264 $\pm$ 0.118 & 0.563 $\pm$ 0.087 & 0.525 $\pm$ 0.071 & 0.600 $\pm$ 0.100 & 0.572 $\pm$ 0.090 \\
        & 0.2 & \textbf{0.565 $\pm$ 0.096} & 0.449 $\pm$ 0.088 & 0.525 $\pm$ 0.083 & 0.393 $\pm$ 0.062 & 0.440 $\pm$ 0.194 & 0.524 $\pm$ 0.085 \\
EPSILON   & 0.1 & \textbf{0.193 $\pm$ 0.041} & 0.078 $\pm$ 0.006  & \textbf{0.191 $\pm$ 0.043} & 0.175 $\pm$ 0.034 & \textbf{0.193 $\pm$ 0.044} & \textbf{0.190 $\pm$ 0.046} \\
        & 0.2 & \textbf{0.163 $\pm$ 0.045} & 0.081 $\pm$ 0.010 & \textbf{0.170 $\pm$ 0.039} & 0.139 $\pm$ 0.026 & \textbf{0.162 $\pm$ 0.039} & \textbf{0.172 $\pm$ 0.043} \\
\hline
\# Best   & & 15 & 0 & 3 & 0 & 6 & 6 \\
\hline
\end{tabular}
}
\end{table*}

\subsection{Results}
\label{subsec:results}

Tables \ref{table:all_results_1} and \ref{table:all_results_2} show the average test pAUCs and their standard deviations for two settings $\alpha=0$, $\beta=0.1$, and $\alpha=0$, $\beta=0.2$, respectively.
We used pAUC with $\alpha=0$ since it is often used in previous studies and real-world applications
\citep{narasimhan2013structural,iwata2020semi,zhu2022auc}.
The results with $\alpha=0.1$ and $\beta=0.2$ are also reported in Section \ref{apend:adres}.
The proposed method performed the best or comparably to it in almost all cases (31 out of 32).
NNPU did not work well since it is not designed for both class-imbalanced data and pAUC maximization.
PUAUC outperformed NNPU since maximizing the AUC is appropriate for class-imbalanced data.
However, it was often outperformed by the proposed method by a large margin, which indicates the importance of explicitly maximizing the pAUC.
PAUC did not perform well either even though it aims to maximize the pAUC.
This is because it merely treats all unlabeled data as negative, which leads to very biased results.
PUpAUCh outperformed these methods (NNPU, PUAUC, and PAUC) since it is explicitly designed for maximizing the pAUC with PU data. However, 
it is often outperformed by the proposed method.
Section \ref{apen:iwata_assumption} shows that its heuristic assumption is frequently violated during training.
Although 2step performed comparably to PUpAUCh, its performance was limited by the label noise inherent in the pseudo PN data.
Overall, these results show the effectiveness of our method for maximizing the pAUC from PU data.

\begin{table*}[t!]
\caption{Test average pAUCs and their standard deviations with $\alpha=0$, $\beta=0.2$ over different labeled positive data sizes $N^{\rm p} = 30$, $50$, and $100$.
Values in bold are not statistically different at the $5\%$ level from the best performing method in each row according to a paired t-test.
`\# Best' row represents the number of results of each method that are the best or comparable to it.}
\label{table:all_results_2}
\centering
\scalebox{0.78}{
\begin{tabular}{l r r r r r r r}
\hline
Data & $\pi$ & Ours & NNPU & PUAUC & PAUC & PUpAUCh & 2step \\
\hline
MNIST   & 0.1 & \textbf{0.755 $\pm$ 0.056} & 0.610 $\pm$ 0.127  & 0.725 $\pm$ 0.048  & 0.736 $\pm$ 0.055  & 0.749 $\pm$ 0.055  & 0.730 $\pm$ 0.050  \\
        & 0.2 & \textbf{0.723 $\pm$ 0.065} & 0.677 $\pm$ 0.062 & 0.696 $\pm$ 0.049  & 0.652 $\pm$ 0.062 & \textbf{0.717 $\pm$ 0.056} & 0.697 $\pm$ 0.049  \\
FMNIST  & 0.1 & \textbf{0.943 $\pm$ 0.022} & 0.775 $\pm$ 0.016 & 0.925 $\pm$ 0.028 & 0.926 $\pm$ 0.032 & \textbf{0.940 $\pm$ 0.023} & 0.934 $\pm$ 0.027 \\
        & 0.2 & \textbf{0.934 $\pm$ 0.023} & 0.789 $\pm$ 0.033 & 0.904 $\pm$ 0.034 & 0.853 $\pm$ 0.059 & \textbf{0.931 $\pm$ 0.026} & 0.911 $\pm$ 0.033 \\
SVHN    & 0.1 & \textbf{0.342 $\pm$ 0.092} & 0.105 $\pm$ 0.005 & 0.305 $\pm$ 0.081 & \textbf{0.334 $\pm$ 0.086} & 0.329 $\pm$ 0.089 & 0.313 $\pm$ 0.084 \\
        & 0.2 & \textbf{0.289 $\pm$ 0.090} & 0.104 $\pm$ 0.005 & 0.262 $\pm$ 0.063 & 0.264 $\pm$ 0.052 & 0.221 $\pm$ 0.067 & 0.259 $\pm$ 0.064 \\
CIFAR10 & 0.1 & \textbf{0.610 $\pm$ 0.074} & 0.089 $\pm$ 0.058 & 0.595 $\pm$ 0.072 & 0.569 $\pm$ 0.070 & \textbf{0.600 $\pm$ 0.062} & \textbf{0.608 $\pm$ 0.070} \\
        & 0.2 & \textbf{0.578 $\pm$ 0.075} & 0.116 $\pm$ 0.094 & 0.559 $\pm$ 0.078 & 0.459 $\pm$ 0.070 & \textbf{0.569 $\pm$ 0.066} & \textbf{0.564 $\pm$ 0.077} \\
SUSY    & 0.1 & \textbf{0.553 $\pm$ 0.032} & 0.457 $\pm$ 0.075 & 0.522 $\pm$ 0.041 & 0.528 $\pm$ 0.046 & 0.540 $\pm$ 0.028 & 0.523 $\pm$ 0.042 \\
        & 0.2 & \textbf{0.528 $\pm$ 0.044} & 0.474 $\pm$ 0.051 & 0.499 $\pm$ 0.047 & 0.434 $\pm$ 0.056 & 0.508 $\pm$ 0.039 & 0.500 $\pm$ 0.045 \\
CODRNA  & 0.1 & \textbf{0.853 $\pm$ 0.034} & 0.386 $\pm$ 0.292 & 0.850 $\pm$ 0.041 & 0.830 $\pm$ 0.047 & \textbf{0.855 $\pm$ 0.035} & \textbf{0.856 $\pm$ 0.038} \\
        & 0.2 & \textbf{0.838 $\pm$ 0.039} & 0.791 $\pm$ 0.059 & 0.827 $\pm$ 0.048 & 0.705 $\pm$ 0.072 & \textbf{0.839 $\pm$ 0.039} & \textbf{0.833 $\pm$ 0.047} \\
IJCNN   & 0.1 & \textbf{0.711 $\pm$ 0.079} & 0.367 $\pm$ 0.111 & 0.670 $\pm$ 0.084 & 0.687 $\pm$ 0.073 & 0.687 $\pm$ 0.101 & 0.683 $\pm$ 0.084 \\
        & 0.2 & \textbf{0.659 $\pm$ 0.102} & 0.543 $\pm$ 0.092 & 0.637 $\pm$ 0.087 & 0.573 $\pm$ 0.068 & 0.599 $\pm$ 0.132 & 0.641 $\pm$ 0.089 \\
EPSILON   & 0.1 & \textbf{0.305 $\pm$ 0.058} & 0.128 $\pm$ 0.007 & \textbf{0.298 $\pm$ 0.055} & 0.290 $\pm$ 0.043 & \textbf{0.304 $\pm$ 0.053} & \textbf{0.299 $\pm$ 0.060}  \\
        & 0.2 & \textbf{0.269 $\pm$ 0.046} & 0.132 $\pm$ 0.012 & \textbf{0.260 $\pm$ 0.050} & 0.248 $\pm$ 0.037 & \textbf{0.261 $\pm$ 0.053} & \textbf{0.270 $\pm$ 0.055} \\
\hline
\# Best   & & 16 & 0 & 2 & 1 & 9 & 6 \\
\hline
\end{tabular}
}
\end{table*}

\begin{figure*}[t]
    \centering
    \begin{minipage}{0.24\linewidth}
        \centering
        \includegraphics[width=3.3cm]{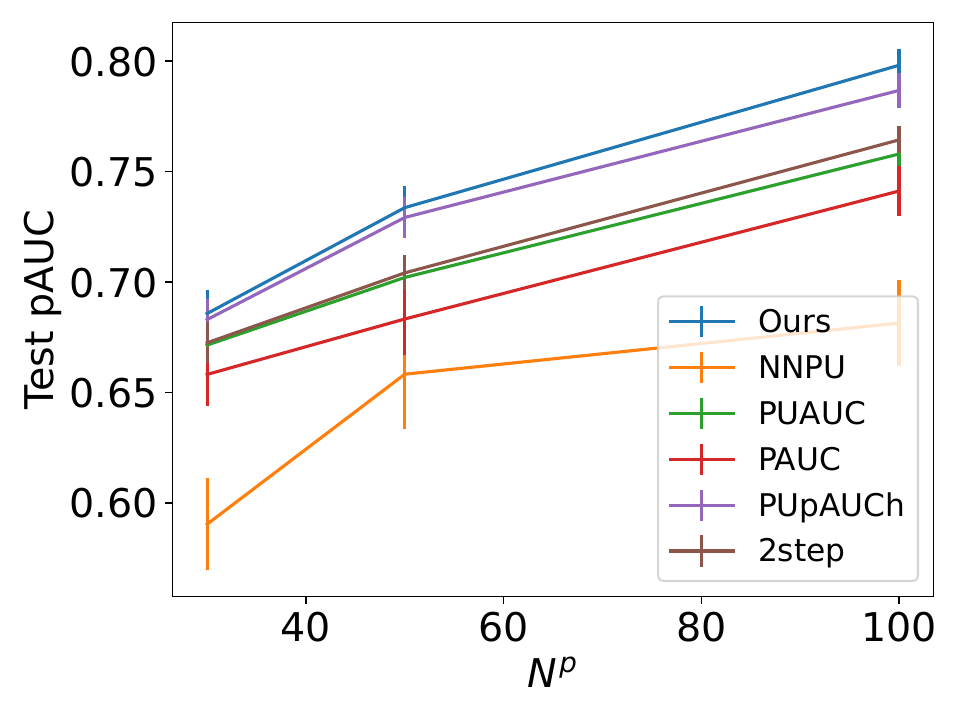}
        \subcaption{MNIST}
    \end{minipage}
    \hfill
    \begin{minipage}{0.24\linewidth}
        \centering
        \includegraphics[width=3.3cm]{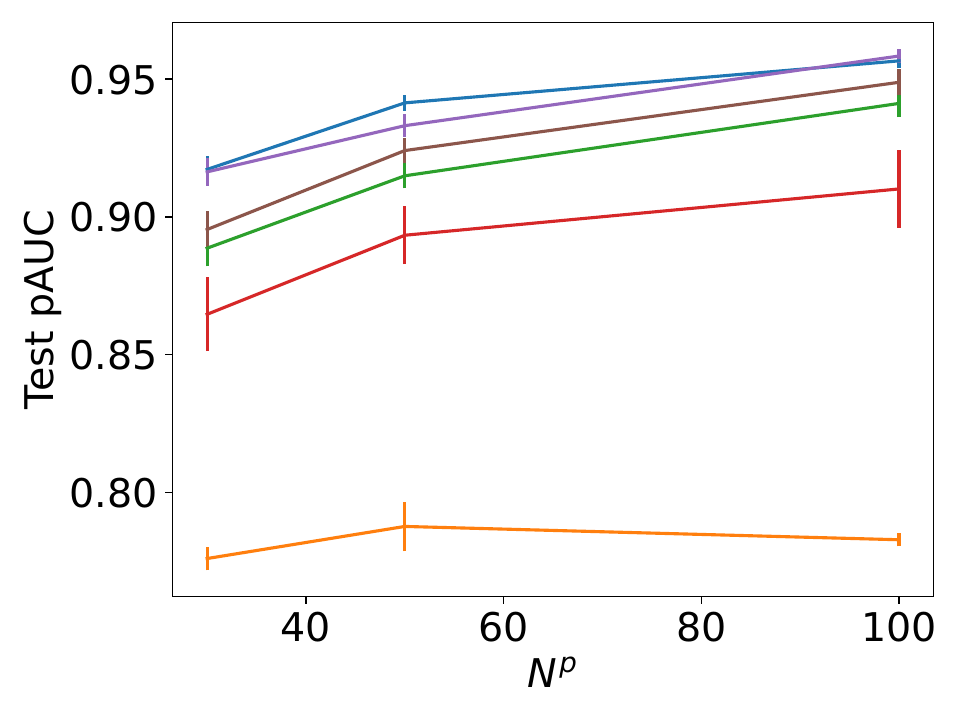}
        \subcaption{FMNIST}
    \end{minipage}
    \hfill
    \begin{minipage}{0.24\linewidth}
        \centering
        \includegraphics[width=3.3cm]{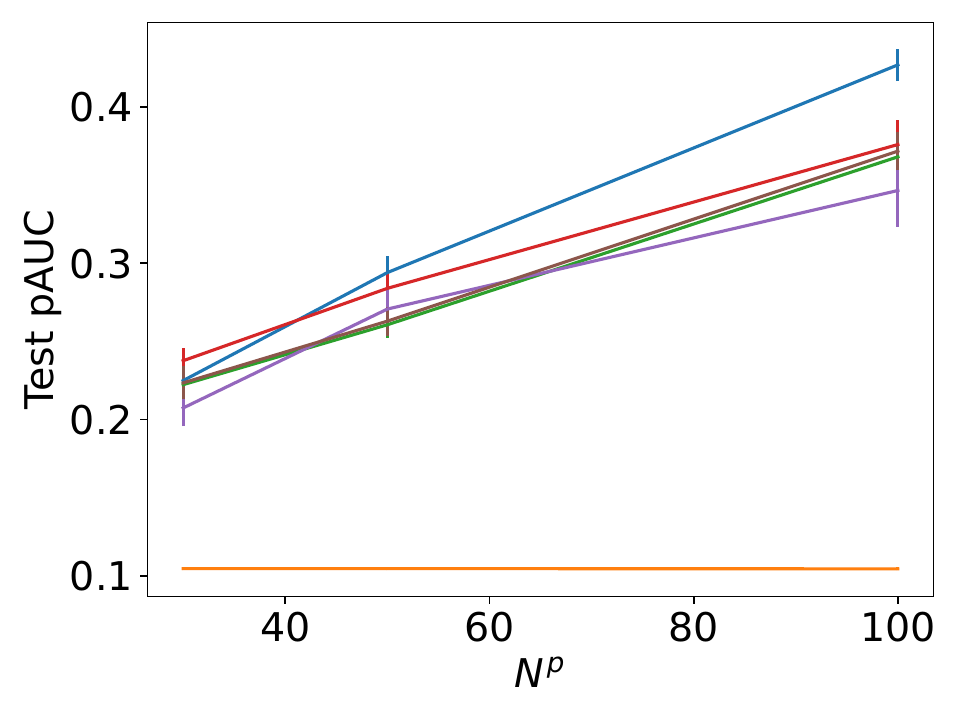}
        \subcaption{SVHN}
    \end{minipage}
    \hfill
    \begin{minipage}{0.24\linewidth}
        \centering
        \includegraphics[width=3.3cm]{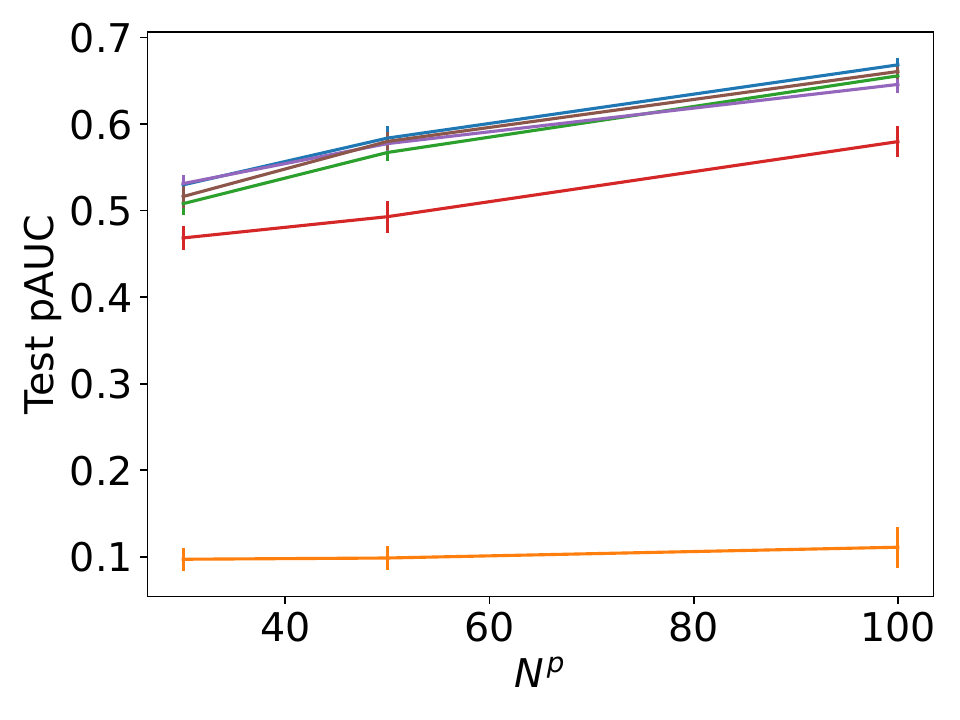}
        \subcaption{CIFAR10}
    \end{minipage}
    \hfill
    \begin{minipage}{0.24\linewidth}
        \centering
        \includegraphics[width=3.3cm]{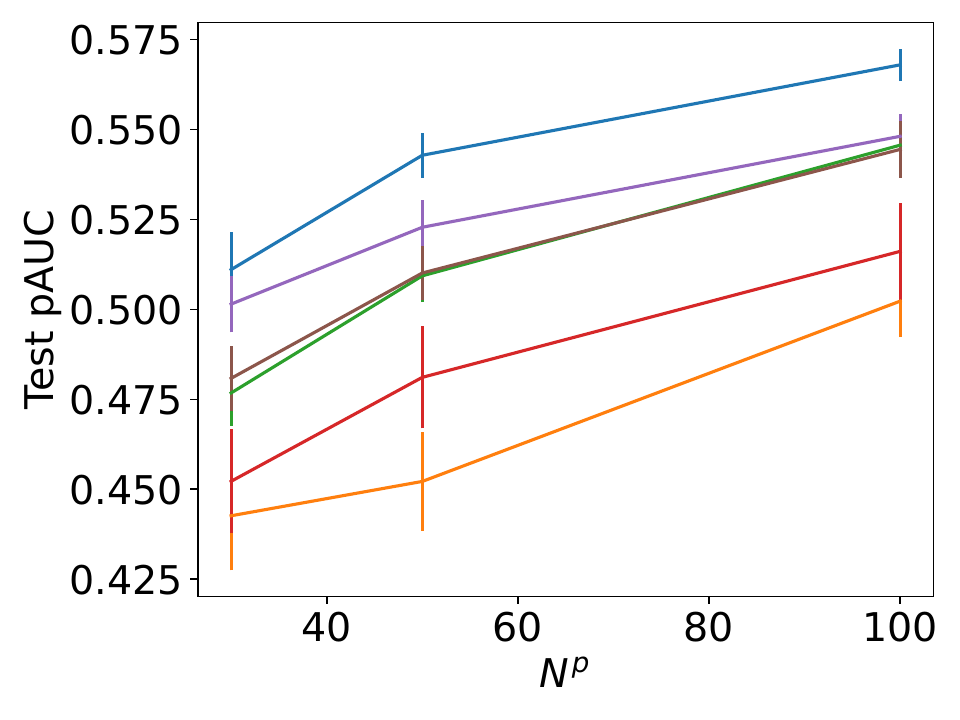}
        \subcaption{SUSY}
    \end{minipage}
    \hfill
    \begin{minipage}{0.24\linewidth}
        \centering
        \includegraphics[width=3.3cm]{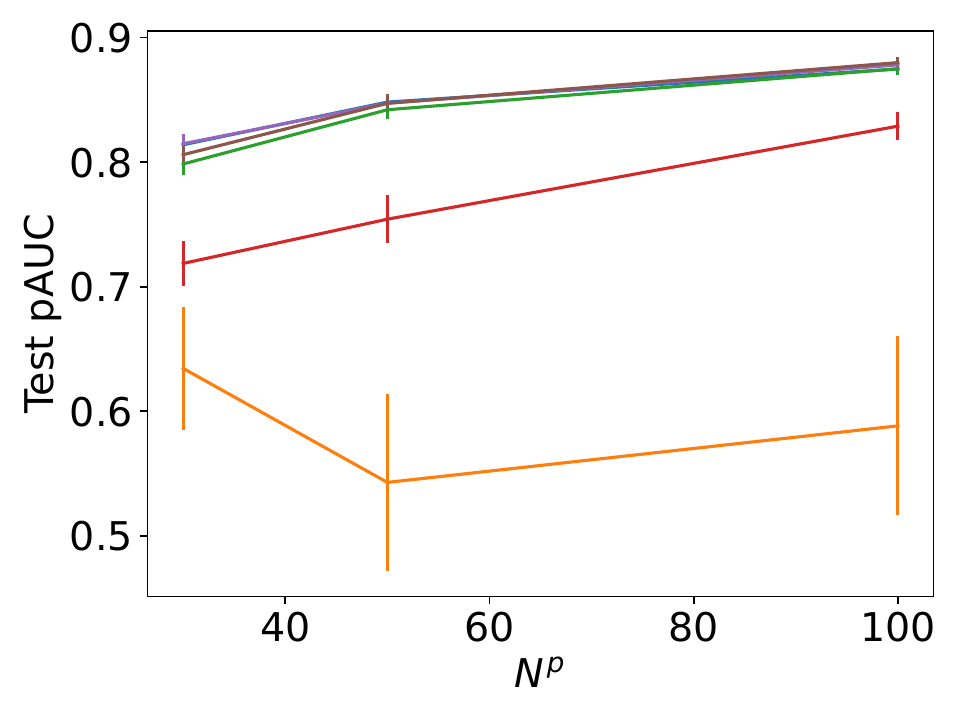}
        \subcaption{CODRNA}
    \end{minipage}
    \hfill
    \begin{minipage}{0.24\linewidth}
        \centering
        \includegraphics[width=3.3cm]{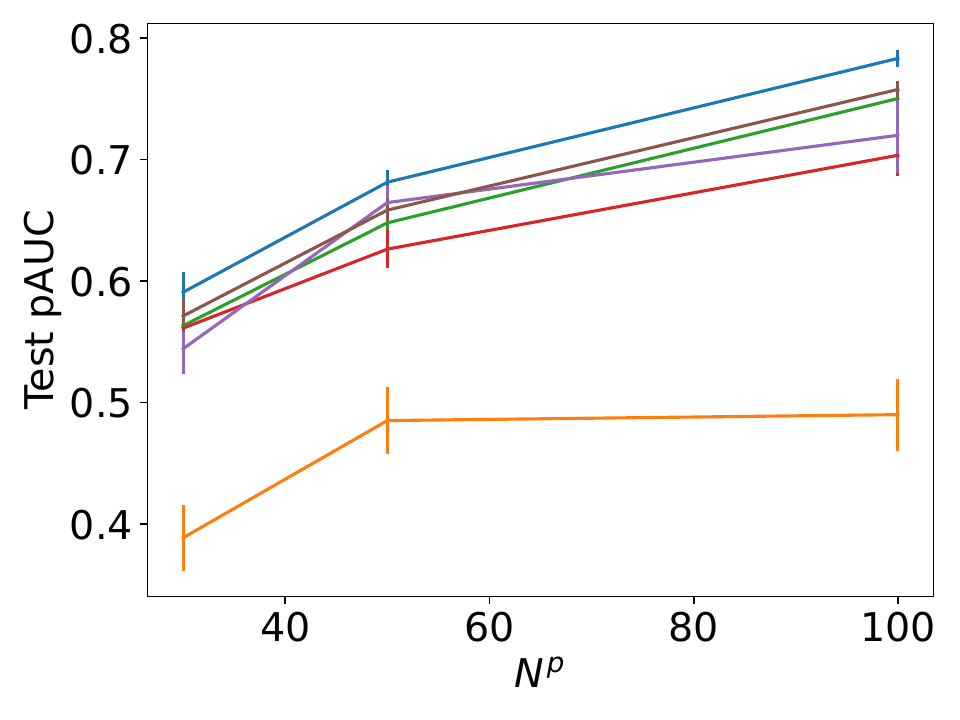}
        \subcaption{IJCNN}
    \end{minipage}
    \hfill
    \begin{minipage}{0.24\linewidth}
        \centering
        \includegraphics[width=3.3cm]{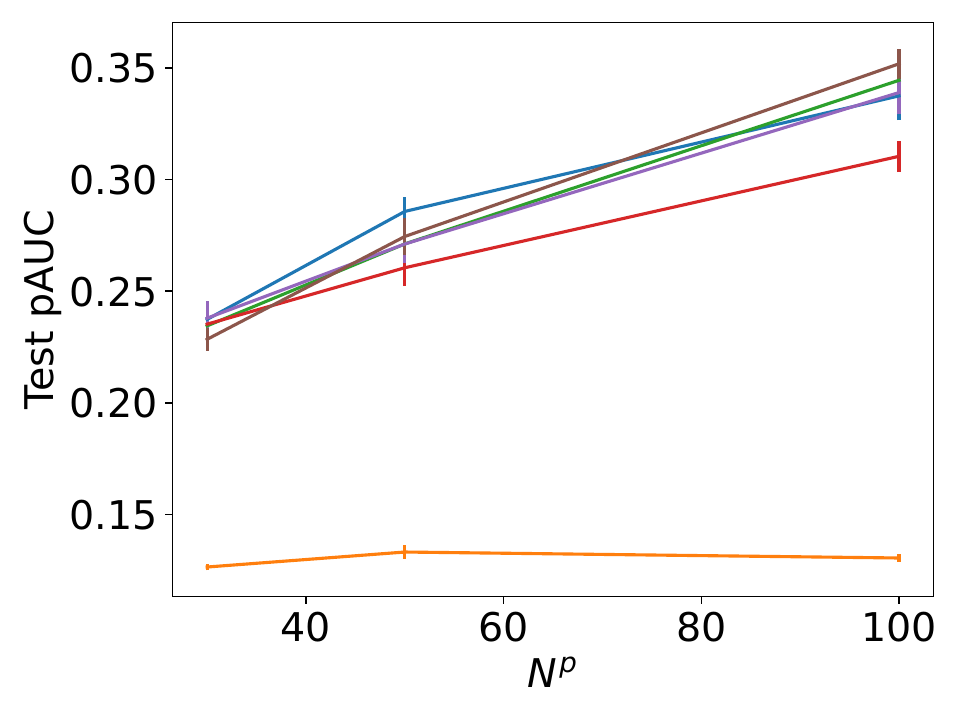}
        \subcaption{EPSILON}
    \end{minipage}
    \caption{The average test pAUCs and their standard errors with $\alpha=0$, $\beta=0.2$ over different class-priors $\pi \in \{0.1, 0.2\}$ when changing the number of labeled positive data $N^{{\rm p}}$.}
    \label{fig:result_pndata}
\end{figure*}

As for the results of each class-prior $\pi$ in Tables \ref{table:all_results_1} and \ref{table:all_results_2}, 
all methods except NNPU tended to perform worse as $\pi$ increased.
This is because these (p)AUC maximization-based methods include a term in their objective functions that encourages the scores of positive data to be higher than those of unlabeled data (e.g., the first term in Eq. \eqref{eq:emp_pauc_pu_dif} for the proposed method).
When $\pi$ is small, the unlabeled data can be regarded as mostly negative and thus the corresponding objective function term becomes more suitable for learning accurate score functions that can rank positive data higher than negative ones.
As a result, the performance tends to improve (degrade) as $\pi$ is decreased (increased).
On the other hand, since increasing $\pi$ reduces the severity of class-imbalance, NNPU tended to perform better.
Nevertheless, since our focus is on the class‐imbalanced problem, these results support the effectiveness of the proposed method. In Section \ref{apend:smaller_larger_priors}, we further confirmed the effectiveness of the proposed method under smaller and larger class-prior values.

Figure \ref{fig:result_pndata} shows the average test pAUCs with their standard errors when changing the number of labeled positive data $N^{{\rm p}}$.
The proposed method consistently performed well across different numbers of labeled positive data.
Since labeled positive data are informative for training, the proposed method tended to perform better as $N^{{\rm p}}$ increased.
In Section \ref{apend:tru}, 
we also evaluated the proposed method with different unlabeled training data sizes $N$ and found that it performed well across different $N$. 

pAUC maximization methods with PU data (the proposed method, PUpAUCh, and 2step) need to know true class-prior $\pi$ for training.
Although we evaluated these methods with true class-priors in our experiments following the previous PU learning studies
\citep{iwata2020semi,kiryo2017positive,du2015convex}, 
there might be some noise in practice.
Therefore, we investigated how the performance of the proposed method changes when using noisy class-priors for training.
Figure \ref{fig:result_noisy} shows the results.
Here, PUAUC is a strong baseline that does not depend on $\pi$ for training.
The proposed method performed well in many cases, even when the class-prior used for training (Input $\pi$) differed from true class-prior $\pi$. These results show that the proposed method robustly works well as long as noise on the class-prior is not so large.

Furthermore, we examined the performance of the proposed method when using the class-priors estimated from PU data.
For this, we used the kernel embedding-based class-prior estimation method~\citep{ramaswamy2016mixture}.
The estimated class-priors of MNIST, FMNIST, SVHN, CIFAR10, SUSY, CODRNA, IJCNN, and EPSILON were $0.214$, $0.208$, $0.096$, $0.382$, $0.226$, $0.159$, $0.113$, and $0.159$, respectively.
Table~\ref{table:all_results_estclass} shows the average test pAUCs and their standard deviations when using the estimated class-priors.
The proposed method still performed well.
These results suggest that the proposed method remains effective and practically applicable even when the class-priors are estimated from data.

\begin{figure*}[t]
    \centering
    \begin{minipage}{0.24\linewidth}
        \centering
        \includegraphics[width=3.3cm]{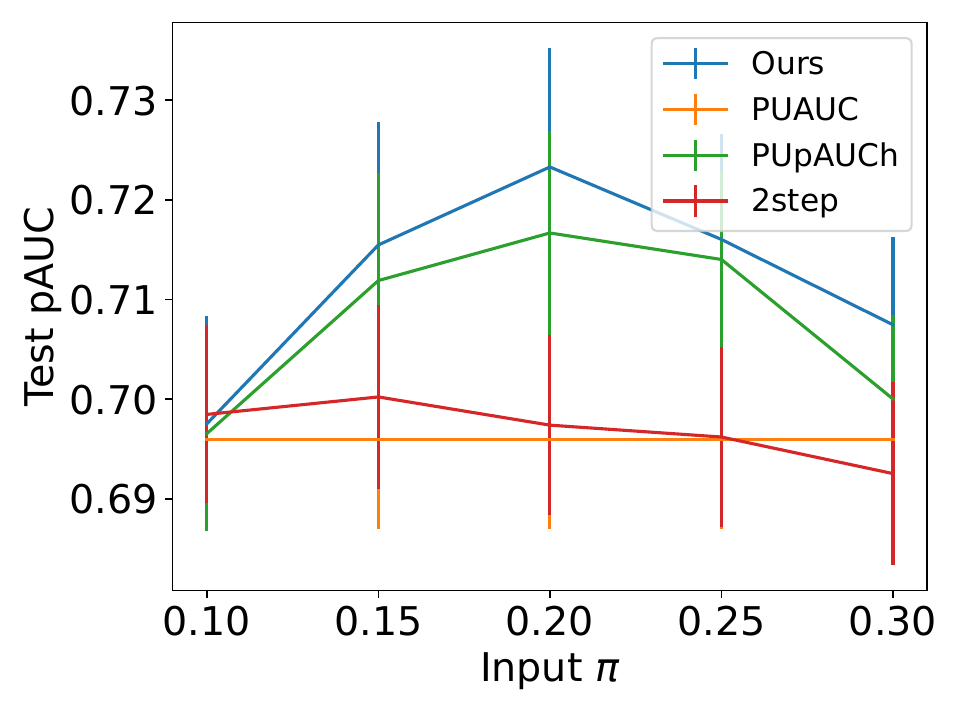}
        \subcaption{MNIST}
    \end{minipage}
    \hfill
    \begin{minipage}{0.24\linewidth}
        \centering
        \includegraphics[width=3.3cm]{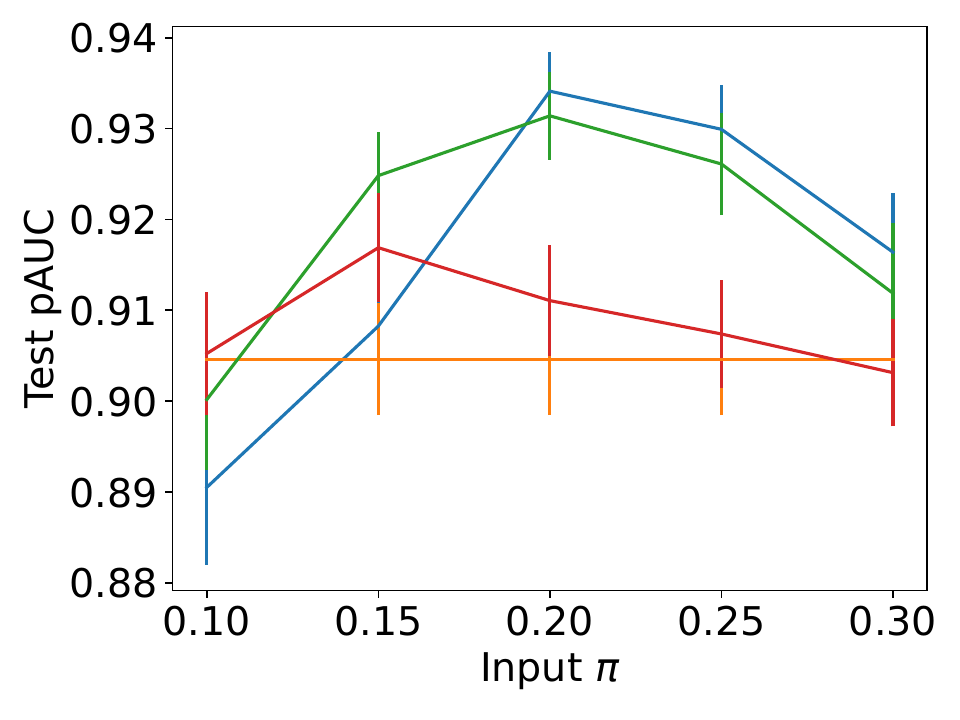}
        \subcaption{FMNIST}
    \end{minipage}
    \hfill
    \begin{minipage}{0.24\linewidth}
        \centering
        \includegraphics[width=3.3cm]{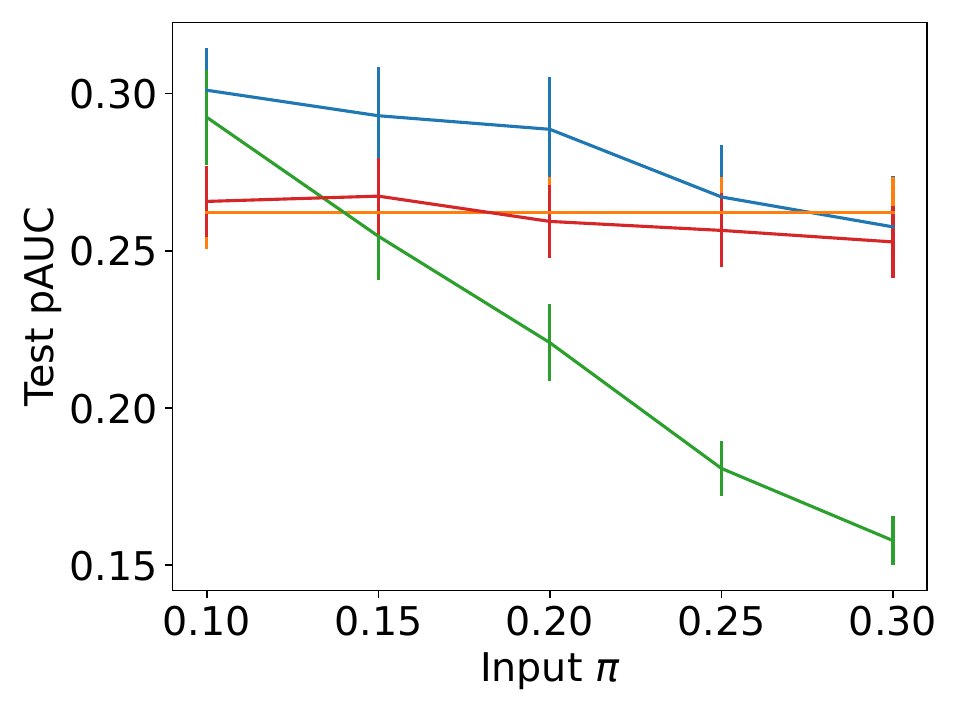}
        \subcaption{SVHN}
    \end{minipage}
    \hfill
    \begin{minipage}{0.24\linewidth}
        \centering
        \includegraphics[width=3.3cm]{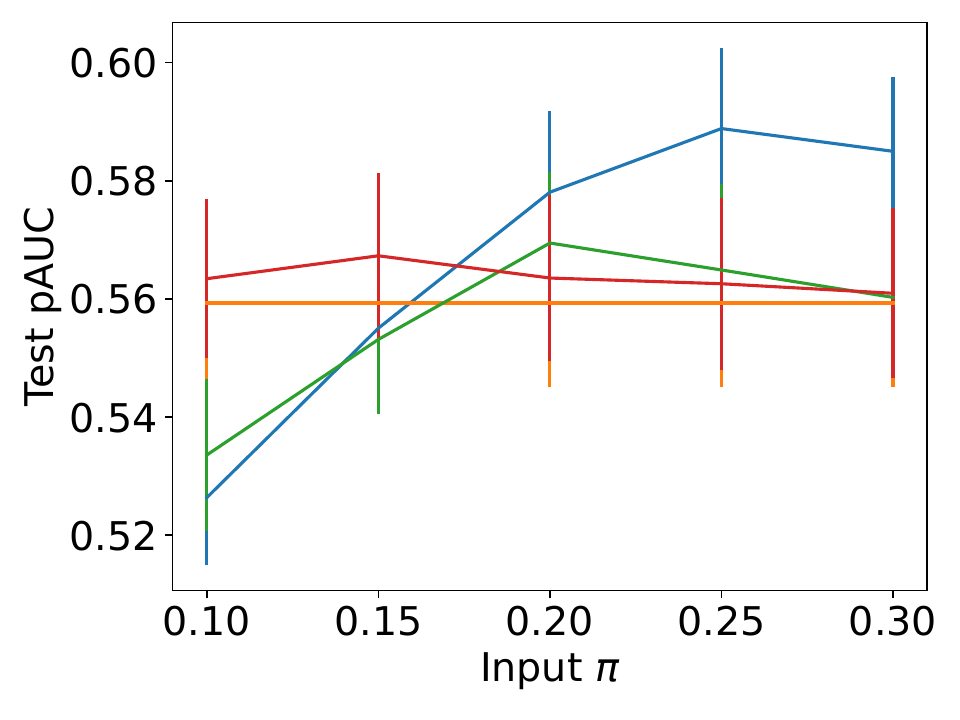}
        \subcaption{CIFAR10}
    \end{minipage}
    \hfill
    \begin{minipage}{0.24\linewidth}
        \centering
        \includegraphics[width=3.3cm]{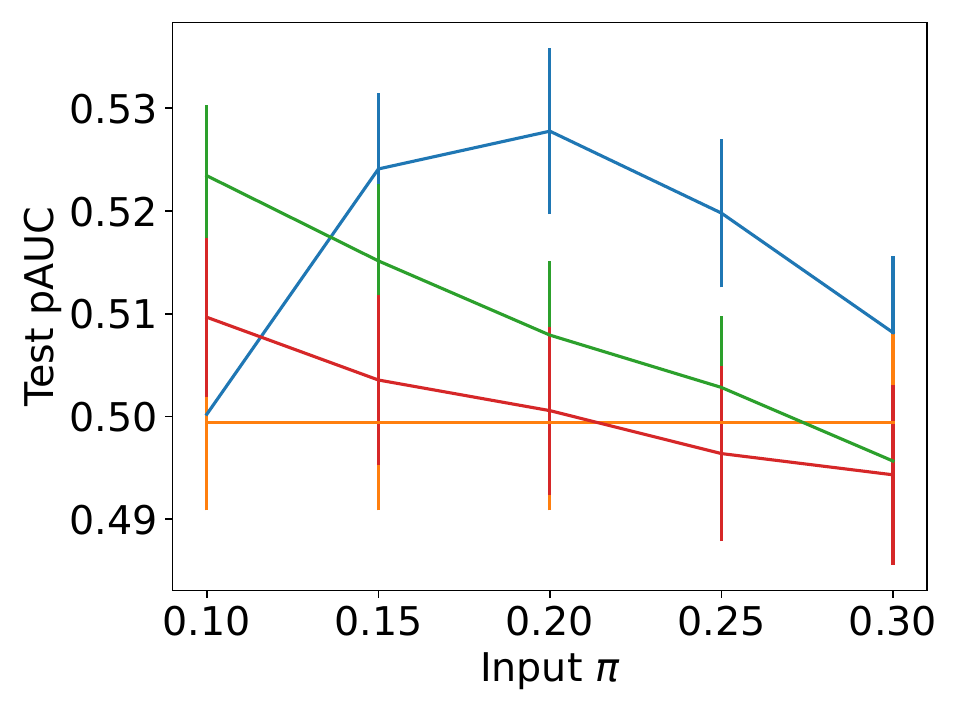}
        \subcaption{SUSY}
    \end{minipage}
    \hfill
    \begin{minipage}{0.24\linewidth}
        \centering
        \includegraphics[width=3.3cm]{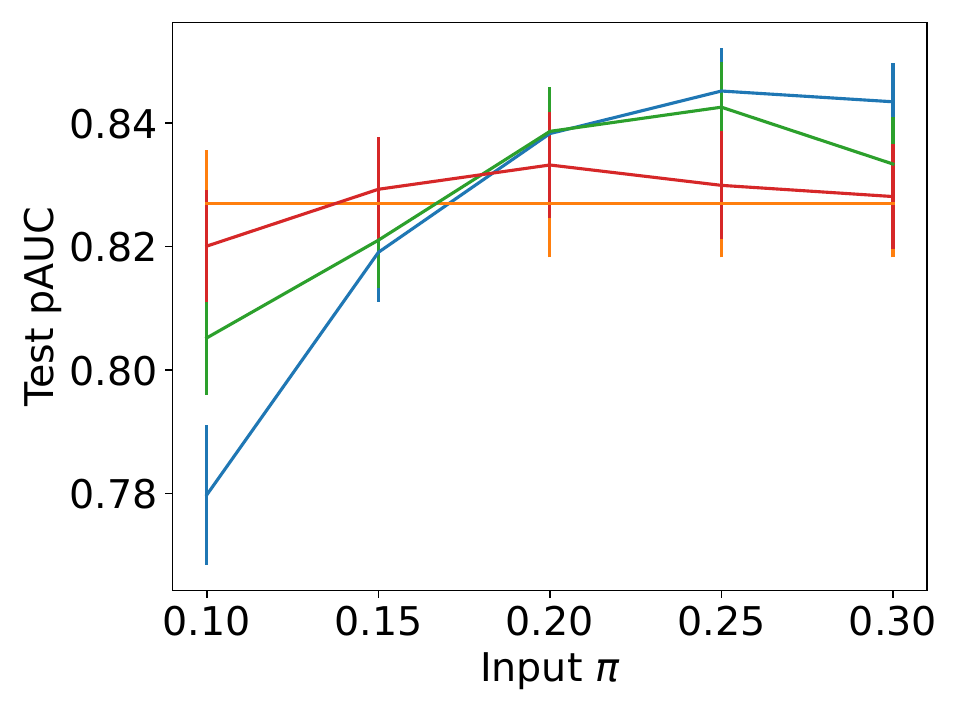}
        \subcaption{CODRNA}
    \end{minipage}
    \hfill
    \begin{minipage}{0.24\linewidth}
        \centering
        \includegraphics[width=3.3cm]{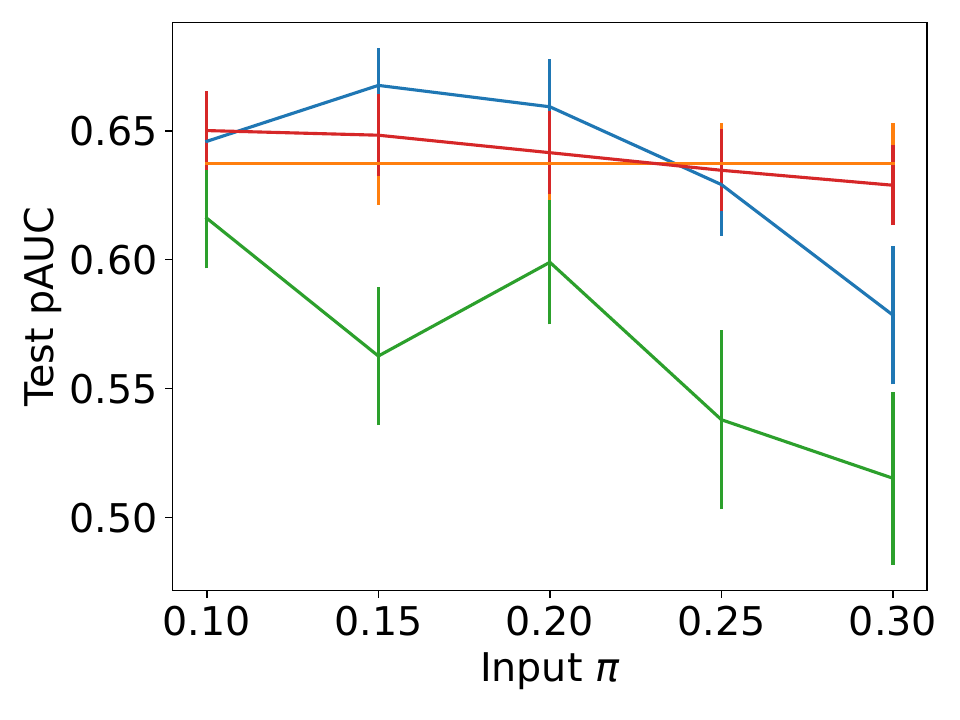}
        \subcaption{IJCNN}
    \end{minipage}
    \hfill
    \begin{minipage}{0.24\linewidth}
        \centering
        \includegraphics[width=3.3cm]{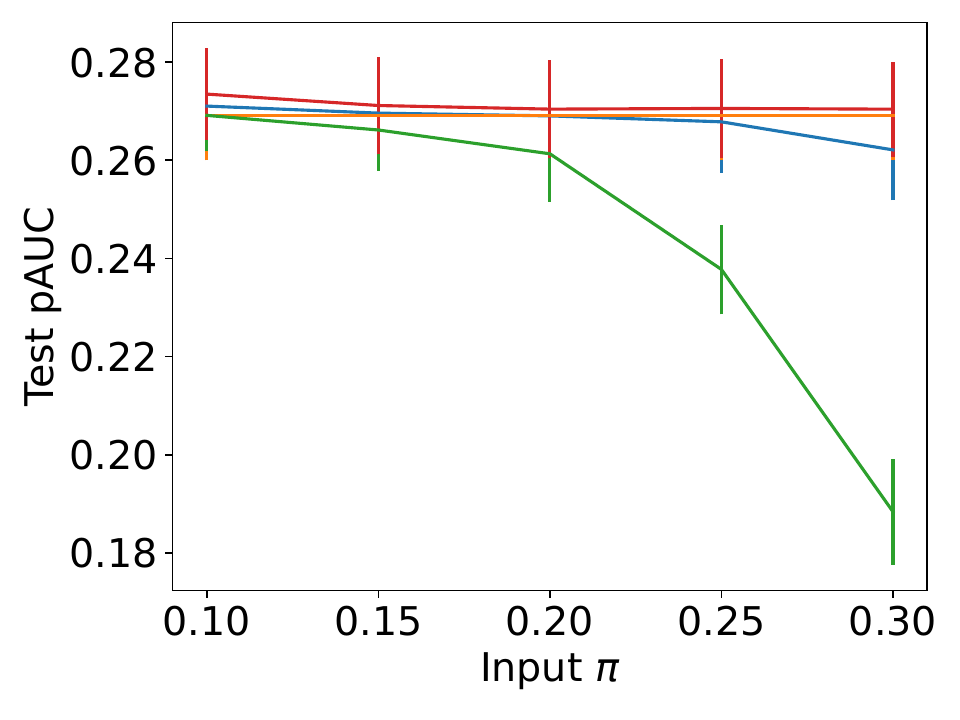}
        \subcaption{EPSILON}
    \end{minipage}
    \caption{The average test pAUCs and their standard errors with $\alpha=0$, $\beta=0.2$ when true class-prior $\pi = 0.2$. 
    `Input $\pi$' means the class-prior value used for training.}
    \label{fig:result_noisy}
\end{figure*}

\begin{table*}[t]
\caption{The results using estimated class-prior when the true class-prior is $\pi=0.2$ 
: test average pAUCs and their standard deviations with $\alpha=0$, $\beta=0.2$ over different labeled positive data sizes $N^{\rm p} = 30$, $50$, and $100$.
Values in bold are not statistically different at the $5\%$ level from the best performing method in each row according to a paired t-test.
`\# Best' row represents the number of results of each method that are the best or comparable to it.}
\label{table:all_results_estclass}
\centering
\scalebox{0.78}{
\begin{tabular}{l r r r r r r r}
\hline
Data & $\pi$ & Ours & NNPU & PUAUC & PAUC & PUpAUCh & 2step \\
\hline
MNIST        & 0.2 & \textbf{0.721 $\pm$ 0.061} & 0.676 $\pm$ 0.065 & 0.696 $\pm$ 0.049  & 0.652 $\pm$ 0.062 & \textbf{0.716 $\pm$ 0.053} & 0.697 $\pm$ 0.049  \\
FMNIST        & 0.2 & \textbf{0.933 $\pm$ 0.024} & 0.808 $\pm$ 0.055 & 0.904 $\pm$ 0.034 & 0.853 $\pm$ 0.059 & \textbf{0.932 $\pm$ 0.026} & 0.910 $\pm$ 0.033 \\
SVHN        & 0.2 & \textbf{0.301 $\pm$ 0.073} & 0.104 $\pm$ 0.005 & 0.262 $\pm$ 0.063 & 0.264 $\pm$ 0.052 & \textbf{0.292 $\pm$ 0.083} & 0.266 $\pm$ 0.063 \\
CIFAR10        & 0.2 & \textbf{0.571 $\pm$ 0.064} & 0.494 $\pm$ 0.078 & \textbf{0.559 $\pm$ 0.078} & 0.459 $\pm$ 0.070 & 0.539 $\pm$ 0.081 & \textbf{0.556 $\pm$ 0.078} \\
SUSY        & 0.2 & \textbf{0.523 $\pm$ 0.044} & 0.488 $\pm$ 0.051 & 0.499 $\pm$ 0.047 & 0.434 $\pm$ 0.056 & 0.505 $\pm$ 0.039 & 0.498 $\pm$ 0.046 \\
CODRNA        & 0.2 & 0.821 $\pm$ 0.043 & 0.639 $\pm$ 0.167 & 0.827 $\pm$ 0.048 & 0.705 $\pm$ 0.072 & \textbf{0.824 $\pm$ 0.043} & \textbf{0.829 $\pm$ 0.047} \\
IJCNN        & 0.2 & \textbf{0.646 $\pm$ 0.043} & 0.198 $\pm$ 0.043 & 0.637 $\pm$ 0.087 & 0.573 $\pm$ 0.068 & 0.616 $\pm$ 0.106 & \textbf{0.650 $\pm$ 0.084} \\
EPSILON        & 0.2 & \textbf{0.274 $\pm$ 0.049} & 0.128 $\pm$ 0.008 & \textbf{0.269 $\pm$ 0.050} & 0.248 $\pm$ 0.037 & \textbf{0.270 $\pm$ 0.046} & \textbf{0.270 $\pm$ 0.053} \\
\hline
\# Best   & & 7 & 0 & 2 & 0 & 5 & 4 \\
\hline
\end{tabular}
}
\end{table*}

\section{Conclusion}

We proposed a method for maximizing the pAUC from PU data by representing the pAUC, including its FPR-dependent thresholds, using only the positive and marginal densities. Based on this reformulation, we derived an empirical estimator whose smoothed version enables classifier training without negative data. Experiments on ten real-world datasets demonstrated its effectiveness compared with existing PU learning, AUC maximization, and pAUC maximization methods.

\bibliography{pu_pauc}

\begin{thebibliography}{49}
\providecommand{\natexlab}[1]{#1}
\providecommand{\url}[1]{\texttt{#1}}
\expandafter\ifx\csname urlstyle\endcsname\relax
  \providecommand{\doi}[1]{doi: #1}\else
  \providecommand{\doi}{doi: \begingroup \urlstyle{rm}\Url}\fi

\bibitem[Baker \& Pinsky(2001)Baker and Pinsky]{baker2001proposed}
Stuart~G Baker and Paul~F Pinsky.
\newblock A proposed design and analysis for comparing digital and analog mammography: special receiver operating characteristic methods for cancer screening.
\newblock \emph{Journal of the American Statistical Association}, 96\penalty0 (454):\penalty0 421--428, 2001.

\bibitem[Baldi et~al.(2014)Baldi, Sadowski, and Whiteson]{baldi2014searching}
Pierre Baldi, Peter Sadowski, and Daniel Whiteson.
\newblock Searching for exotic particles in high-energy physics with deep learning.
\newblock \emph{Nature communications}, 5\penalty0 (1):\penalty0 4308, 2014.

\bibitem[Bekker \& Davis(2020)Bekker and Davis]{bekker2020learning}
Jessa Bekker and Jesse Davis.
\newblock Learning from positive and unlabeled data: a survey.
\newblock \emph{Machine Learning}, 109\penalty0 (4):\penalty0 719--760, 2020.

\bibitem[Bradley(1997)]{bradley1997use}
Andrew~P Bradley.
\newblock The use of the area under the roc curve in the evaluation of machine learning algorithms.
\newblock \emph{Pattern recognition}, 30\penalty0 (7):\penalty0 1145--1159, 1997.

\bibitem[Chang \& Lin(2011)Chang and Lin]{chang2011libsvm}
Chih-Chung Chang and Chih-Jen Lin.
\newblock Libsvm: a library for support vector machines.
\newblock \emph{ACM transactions on intelligent systems and technology (TIST)}, 2\penalty0 (3):\penalty0 1--27, 2011.

\bibitem[Cortes \& Mohri(2003)Cortes and Mohri]{cortes2003auc}
Corinna Cortes and Mehryar Mohri.
\newblock Auc optimization vs. error rate minimization.
\newblock \emph{NeurIPS}, 2003.

\bibitem[Dodd \& Pepe(2003)Dodd and Pepe]{dodd2003partial}
Lori~E Dodd and Margaret~S Pepe.
\newblock Partial auc estimation and regression.
\newblock \emph{Biometrics}, 59\penalty0 (3):\penalty0 614--623, 2003.

\bibitem[Du~Plessis et~al.(2015)Du~Plessis, Niu, and Sugiyama]{du2015convex}
Marthinus Du~Plessis, Gang Niu, and Masashi Sugiyama.
\newblock Convex formulation for learning from positive and unlabeled data.
\newblock In \emph{ICML}, 2015.

\bibitem[Elkan \& Noto(2008)Elkan and Noto]{elkan2008learning}
Charles Elkan and Keith Noto.
\newblock Learning classifiers from only positive and unlabeled data.
\newblock In \emph{SIGKDD}, 2008.

\bibitem[Hammoudeh \& Lowd(2020)Hammoudeh and Lowd]{hammoudeh2020learning}
Zayd Hammoudeh and Daniel Lowd.
\newblock Learning from positive and unlabeled data with arbitrary positive shift.
\newblock In \emph{NeurIPS}, 2020.

\bibitem[Iwata et~al.(2020)Iwata, Fujino, and Ueda]{iwata2020semi}
Tomoharu Iwata, Akinori Fujino, and Naonori Ueda.
\newblock Semi-supervised learning for maximizing the partial auc.
\newblock In \emph{AAAI}, 2020.

\bibitem[Jiang et~al.(2023)Jiang, Xu, Zhao, Yang, Wen, Cao, and Huang]{jiang2023positive}
Yangbangyan Jiang, Qianqian Xu, Yunrui Zhao, Zhiyong Yang, Peisong Wen, Xiaochun Cao, and Qingming Huang.
\newblock Positive-unlabeled learning with label distribution alignment.
\newblock \emph{IEEE Transactions on Pattern Analysis and Machine Intelligence}, 2023.

\bibitem[Kingma \& Ba(2014)Kingma and Ba]{kingma2014adam}
Diederik~P Kingma and Jimmy Ba.
\newblock Adam: a method for stochastic optimization.
\newblock \emph{arXiv preprint arXiv:1412.6980}, 2014.

\bibitem[Kiryo et~al.(2017)Kiryo, Niu, Du~Plessis, and Sugiyama]{kiryo2017positive}
Ryuichi Kiryo, Gang Niu, Marthinus~C Du~Plessis, and Masashi Sugiyama.
\newblock Positive-unlabeled learning with non-negative risk estimator.
\newblock \emph{NeurIPS}, 2017.

\bibitem[Krizhevsky et~al.(2009)Krizhevsky, Hinton, et~al.]{krizhevsky2009learning}
Alex Krizhevsky, Geoffrey Hinton, et~al.
\newblock Learning multiple layers of features from tiny images.
\newblock 2009.

\bibitem[Kumagai et~al.(2024)Kumagai, Iwata, Takahashi, Nishiyama, and Fujiwara]{kumagaiauc}
Atsutoshi Kumagai, Tomoharu Iwata, Hiroshi Takahashi, Taishi Nishiyama, and Yasuhiro Fujiwara.
\newblock Auc maximization under positive distribution shift.
\newblock In \emph{NeurIPS}, 2024.

\bibitem[Kumagai et~al.(2025)Kumagai, Iwata, Takahashi, Nishiyama, Adachi, and Fujiwara]{kumagaipositive}
Atsutoshi Kumagai, Tomoharu Iwata, Hiroshi Takahashi, Taishi Nishiyama, Kazuki Adachi, and Yasuhiro Fujiwara.
\newblock Positive-unlabeled auc maximization under covariate shift.
\newblock In \emph{ICML}, 2025.

\bibitem[LeCun et~al.(1998)LeCun, Bottou, Bengio, and Haffner]{lecun1998gradient}
Yann LeCun, L{\'e}on Bottou, Yoshua Bengio, and Patrick Haffner.
\newblock Gradient-based learning applied to document recognition.
\newblock \emph{Proceedings of the IEEE}, 86\penalty0 (11):\penalty0 2278--2324, 1998.

\bibitem[Massart(1990)]{massart1990tight}
Pascal Massart.
\newblock The tight constant in the dvoretzky-kiefer-wolfowitz inequality.
\newblock \emph{The annals of Probability}, pp.\  1269--1283, 1990.

\bibitem[Mirsky et~al.(2018)Mirsky, Doitshman, Elovici, and Shabtai]{mirsky2018kitsune}
Yisroel Mirsky, Tomer Doitshman, Yuval Elovici, and Asaf Shabtai.
\newblock Kitsune: an ensemble of autoencoders for online network intrusion detection.
\newblock In \emph{NDSS}, 2018.

\bibitem[Mohri et~al.(2012)Mohri, Rostamizadeh, and Talwalkar]{mohri2012foundations}
Mehryar Mohri, Afshin Rostamizadeh, and Ameet Talwalkar.
\newblock \emph{Foundations of machine learning}, volume~2.
\newblock MIT press Cambridge, 2012.

\bibitem[Narasimhan \& Agarwal(2013{\natexlab{a}})Narasimhan and Agarwal]{narasimhan2013structural}
Harikrishna Narasimhan and Shivani Agarwal.
\newblock A structural svm based approach for optimizing partial auc.
\newblock In \emph{ICML}, 2013{\natexlab{a}}.

\bibitem[Narasimhan \& Agarwal(2013{\natexlab{b}})Narasimhan and Agarwal]{narasimhan2013svmpauctight}
Harikrishna Narasimhan and Shivani Agarwal.
\newblock Svmpauctight: a new support vector method for optimizing partial auc based on a tight convex upper bound.
\newblock In \emph{KDD}, 2013{\natexlab{b}}.

\bibitem[Netzer et~al.(2011)Netzer, Wang, Coates, Bissacco, Wu, Ng, et~al.]{netzer2011reading}
Yuval Netzer, Tao Wang, Adam Coates, Alessandro Bissacco, Baolin Wu, Andrew~Y Ng, et~al.
\newblock Reading digits in natural images with unsupervised feature learning.
\newblock In \emph{NIPS workshop on deep learning and unsupervised feature learning}, volume 2011, pp.\ ~7. Granada, Spain, 2011.

\bibitem[Nishiyama et~al.(2024)Nishiyama, Kumagai, Fujino, and Kamiya]{nishiyama2024partial}
Taishi Nishiyama, Atsutoshi Kumagai, Akinori Fujino, and Kazunori Kamiya.
\newblock Partial auc maximization for security log analysis robust to overfitting and noisy labels.
\newblock In \emph{GLOBECOM}, 2024.

\bibitem[Paszke et~al.(2017)Paszke, Gross, Chintala, Chanan, Yang, DeVito, Lin, Desmaison, Antiga, and Lerer]{paszke2017automatic}
Adam Paszke, Sam Gross, Soumith Chintala, Gregory Chanan, Edward Yang, Zachary DeVito, Zeming Lin, Alban Desmaison, Luca Antiga, and Adam Lerer.
\newblock Automatic differentiation in pytorch.
\newblock 2017.

\bibitem[Prokhorov(2001)]{prokhorov2001ijcnn}
Danil Prokhorov.
\newblock Ijcnn 2001 neural network competition.
\newblock \emph{Slide presentation in IJCNN}, 1\penalty0 (97):\penalty0 38, 2001.

\bibitem[Ramaswamy et~al.(2016)Ramaswamy, Scott, and Tewari]{ramaswamy2016mixture}
Harish Ramaswamy, Clayton Scott, and Ambuj Tewari.
\newblock Mixture proportion estimation via kernel embeddings of distributions.
\newblock In \emph{ICML}, 2016.

\bibitem[Sakai \& Shimizu(2019)Sakai and Shimizu]{sakai2019covariate}
Tomoya Sakai and Nobuyuki Shimizu.
\newblock Covariate shift adaptation on learning from positive and unlabeled data.
\newblock In \emph{AAAI}, 2019.

\bibitem[Sakai et~al.(2018)Sakai, Niu, and Sugiyama]{sakai2018semi}
Tomoya Sakai, Gang Niu, and Masashi Sugiyama.
\newblock Semi-supervised auc optimization based on positive-unlabeled learning.
\newblock \emph{Machine Learning}, 107:\penalty0 767--794, 2018.

\bibitem[Sakar et~al.(2019)Sakar, Polat, Katircioglu, and Kastro]{sakar2019real}
C~Okan Sakar, S~Olcay Polat, Mete Katircioglu, and Yomi Kastro.
\newblock Real-time prediction of online shoppers’ purchasing intention using multilayer perceptron and lstm recurrent neural networks.
\newblock \emph{Neural Computing and Applications}, 31\penalty0 (10):\penalty0 6893--6908, 2019.

\bibitem[Scott(2015)]{scott2015rate}
Clayton Scott.
\newblock A rate of convergence for mixture proportion estimation, with application to learning from noisy labels.
\newblock In \emph{AISTATS}, 2015.

\bibitem[Sugiyama et~al.(2022)Sugiyama, Bao, Ishida, Lu, and Sakai]{sugiyama2022machine}
Masashi Sugiyama, Han Bao, Takashi Ishida, Nan Lu, and Tomoya Sakai.
\newblock \emph{Machine learning from weak supervision: An empirical risk minimization approach}.
\newblock MIT Press, 2022.

\bibitem[Tsoumakas et~al.(2008)Tsoumakas, Katakis, and Vlahavas]{tsoumakas2008effective}
Grigorios Tsoumakas, Ioannis Katakis, and Ioannis Vlahavas.
\newblock Effective and efficient multilabel classification in domains with large number of labels.
\newblock In \emph{Proc. ECML/PKDD 2008 workshop on mining multidimensional data}, 2008.

\bibitem[Ueda \& Fujino(2018)Ueda and Fujino]{ueda2018partial}
Naonori Ueda and Akinori Fujino.
\newblock Partial auc maximization via nonlinear scoring functions.
\newblock \emph{arXiv preprint arXiv:1806.04838}, 2018.

\bibitem[Uzilov et~al.(2006)Uzilov, Keegan, and Mathews]{uzilov2006detection}
Andrew~V Uzilov, Joshua~M Keegan, and David~H Mathews.
\newblock Detection of non-coding rnas on the basis of predicted secondary structure formation free energy change.
\newblock \emph{BMC bioinformatics}, 7\penalty0 (1):\penalty0 173, 2006.

\bibitem[Vanschoren et~al.(2014)Vanschoren, Van~Rijn, Bischl, and Torgo]{vanschoren2014openml}
Joaquin Vanschoren, Jan~N Van~Rijn, Bernd Bischl, and Luis Torgo.
\newblock Openml: networked science in machine learning.
\newblock \emph{ACM SIGKDD Explorations Newsletter}, 15\penalty0 (2):\penalty0 49--60, 2014.

\bibitem[Xiao et~al.(2017)Xiao, Rasul, and Vollgraf]{xiao2017fashion}
Han Xiao, Kashif Rasul, and Roland Vollgraf.
\newblock Fashion-mnist: a novel image dataset for benchmarking machine learning algorithms.
\newblock \emph{arXiv preprint arXiv:1708.07747}, 2017.

\bibitem[Xie \& Li(2018)Xie and Li]{xie2018semi}
Zheng Xie and Ming Li.
\newblock Semi-supervised auc optimization without guessing labels of unlabeled data.
\newblock In \emph{AAAI}, 2018.

\bibitem[Xie et~al.(2024)Xie, Liu, He, Li, and Zhou]{xie2024weakly}
Zheng Xie, Yu~Liu, Hao-Yuan He, Ming Li, and Zhi-Hua Zhou.
\newblock Weakly supervised auc optimization: a unified partial auc approach.
\newblock \emph{IEEE Transactions on Pattern Analysis and Machine Intelligence}, 2024.

\bibitem[Yang \& Ying(2022)Yang and Ying]{yang2022auc}
Tianbao Yang and Yiming Ying.
\newblock Auc maximization in the era of big data and ai: a survey.
\newblock \emph{ACM computing surveys}, 55\penalty0 (8):\penalty0 1--37, 2022.

\bibitem[Yao et~al.(2022)Yao, Lin, and Yang]{yao2022large}
Yao Yao, Qihang Lin, and Tianbao Yang.
\newblock Large-scale optimization of partial auc in a range of false positive rates.
\newblock In \emph{NeurIPS}, 2022.

\bibitem[Yao et~al.(2021)Yao, Liu, Han, Gong, Niu, Sugiyama, and Tao]{yao2021rethinking}
Yu~Yao, Tongliang Liu, Bo~Han, Mingming Gong, Gang Niu, Masashi Sugiyama, and Dacheng Tao.
\newblock Rethinking class-prior estimation for positive-unlabeled learning.
\newblock In \emph{ICLR}, 2021.

\bibitem[Ying et~al.(2016)Ying, Wen, and Lyu]{ying2016stochastic}
Yiming Ying, Longyin Wen, and Siwei Lyu.
\newblock Stochastic online auc maximization.
\newblock \emph{NeurIPS}, 2016.

\bibitem[Yuan et~al.(2021{\natexlab{a}})Yuan, Guo, Chawla, and Yang]{yuan2021compositional}
Zhuoning Yuan, Zhishuai Guo, Nitesh Chawla, and Tianbao Yang.
\newblock Compositional training for end-to-end deep auc maximization.
\newblock In \emph{ICLR}, 2021{\natexlab{a}}.

\bibitem[Yuan et~al.(2021{\natexlab{b}})Yuan, Yan, Sonka, and Yang]{yuan2021large}
Zhuoning Yuan, Yan Yan, Milan Sonka, and Tianbao Yang.
\newblock Large-scale robust deep auc maximization: a new surrogate loss and empirical studies on medical image classification.
\newblock In \emph{Proceedings of the IEEE/CVF International Conference on Computer Vision}, pp.\  3040--3049, 2021{\natexlab{b}}.

\bibitem[Zhang et~al.(2023)Zhang, Zhang, Yang, Souvenir, and Gao]{zhang2023federated}
Xinwen Zhang, Yihan Zhang, Tianbao Yang, Richard Souvenir, and Hongchang Gao.
\newblock Federated compositional deep auc maximization.
\newblock \emph{NeurIPS}, 2023.

\bibitem[Zhou et~al.(2025)Zhou, Wang, Thai, and Yang]{zhou2025stochastic}
Linli Zhou, Bokun Wang, My~T. Thai, and Tianbao Yang.
\newblock Stochastic primal-dual double block-coordinate for two-way partial auc maximization.
\newblock \emph{Transactions on Machine Learning Research}, 2025.
\newblock ISSN 2835-8856.

\bibitem[Zhu et~al.(2022)Zhu, Li, Wang, Wu, and Yang]{zhu2022auc}
Dixian Zhu, Gang Li, Bokun Wang, Xiaodong Wu, and Tianbao Yang.
\newblock When auc meets dro: optimizing partial auc for deep learning with non-convex convergence guarantee.
\newblock In \emph{ICML}, 2022.

\end{thebibliography}
\bibliographystyle{iclr2027_conference}

\appendix

\section{Theoretical Analysis}
\label{sec:theoretical_analysis}

In this section, we analyze the finite-sample properties of the proposed
pAUC estimator in Eq. \eqref{eq:emp_pauc_pu}.
We first analyze the estimation error of the FPR estimator in
Eq.~\eqref{eq:fpr_pu_emp} and the FPRs at the estimated thresholds
${\hat t}_{\alpha}$ and ${\hat t}_{\beta}$.
Then, we derive a finite-sample error bound for the empirical
pAUC estimator in Eq.~\eqref{eq:emp_pauc_pu}.
Finally, we analyze the effect of class-prior estimation error.
Throughout this section, we assume
$0\le\alpha<\beta<1$, which covers all settings of interest.

For the analysis, we explicitly write the dependence on score function
$s$ when necessary.
Let $\mathcal{S}$ be a class of score functions.
For $s\in\mathcal{S}$, define
\begin{align}
\overline F_s(t)
&:=
\mathbb{E}_{{\bf x}\sim p}
\left[
I(s({\bf x})>t)
\right],
\label{eq:theory_survival_u}
\\
\overline F_s^{{\rm p}}(t)
&:=
\mathbb{E}_{{\bf x}^{{\rm p}}\sim p^{{\rm p}}}
\left[
I(s({\bf x}^{{\rm p}})>t)
\right],
\label{eq:theory_survival_p}
\end{align}
and their empirical versions as
\begin{align}
\widehat{\overline F}_s(t)
&:=
\frac{1}{N}
\sum_{n=1}^{N}
I(s({\bf x}_n)>t),
\label{eq:theory_survival_u_emp}
\\
\widehat{\overline F}_s^{{\rm p}}(t)
&:=
\frac{1}{N^{{\rm p}}}
\sum_{n=1}^{N^{{\rm p}}}
I(s({\bf x}_n^{{\rm p}})>t).
\label{eq:theory_survival_p_emp}
\end{align}
Then, Eqs.~\eqref{eq:fpr_pu} and \eqref{eq:fpr_pu_emp} can be written as
\begin{align}
{\rm FPR}_s(t)
&=
\frac{
\overline F_s(t)
-
\pi\overline F_s^{{\rm p}}(t)
}
{1-\pi},
\label{eq:theory_fpr_pop}
\\
{\widehat {\rm FPR}}_s(t)
&=
\frac{
\widehat{\overline F}_s(t)
-
\pi\widehat{\overline F}_s^{{\rm p}}(t)
}
{1-\pi}.
\label{eq:theory_fpr_emp}
\end{align}

We use the following assumption.

\begin{assumption}
\label{assump:no_point_mass}
For every $s\in\mathcal{S}$ considered in the analysis, the distribution
of $s({\bf x}^{{\rm n}})$ under $p^{{\rm n}}({\bf x})$ has no point
masses.
\end{assumption}

Assumption~\ref{assump:no_point_mass} serves as a convenient sufficient
condition for handling the data-dependent estimated thresholds in the
finite-sample analysis, as it ensures that ${\rm FPR}_s(t)$ is continuous
in $t$.
We use this continuity below when relating an estimated threshold to the
FPR at that threshold.
When $\alpha=0$, it is possible that no finite threshold achieves
${\rm FPR}_s(t)=0$.
In this case, we set $t_{\alpha,s}=+\infty$, define
${\rm FPR}_s(+\infty)=0$, and interpret
$[t_{\beta,s},t_{\alpha,s}]$ as $[t_{\beta,s},\infty)$.

\subsection{Uniform Convergence of the Score Survival Probabilities}
\label{subsec:theory_uniform}

Define the uniform estimation errors of the marginal and positive
score survival probabilities as
\begin{align}
\Delta_N
&:=
\sup_{s\in\mathcal{S},\,t\in\mathbb{R}}
\left|
\widehat{\overline F}_s(t)
-
\overline F_s(t)
\right|,
\label{eq:delta_u}
\\
\Delta_{N^{{\rm p}}}
&:=
\sup_{s\in\mathcal{S},\,t\in\mathbb{R}}
\left|
\widehat{\overline F}_s^{{\rm p}}(t)
-
\overline F_s^{{\rm p}}(t)
\right|.
\label{eq:delta_p}
\end{align}
Thus, $\Delta_N$ and $\Delta_{N^{{\rm p}}}$ are the largest
estimation errors of the marginal and positive score survival
probabilities, respectively, over all
$s\in\mathcal{S}$ and $t\in\mathbb{R}$.

We first consider a finite score-function class
$\mathcal{S}$ with $|\mathcal{S}|=M$.
For any fixed $s\in\mathcal{S}$, the
Dvoretzky--Kiefer--Wolfowitz (DKW) inequality
\citep{massart1990tight}
gives
\begin{align}
\mathbb{P}
\left(
\sup_{t\in\mathbb{R}}
\left|
\widehat{\overline F}_s(t)
-
\overline F_s(t)
\right|
>
\epsilon
\right)
\le
2\exp(-2N\epsilon^2).
\label{eq:dkw_u_fixed}
\end{align}
Although the DKW inequality is usually stated for cumulative
distribution functions, it can also be directly applied to the
survival probabilities since
$\overline F_s(t)=1-F_s(t)$ and
$\widehat{\overline F}_s(t)=1-\widehat F_s(t)$,
where $F_s$ and $\widehat F_s$ are the population and empirical
cumulative distribution functions of $s({\bf x})$, respectively.

Applying the union bound over all $M$ score functions gives
\begin{align}
\mathbb{P}
\left(
\Delta_N>\epsilon
\right)
\le
2M\exp(-2N\epsilon^2).
\label{eq:dkw_u_union}
\end{align}
Setting the right-hand side to $\delta/2$ gives
\begin{align}
\epsilon_N
=
\sqrt{
\frac{\log(4M/\delta)}{2N}
},
\label{eq:epsilon_u_finite}
\end{align}
and therefore
\begin{align}
\mathbb{P}
\left(
\Delta_N\le\epsilon_N
\right)
\ge
1-\frac{\delta}{2}.
\label{eq:dkw_u_probability}
\end{align}
Applying the same argument to the $N^{{\rm p}}$ positive instances
gives
\begin{align}
\epsilon_{N^{{\rm p}}}
=
\sqrt{
\frac{\log(4M/\delta)}{2N^{{\rm p}}}
},
\end{align}
such that
$\mathbb{P}(\Delta_{N^{{\rm p}}}\le\epsilon_{N^{{\rm p}}})
\ge 1-\delta/2$.
Thus, by the union bound, with probability at least $1-\delta$,
\begin{align}
\Delta_N
&\le
\epsilon_N,
&
\Delta_{N^{{\rm p}}}
&\le
\epsilon_{N^{{\rm p}}}
\label{eq:uniform_conditions}
\end{align}
hold simultaneously.
In particular, for a finite score-function class, the empirical
marginal and positive score survival probabilities converge uniformly
to their population values, with errors of orders
\begin{align}
O\left(
\sqrt{\frac{\log(M/\delta)}{N}}
\right)
\quad\text{and}\quad
O\left(
\sqrt{\frac{\log(M/\delta)}{N^{{\rm p}}}}
\right),
\end{align}
respectively.

The following analysis only requires the uniform bounds in
Eq.~\eqref{eq:uniform_conditions}.
Therefore, the results are not restricted to finite score-function
classes.
For a score function $s$ and a threshold $t$, define
\begin{align}
C_{s,t}
:=
\left\{
{\bf x}\in\mathcal{X}
:
s({\bf x})>t
\right\}.
\end{align}
Consider the collection of these sets over all score functions and
thresholds:
\begin{align}
\mathcal{C}
:=
\left\{
C_{s,t}
:
s\in\mathcal{S},\
t\in\mathbb{R}
\right\}.
\label{eq:threshold_class}
\end{align}
Since
\begin{align}
\overline F_s(t)
&=
\mathbb{P}_{{\bf x}\sim p}
\left(
{\bf x}\in C_{s,t}
\right),
&
\widehat{\overline F}_s(t)
&=
\frac{1}{N}
\sum_{n=1}^{N}
I({\bf x}_n\in C_{s,t}),
\end{align}
we have
\begin{align}
\Delta_N
=
\sup_{C\in\mathcal{C}}
\left|
\frac{1}{N}
\sum_{n=1}^{N}
I({\bf x}_n\in C)
-
\mathbb{P}_{{\bf x}\sim p}
\left(
{\bf x}\in C
\right)
\right|.
\end{align}
Therefore, if $\mathcal{C}$ has finite VC dimension $d$, a standard
VC uniform-convergence bound
\citep{mohri2012foundations}
gives
\begin{align}
\Delta_N
=
O\left(
\sqrt{
\frac{
d\log(2N)+\log(2/\delta)
}{N}
}
\right)
\end{align}
with probability at least $1-\delta/2$.
Applying the same argument to the positive distribution using the
same set class $\mathcal{C}$ and the $N^{{\rm p}}$ positive instances
gives
\begin{align}
\Delta_{N^{{\rm p}}}
=
O\left(
\sqrt{
\frac{
d\log(2N^{{\rm p}})+\log(2/\delta)
}{N^{{\rm p}}}
}
\right)
\end{align}
with probability at least $1-\delta/2$.
Thus, by the union bound, both bounds hold simultaneously with
probability at least $1-\delta$.
Therefore, the following results also apply to infinite score-function
classes when the threshold-set class $\mathcal{C}$ has finite VC
dimension.

\subsection{Accuracy of the PU-based FPR Estimator and Estimated Thresholds}
\label{subsec:theory_fpr}

We first analyze the error of the FPR estimator in
Eq.~\eqref{eq:fpr_pu_emp}.

\begin{theorem}
\label{thm:fpr_uniform}
Suppose that
\begin{align}
\Delta_N
&\le
\epsilon_N,
&
\Delta_{N^{{\rm p}}}
&\le
\epsilon_{N^{{\rm p}}}.
\end{align}
Then,
\begin{align}
\sup_{s\in\mathcal{S},\,t\in\mathbb{R}}
\left|
{\widehat {\rm FPR}}_s(t)
-
{\rm FPR}_s(t)
\right|
\le
\epsilon_{\rm FPR},
\label{eq:fpr_uniform_bound}
\end{align}
where
\begin{align}
\epsilon_{\rm FPR}
:=
\frac{
\epsilon_N
+
\pi\epsilon_{N^{{\rm p}}}
}
{1-\pi}.
\label{eq:epsilon_fpr}
\end{align}
The same bound holds when ${\widehat {\rm FPR}}_s(t)$ is clipped to
$[0,1]$ as in the proposed method.
\end{theorem}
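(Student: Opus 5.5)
The proof is a direct computation from the representations in Eqs.~\eqref{eq:theory_fpr_pop} and \eqref{eq:theory_fpr_emp}, followed by a short argument for the clipped version. Fix any $s\in\mathcal{S}$ and $t\in\mathbb{R}$. Subtracting the two representations gives
\begin{align}
{\widehat {\rm FPR}}_s(t)-{\rm FPR}_s(t)
=
\frac{
\bigl(\widehat{\overline F}_s(t)-\overline F_s(t)\bigr)
-\pi\bigl(\widehat{\overline F}_s^{{\rm p}}(t)-\overline F_s^{{\rm p}}(t)\bigr)
}{1-\pi}.
\end{align}
Since $0<\pi<1$, the triangle inequality bounds the absolute value of this difference by
\begin{align}
\frac{|\widehat{\overline F}_s(t)-\overline F_s(t)|+\pi|\widehat{\overline F}_s^{{\rm p}}(t)-\overline F_s^{{\rm p}}(t)|}{1-\pi}.
\end{align}

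By the definitions of $\Delta_N$ and $\Delta_{N^{{\rm p}}}$ in Eqs.~\eqref{eq:delta_u} and \eqref{eq:delta_p}, the first numerator term is at most $\Delta_N\le\epsilon_N$. The second is at most $\Delta_{N^{{\rm p}}}\le\epsilon_{N^{{\rm p}}}$. This gives the pointwise bound $\epsilon_{\rm FPR}$, and because the right-hand side does not depend on $(s,t)$, taking the supremum over $s\in\mathcal{S}$ and $t\in\mathbb{R}$ yields Eq.~\eqref{eq:fpr_uniform_bound}.

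For the clipped estimator, let $c(z):=\min\{1,\max\{0,z\}\}$ be the projection onto $[0,1]$. The true ${\rm FPR}_s(t)$ is a probability under $p^{{\rm n}}$, so it lies in $[0,1]$ and $c({\rm FPR}_s(t))={\rm FPR}_s(t)$. The map $c$ is $1$-Lipschitz, which gives
\begin{align}
|c({\widehat {\rm FPR}}_s(t))-{\rm FPR}_s(t)|
=|c({\widehat {\rm FPR}}_s(t))-c({\rm FPR}_s(t))|
\le|{\widehat {\rm FPR}}_s(t)-{\rm FPR}_s(t)|
\le\epsilon_{\rm FPR}.
\end{align}
The bound therefore carries over unchanged to the clipped estimator.

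There is no real obstacle. The only subtlety is that ${\rm FPR}_s(t)\in[0,1]$ must hold for the clipping step, and this is immediate from its original definition in Eq.~\eqref{eq:fpr}, which equals the PU form by Eq.~\eqref{eq:neg_den}. If the convention ${\rm FPR}_s(+\infty)=0$ is also covered, both estimators are $0$ there, so the bound holds trivially.
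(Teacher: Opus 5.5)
Your proposal is correct and follows essentially the same route as the paper's proof: subtract the two PU representations, apply the triangle inequality, bound each term by $\Delta_N$ and $\Delta_{N^{{\rm p}}}$, take the supremum, and use ${\rm FPR}_s(t)\in[0,1]$ for the clipped version. Your explicit $1$-Lipschitz projection argument spells out the paper's one-line remark that clipping cannot increase the error.
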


\begin{proof}
Using Eqs.~\eqref{eq:theory_fpr_pop} and
\eqref{eq:theory_fpr_emp}, we have
\begin{align}
&
\left|
{\widehat {\rm FPR}}_s(t)
-
{\rm FPR}_s(t)
\right|
\nonumber\\
&=
\frac{1}{1-\pi}
\left|
\left(
\widehat{\overline F}_s(t)
-
\overline F_s(t)
\right)
-
\pi
\left(
\widehat{\overline F}_s^{{\rm p}}(t)
-
\overline F_s^{{\rm p}}(t)
\right)
\right|
\nonumber\\
&\le
\frac{1}{1-\pi}
\left[
\left|
\widehat{\overline F}_s(t)
-
\overline F_s(t)
\right|
+
\pi
\left|
\widehat{\overline F}_s^{{\rm p}}(t)
-
\overline F_s^{{\rm p}}(t)
\right|
\right]
\nonumber\\
&\le
\frac{
\Delta_N
+
\pi\Delta_{N^{{\rm p}}}
}{
1-\pi
}
\nonumber\\
&\le
\frac{
\epsilon_N
+
\pi\epsilon_{N^{{\rm p}}}
}{
1-\pi
}
=
\epsilon_{\rm FPR}.
\end{align}
Since this holds for every $s\in\mathcal{S}$ and
$t\in\mathbb{R}$, by taking the supremum, we obtain
Eq.~\eqref{eq:fpr_uniform_bound}.
Since ${\rm FPR}_s(t)\in[0,1]$, clipping
${\widehat {\rm FPR}}_s(t)$ to $[0,1]$ cannot increase the estimation
error.
Thus, the same bound also holds for the clipped estimator.
\end{proof}

For $\gamma\in\{\alpha,\beta\}$, let
${\hat t}_{\gamma,s}$ denote the estimated threshold obtained by
Eq.~\eqref{eq:th_a_pu} or Eq.~\eqref{eq:th_b_pu}, respectively,
for score function $s$.
Since $\alpha<\beta$,
\begin{align}
\left\{
t:
{\widehat {\rm FPR}}_s(t)\le\alpha
\right\}
\subseteq
\left\{
t:
{\widehat {\rm FPR}}_s(t)\le\beta
\right\}.
\end{align}
Therefore,
${\hat t}_{\beta,s}\le{\hat t}_{\alpha,s}$ even though
${\widehat {\rm FPR}}_s(t)$ is not necessarily monotonic in $t$.

\begin{theorem}
\label{thm:threshold_fpr}
Suppose that Assumption~\ref{assump:no_point_mass} holds,
$0\le\alpha<\beta<1$, and
\begin{align}
\Delta_N
&\le
\epsilon_N,
&
\Delta_{N^{{\rm p}}}
&\le
\epsilon_{N^{{\rm p}}}.
\end{align}
Then, for every $s\in\mathcal{S}$ and
$\gamma\in\{\alpha,\beta\}$,
\begin{align}
\left|
{\rm FPR}_s({\hat t}_{\gamma,s})
-
\gamma
\right|
\le
\epsilon_{\rm FPR}.
\label{eq:estimated_threshold_fpr_bound}
\end{align}
\end{theorem}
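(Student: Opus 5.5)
The plan is to prove the two one-sided inequalities ${\rm FPR}_s({\hat t}_{\gamma,s})\le\gamma+\epsilon_{\rm FPR}$ and ${\rm FPR}_s({\hat t}_{\gamma,s})\ge\gamma-\epsilon_{\rm FPR}$ separately. Both rest on Theorem~\ref{thm:fpr_uniform}, which gives $|{\widehat {\rm FPR}}_s(t)-{\rm FPR}_s(t)|\le\epsilon_{\rm FPR}$ uniformly in $s$ and $t$, for the clipped estimator as well. I fix $s\in\mathcal{S}$ and $\gamma\in\{\alpha,\beta\}$ throughout.

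First I check that ${\hat t}_{\gamma,s}$ is well defined. Let $s_{\max}$ be the largest score in $X^{{\rm p}}\cup X$. Then no training instance has score strictly greater than $s_{\max}$, so ${\widehat {\rm FPR}}_s(s_{\max})=0\le\gamma$. Hence the set in Eq.~\eqref{eq:th_a_pu} (or Eq.~\eqref{eq:th_b_pu}) is nonempty and the minimum exists.

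The upper bound is immediate. By definition ${\widehat {\rm FPR}}_s({\hat t}_{\gamma,s})\le\gamma$, so
\begin{align}
{\rm FPR}_s({\hat t}_{\gamma,s})\le{\widehat {\rm FPR}}_s({\hat t}_{\gamma,s})+\epsilon_{\rm FPR}\le\gamma+\epsilon_{\rm FPR}.
\end{align}

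The lower bound is the main step, and it is where Assumption~\ref{assump:no_point_mass} and minimality of ${\hat t}_{\gamma,s}$ enter.

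\emph{Step 1 (structure of the empirical FPR).} ${\widehat {\rm FPR}}_s(t)$ is a right-continuous step function of $t$. It is constant on each interval $[s^{(k)},s^{(k+1)})$ between consecutive distinct training scores, and it equals $1$ (after clipping) for $t<s^{(1)}$. Let $s^-$ be the largest training score strictly below ${\hat t}_{\gamma,s}$, and set $s^-=-\infty$ if there is none. Then for every $t\in[s^-,{\hat t}_{\gamma,s})$ we have ${\widehat {\rm FPR}}_s(t)={\widehat {\rm FPR}}_s(s^-)$.

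\emph{Step 2 (minimality gives a strict bound below ${\hat t}_{\gamma,s}$).} If $s^-$ is finite, it is a training score strictly smaller than the minimizer, so minimality forces ${\widehat {\rm FPR}}_s(s^-)>\gamma$. If $s^-=-\infty$, the value on that interval is $1>\gamma$, using $\gamma\le\beta<1$. In both cases ${\widehat {\rm FPR}}_s(t)>\gamma$ for all $t$ slightly below ${\hat t}_{\gamma,s}$. Theorem~\ref{thm:fpr_uniform} then yields ${\rm FPR}_s(t)>\gamma-\epsilon_{\rm FPR}$ for all such $t$.

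\emph{Step 3 (pass to the limit).} Let $t\uparrow{\hat t}_{\gamma,s}$. The left limit of ${\rm FPR}_s(t)=\mathbb{P}_{p^{{\rm n}}}(s({\bf x})>t)$ at ${\hat t}_{\gamma,s}$ is $\mathbb{P}_{p^{{\rm n}}}(s({\bf x})\ge{\hat t}_{\gamma,s})$. By Assumption~\ref{assump:no_point_mass} this equals ${\rm FPR}_s({\hat t}_{\gamma,s})$, so ${\rm FPR}_s({\hat t}_{\gamma,s})\ge\gamma-\epsilon_{\rm FPR}$.

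Combining the two directions gives Eq.~\eqref{eq:estimated_threshold_fpr_bound} for every $s$ and $\gamma$. The delicate points are the boundary case $s^-=-\infty$, which is where the requirement $\beta<1$ is used, and the fact that ${\widehat {\rm FPR}}_s$ need not be monotone. Monotonicity is never used: only the constancy of ${\widehat {\rm FPR}}_s$ on the interval immediately left of ${\hat t}_{\gamma,s}$ matters.
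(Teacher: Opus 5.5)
Your proposal is correct and follows essentially the same route as the paper. The upper bound comes from ${\widehat {\rm FPR}}_s({\hat t}_{\gamma,s})\le\gamma$ together with Theorem~\ref{thm:fpr_uniform}. The lower bound comes from minimality of ${\hat t}_{\gamma,s}$ giving ${\widehat {\rm FPR}}_s(t)>\gamma$ on a left neighborhood, with the same case split on whether ${\hat t}_{\gamma,s}$ is the smallest PU score, followed by letting $t\uparrow{\hat t}_{\gamma,s}$ under Assumption~\ref{assump:no_point_mass}. Your check that the defining set is nonempty, and your use of right-continuity to identify the constant value on $[s^-,{\hat t}_{\gamma,s})$ with ${\widehat {\rm FPR}}_s(s^-)$, are small clarifications the paper leaves implicit.
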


\begin{proof}
We fix $s$ and omit its subscript for simplicity.
Since $0\le\gamma<1$, clipping
$\widehat{\rm FPR}_s(t)$ to $[0,1]$ does not change the set of
thresholds satisfying $\widehat{\rm FPR}_s(t)\le\gamma$.
From the definition of ${\hat t}_{\gamma}$ in
Eqs.~\eqref{eq:th_a_pu} and \eqref{eq:th_b_pu},
\begin{align}
{\widehat {\rm FPR}}({\hat t}_{\gamma})
\le
\gamma.
\end{align}
Therefore, Theorem~\ref{thm:fpr_uniform} gives
\begin{align}
{\rm FPR}({\hat t}_{\gamma})
\le
\gamma+\epsilon_{\rm FPR}.
\label{eq:threshold_upper_bound}
\end{align}

For the lower bound,
${\hat t}_{\gamma}$ is the smallest score among the PU instances at
which the empirical FPR is no greater than $\gamma$.
Suppose first that ${\hat t}_{\gamma}$ is not the smallest score among the PU instances, and let $t^-_{\gamma}$ be the largest
distinct score smaller than ${\hat t}_{\gamma}$.
By the definition of ${\hat t}_{\gamma}$,
\begin{align}
{\widehat {\rm FPR}}(t^-_{\gamma})>\gamma.
\end{align}
Since there is no PU instance whose score lies in
$(t^-_{\gamma},{\hat t}_{\gamma})$, the empirical FPR is constant
on this interval. Therefore,
\begin{align}
{\widehat {\rm FPR}}(t)>\gamma,
\qquad
t\in(t^-_{\gamma},{\hat t}_{\gamma}).
\end{align}
If ${\hat t}_{\gamma}$ is the smallest score among the PU instances,
then all PU instances have scores greater than $t$ for any
$t<{\hat t}_{\gamma}$.
Hence,
\begin{align}
{\widehat {\rm FPR}}(t)
=
\frac{1-\pi}{1-\pi}
=
1
>
\gamma.
\end{align}
Thus, in either case, there is a left neighborhood of
${\hat t}_{\gamma}$ in which
${\widehat {\rm FPR}}(t)>\gamma$.
This argument does not require
${\widehat {\rm FPR}}(t)$ to be monotonic.
For $t$ approaching ${\hat t}_{\gamma}$ from below,
Theorem~\ref{thm:fpr_uniform} gives
\begin{align}
{\rm FPR}(t)
>
\gamma-\epsilon_{\rm FPR}.
\end{align}
Assumption~\ref{assump:no_point_mass} ensures that
${\rm FPR}(t)$ is continuous.
Therefore, taking $t\uparrow{\hat t}_{\gamma}$ gives
\begin{align}
{\rm FPR}({\hat t}_{\gamma})
\ge
\gamma-\epsilon_{\rm FPR}.
\label{eq:threshold_lower_bound}
\end{align}
Combining Eqs.~\eqref{eq:threshold_upper_bound}
and \eqref{eq:threshold_lower_bound} gives
\begin{align}
\left|
{\rm FPR}({\hat t}_{\gamma})
-
\gamma
\right|
\le
\epsilon_{\rm FPR}.
\end{align}
Since $s\in\mathcal{S}$ and
$\gamma\in\{\alpha,\beta\}$ were arbitrary, the result holds for
every $s\in\mathcal{S}$ and $\gamma\in\{\alpha,\beta\}$.
\end{proof}

Theorem~\ref{thm:threshold_fpr} directly bounds the quantity needed for
pAUC estimation.
The estimated threshold values themselves do not need to be close to
their population values as long as the FPRs at the estimated thresholds
are close to the target values $\alpha$ and $\beta$.

\subsection{Finite-sample Error of the Empirical pAUC Estimator}
\label{subsec:theory_hard_pauc}

We next analyze the empirical pAUC estimator in
Eq.~\eqref{eq:emp_pauc_pu}.
For $b\in\mathbb{R}$ and
$a\in\mathbb{R}\cup\{+\infty\}$ with $b\le a$, define
\begin{align}
A_s(a,b)
:=
&
\mathbb{E}_{{\bf x}^{{\rm p}}\sim p^{{\rm p}}}
\mathbb{E}_{{\bf x}\sim p}
\left[
I(s({\bf x}^{{\rm p}})>s({\bf x}))
I(s({\bf x})\in[b,a])
\right],
\label{eq:A_population}
\\
B_s(a,b)
:=
&
\mathbb{E}_{{\bf x}^{{\rm p}}\sim p^{{\rm p}}}
\mathbb{E}_{{\bar{\bf x}}^{{\rm p}}\sim p^{{\rm p}}}
\left[
I(s({\bf x}^{{\rm p}})>s({\bar{\bf x}}^{{\rm p}}))
I(s({\bar{\bf x}}^{{\rm p}})\in[b,a])
\right].
\label{eq:B_population}
\end{align}
Their empirical versions are
\begin{align}
\widehat A_s(a,b)
:=
&
\frac{1}{NN^{{\rm p}}}
\sum_{n=1}^{N^{{\rm p}}}
\sum_{m=1}^{N}
I(s({\bf x}_n^{{\rm p}})>s({\bf x}_m))
I(s({\bf x}_m)\in[b,a]),
\label{eq:A_empirical}
\\
\widehat B_s(a,b)
:=
&
\frac{1}{N^{{\rm p}}(N^{{\rm p}}-1)}
\sum_{\substack{n,m=1\\n\ne m}}^{N^{{\rm p}}}
I(s({\bf x}_n^{{\rm p}})>s({\bf x}_m^{{\rm p}}))
I(s({\bf x}_m^{{\rm p}})\in[b,a]).
\label{eq:B_empirical}
\end{align}
Define
\begin{align}
Q_s(a,b)
:=
\frac{
A_s(a,b)
-
\pi B_s(a,b)
}{
(\beta-\alpha)(1-\pi)
}.
\label{eq:Q_population}
\end{align}
Using Eq.~\eqref{eq:neg_den}, we have
\begin{align}
Q_s(a,b)
=
\frac{1}{\beta-\alpha}
\mathbb{E}_{{\bf x}^{{\rm p}}\sim p^{{\rm p}}}
\mathbb{E}_{{\bf x}^{{\rm n}}\sim p^{{\rm n}}}
\left[
I(s({\bf x}^{{\rm p}})>s({\bf x}^{{\rm n}}))
I(s({\bf x}^{{\rm n}})\in[b,a])
\right].
\label{eq:Q_negative}
\end{align}
Therefore,
\begin{align}
Q_s(t_{\alpha,s},t_{\beta,s})
=
{\rm pAUC}_s(\alpha,\beta).
\label{eq:Q_true_pauc}
\end{align}

We use the following lemma.

\begin{lemma}
\label{lemma:weighted_interval}
Let $Z_1,\ldots,Z_n$ be i.i.d.\ copies of a random variable $Z$ and suppose that
\begin{align}
\sup_{t\in\mathbb{R}}
\left|
\frac{1}{n}
\sum_{i=1}^{n}I(Z_i>t)
-
\mathbb{P}(Z>t)
\right|
\le
\epsilon.
\label{eq:lemma_cdf_condition}
\end{align}
Then, for any non-increasing function
$g:\mathbb{R}\to[0,1]$,
\begin{align}
\sup_{b\le a}
\left|
\frac{1}{n}
\sum_{i=1}^{n}
g(Z_i)I(Z_i\in[b,a])
-
\mathbb{E}
\left[
g(Z)I(Z\in[b,a])
\right]
\right|
\le
2\epsilon,
\label{eq:weighted_interval_bound}
\end{align}
where the supremum is taken over $a,b\in\mathbb{R}$.
\end{lemma}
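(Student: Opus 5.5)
We will prove the lemma by writing the weighted interval sum as a Stieltjes-type integral against the empirical and population survival functions, and then integrating by parts (a layer-cake argument) so that only survival-function differences appear. Let $\widehat G(t):=\frac1n\sum_i I(Z_i>t)$ and $G(t):=\mathbb{P}(Z>t)$. The hypothesis gives $\sup_t|\widehat G(t)-G(t)|\le\epsilon$.

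Left limits are also controlled: since $\widehat G(t^-)-G(t^-)=\lim_{u\uparrow t}(\widehat G(u)-G(u))$, we have $|\widehat G(t^-)-G(t^-)|\le\epsilon$. Here $\widehat G(t^-)$ is the empirical $\mathbb{P}(Z\ge t)$, and similarly for $G$.

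The key step is a layer-cake representation of $g(z)I(z\in[b,a])$. Since $g$ is non-increasing with values in $[0,1]$, we can write
\begin{align}
g(z)=\int_0^1 I(g(z)>u)\,du .
\end{align}
For each $u$, the set $\{z:g(z)>u\}$ is a down-set, namely $(-\infty,c_u)$ or $(-\infty,c_u]$ for some $c_u\in[-\infty,+\infty]$. Hence $\{z: g(z)>u\}\cap[b,a]$ is an interval $J_u$ with left endpoint $b$ (closed) and right endpoint $\min(a,c_u)$, which is open or closed, or else $J_u$ is empty. Therefore
\begin{align}
g(z)I(z\in[b,a])=\int_0^1 I(z\in J_u)\,du .
\end{align}
Averaging over the sample and taking expectations, Fubini gives
\begin{align}
\frac1n\sum_i g(Z_i)I(Z_i\in[b,a])-\mathbb{E}[g(Z)I(Z\in[b,a])]=\int_0^1\big(\widehat P(J_u)-P(J_u)\big)du,
\end{align}
where $\widehat P$ and $P$ denote the empirical and population measures of $Z$.

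It remains to bound $|\widehat P(J)-P(J)|\le 2\epsilon$ for any interval $J$ of the form $[b,r]$ or $[b,r)$. We have $P([b,r])=G(b^-)-G(r)$ and $P([b,r))=G(b^-)-G(r^-)$, with the same identities for $\widehat P$ and $\widehat G$. Each difference is therefore a difference of two terms, each within $\epsilon$ by the previous paragraph, so the triangle inequality gives $2\epsilon$. The empty case is trivial, and $r=+\infty$ simply drops the second term. Integrating over $u\in[0,1]$ yields the bound $2\epsilon$ uniformly in $b\le a$, which proves Eq.~\eqref{eq:weighted_interval_bound}.

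The main obstacle is measurability and edge care: making sure the level sets of $g$ are genuinely intervals anchored at $-\infty$, which uses only monotonicity, not continuity. One must also handle the open-versus-closed right endpoint, and this is exactly why the left-limit extension of the hypothesis is needed. Both are routine once the layer-cake decomposition is in place.
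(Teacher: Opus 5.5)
Your proposal is correct and takes essentially the same route as the paper. Both arguments use the layer-cake identity $g(z)=\int_0^1 I(g(z)>u)\,du$, note that monotonicity of $g$ makes each $J_u=[b,a]\cap\{g>u\}$ an interval, bound interval probabilities by $2\epsilon$ via tail probabilities and their left limits, and integrate over $u$; you are slightly more explicit in pinning down $P([b,r])=G(b^-)-G(r)$ and $P([b,r))=G(b^-)-G(r^-)$.
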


\begin{proof}
Fix a realization of $Z_1,\ldots,Z_n$ satisfying
Eq.~\eqref{eq:lemma_cdf_condition}.
Then,
\begin{align}
\left|
\frac{1}{n}
\sum_{i=1}^{n}
I(Z_i>t)
-
\mathbb{P}(Z>t)
\right|
\le
\epsilon
\qquad
\text{for all } t\in\mathbb{R}.
\end{align}
The probability of any interval can be expressed using differences
and one-sided limits of the tail probabilities in
Eq.~\eqref{eq:lemma_cdf_condition}.
Therefore,
\begin{align}
\left|
\frac{1}{n}
\sum_{i=1}^{n}
I(Z_i\in J)
-
\mathbb{P}(Z\in J)
\right|
\le
2\epsilon
\label{eq:interval_probability_bound}
\end{align}
for any interval $J$.
For example, for a closed left endpoint, we use
\begin{align}
\mathbb{P}(Z\in[b,\infty))
&=
\lim_{t\uparrow b}\mathbb{P}(Z>t),
&
\frac{1}{n}
\sum_{i=1}^{n}
I(Z_i\in[b,\infty))
&=
\lim_{t\uparrow b}
\frac{1}{n}
\sum_{i=1}^{n}
I(Z_i>t).
\end{align}

Next, since $0\le g(z)\le 1$, we use
\begin{align}
g(z)
=
\int_0^1 I(g(z)>u)\,du.
\end{align}
Multiplying both sides by $I(z\in[b,a])$ gives
\begin{align}
g(z)I(z\in[b,a])
&=
\int_0^1
I(z\in[b,a])I(g(z)>u)
\,du \nonumber\\
&=
\int_0^1 I(z\in J_u)\,du,
\end{align}
where
\begin{align}
J_u
=
[b,a]\cap\{z:g(z)>u\}.
\end{align}
Since $g$ is non-increasing, $\{z:g(z)>u\}$ is a half-line,
i.e., a set of the form $(-\infty,c)$ or $(-\infty,c]$
(possibly empty or all of $\mathbb{R}$).
Therefore, $J_u$ is an interval, possibly empty, for every
$u\in[0,1]$.
By Eq.~\eqref{eq:interval_probability_bound},
\begin{align}
\left|
\frac{1}{n}
\sum_{i=1}^{n}
I(Z_i\in J_u)
-
\mathbb{P}(Z\in J_u)
\right|
\le
2\epsilon.
\end{align}

Applying the above identity to each $Z_i$ and to $Z$, we have
\begin{align}
&
\frac{1}{n}
\sum_{i=1}^{n}
g(Z_i)I(Z_i\in[b,a])
-
\mathbb{E}
\left[
g(Z)I(Z\in[b,a])
\right]
\nonumber\\
&=
\int_0^1
\left\{
\frac{1}{n}
\sum_{i=1}^{n}
I(Z_i\in J_u)
-
\mathbb{P}(Z\in J_u)
\right\}
\,du.
\end{align}
Therefore,
\begin{align}
&
\left|
\frac{1}{n}
\sum_{i=1}^{n}
g(Z_i)I(Z_i\in[b,a])
-
\mathbb{E}
\left[
g(Z)I(Z\in[b,a])
\right]
\right|
\nonumber\\
&\le
\int_0^1
\left|
\frac{1}{n}
\sum_{i=1}^{n}
I(Z_i\in J_u)
-
\mathbb{P}(Z\in J_u)
\right|
\,du
\nonumber\\
&\le
\int_0^1 2\epsilon\,du
=
2\epsilon.
\end{align}
Since this holds for every $b\le a$, taking the supremum proves
Eq.~\eqref{eq:weighted_interval_bound}.
\end{proof}

\begin{lemma}
\label{lemma:hard_pairwise_terms}
Suppose that
$\Delta_N\le\epsilon_N$ and
$\Delta_{N^{{\rm p}}}\le\epsilon_{N^{{\rm p}}}$.
Then, for every $s\in\mathcal{S}$ and
$a,b\in\mathbb{R}$ with $b\le a$,
\begin{align}
\left|
\widehat A_s(a,b)-A_s(a,b)
\right|
&\le
2\epsilon_N+\epsilon_{N^{{\rm p}}},
\label{eq:A_error_bound}
\\
\left|
\widehat B_s(a,b)-B_s(a,b)
\right|
&\le
3\frac{N^{{\rm p}}}{N^{{\rm p}}-1}
\epsilon_{N^{{\rm p}}}
+
\frac{1}{N^{{\rm p}}-1}.
\label{eq:B_error_bound}
\end{align}
\end{lemma}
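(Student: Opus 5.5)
We bound $A$ and $B$ by splitting each error into a part handled by Lemma~\ref{lemma:weighted_interval} and a part handled by the uniform survival bounds. For $\widehat A_s(a,b)$, define the positive survival functions $G(z):=\overline F_s^{{\rm p}}(z)=\mathbb{P}(s({\bf x}^{{\rm p}})>z)$ and $\widehat G(z):=\widehat{\overline F}_s^{{\rm p}}(z)$. Summing over $n$ first gives
\begin{align}
\widehat A_s(a,b)=\frac{1}{N}\sum_{m=1}^{N}\widehat G(s({\bf x}_m))I(s({\bf x}_m)\in[b,a]),
\qquad
A_s(a,b)=\mathbb{E}_{{\bf x}\sim p}\left[G(s({\bf x}))I(s({\bf x})\in[b,a])\right].
\end{align}
We add and subtract $\frac{1}{N}\sum_m G(s({\bf x}_m))I(s({\bf x}_m)\in[b,a])$. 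The first difference is at most $\sup_z|\widehat G(z)-G(z)|\le\Delta_{N^{{\rm p}}}\le\epsilon_{N^{{\rm p}}}$, because the indicator is at most one. For the second difference, $G$ is a non-increasing $[0,1]$-valued function and the $s({\bf x}_m)$ are i.i.d.\ with survival error at most $\Delta_N\le\epsilon_N$. Lemma~\ref{lemma:weighted_interval} with $Z=s({\bf x})$ and $g=G$ then bounds it by $2\epsilon_N$. Together these give Eq.~\eqref{eq:A_error_bound}. The key point is that $G$ is deterministic, so no independence issue arises when applying the lemma.

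For $\widehat B_s(a,b)$, the difficulty is that both sums run over the same positive sample. We first remove the $n\neq m$ restriction. Since $I(s({\bf x}_m^{{\rm p}})>s({\bf x}_m^{{\rm p}}))=0$, the diagonal terms vanish, and therefore
\begin{align}
\widehat B_s(a,b)=\frac{N^{{\rm p}}}{N^{{\rm p}}-1}\widetilde B,\qquad
\widetilde B:=\frac{1}{N^{{\rm p}}}\sum_{m=1}^{N^{{\rm p}}}\widehat G(s({\bf x}_m^{{\rm p}}))I(s({\bf x}_m^{{\rm p}})\in[b,a]).
\end{align}
The same two-step argument as for $A$ applies to $\widetilde B$. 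Replacing $\widehat G$ by $G$ costs at most $\epsilon_{N^{{\rm p}}}$. Lemma~\ref{lemma:weighted_interval} applied to the positive sample with $g=G$ costs at most $2\epsilon_{N^{{\rm p}}}$. Hence $|\widetilde B-B_s|\le3\epsilon_{N^{{\rm p}}}$. This step holds deterministically on the event $\Delta_{N^{{\rm p}}}\le\epsilon_{N^{{\rm p}}}$, even though $\widehat G$ and the evaluation points share the same sample, because the replacement of $\widehat G$ by $G$ uses only the uniform sup bound.

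Finally, write $c=N^{{\rm p}}/(N^{{\rm p}}-1)$. Then
\begin{align}
|\widehat B_s-B_s|\le c|\widetilde B-B_s|+(c-1)B_s\le 3c\,\epsilon_{N^{{\rm p}}}+\frac{1}{N^{{\rm p}}-1},
\end{align}
using $B_s\le1$. This is Eq.~\eqref{eq:B_error_bound}.

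The main obstacle is the $B$ term, since its U-statistic structure rules out a direct independent-sample argument. The resolution is the factorization through $\widehat G$ combined with the observation that the diagonal terms vanish exactly; the normalization mismatch then produces the $\frac{1}{N^{{\rm p}}-1}$ term. A further check is that Lemma~\ref{lemma:weighted_interval} applies with $a=+\infty$, which is immediate because its proof only uses half-lines and intervals.
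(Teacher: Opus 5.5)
Your proof is correct and follows the paper's argument step for step. For $\widehat A_s$ you use the survival-function rewriting, pay $\epsilon_{N^{\rm p}}$ to replace $\widehat{\overline F}_s^{\rm p}$ by $\overline F_s^{\rm p}$, and get $2\epsilon_N$ from Lemma~\ref{lemma:weighted_interval}; for $\widehat B_s$ you use the V-statistic with its vanishing diagonal, the $3\epsilon_{N^{\rm p}}$ bound, and the $(c-1)B_s\le 1/(N^{\rm p}-1)$ normalization correction, exactly as the paper does.
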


\begin{proof}
Using Eqs.~\eqref{eq:theory_survival_p} and
\eqref{eq:A_population}, we have
\begin{align}
A_s(a,b)
=
\mathbb{E}_{{\bf x}\sim p}
\left[
\overline F_s^{{\rm p}}(s({\bf x}))
I(s({\bf x})\in[b,a])
\right].
\label{eq:A_survival_form}
\end{align}
Similarly, Eq.~\eqref{eq:A_empirical} can be written as
\begin{align}
\widehat A_s(a,b)
=
\frac{1}{N}
\sum_{m=1}^{N}
\widehat{\overline F}_s^{{\rm p}}(s({\bf x}_m))
I(s({\bf x}_m)\in[b,a]).
\label{eq:A_empirical_survival_form}
\end{align}
Therefore,
\begin{align}
&
\left|
\widehat A_s(a,b)-A_s(a,b)
\right|
\nonumber\\
&\le
\left|
\frac{1}{N}
\sum_{m=1}^{N}
\left[
\widehat{\overline F}_s^{{\rm p}}(s({\bf x}_m))
-
\overline F_s^{{\rm p}}(s({\bf x}_m))
\right]
I(s({\bf x}_m)\in[b,a])
\right|
\nonumber\\
&\quad+
\left|
\frac{1}{N}
\sum_{m=1}^{N}
\overline F_s^{{\rm p}}(s({\bf x}_m))
I(s({\bf x}_m)\in[b,a])
-
A_s(a,b)
\right|.
\label{eq:A_error_decomposition}
\end{align}
By the definition of $\Delta_{N^{{\rm p}}}$, the first term is at
most $\epsilon_{N^{{\rm p}}}$.
Since $\overline F_s^{{\rm p}}(z)$ is non-increasing in $z$,
Lemma~\ref{lemma:weighted_interval}, applied to the marginal scores
$Z_m=s({\bf x}_m)$ together with
$\Delta_N\le\epsilon_N$, bounds the second term by $2\epsilon_N$.
This proves Eq.~\eqref{eq:A_error_bound}.

For $\widehat B_s(a,b)$, we first introduce the corresponding
V-statistic so that the pairwise sum can be written using the empirical
positive score survival probability:
\begin{align}
\widehat B_s^{\rm V}(a,b)
:=
\frac{1}{(N^{{\rm p}})^2}
\sum_{n=1}^{N^{{\rm p}}}
\sum_{m=1}^{N^{{\rm p}}}
I(s({\bf x}_n^{{\rm p}})>s({\bf x}_m^{{\rm p}}))
I(s({\bf x}_m^{{\rm p}})\in[b,a]).
\end{align}
Using Eq.~\eqref{eq:theory_survival_p_emp}, this can be written as
\begin{align}
\widehat B_s^{\rm V}(a,b)
=
\frac{1}{N^{{\rm p}}}
\sum_{m=1}^{N^{{\rm p}}}
\widehat{\overline F}_s^{{\rm p}}
(s({\bf x}_m^{{\rm p}}))
I(s({\bf x}_m^{{\rm p}})\in[b,a]).
\label{eq:B_V_survival_form}
\end{align}
Also,
\begin{align}
B_s(a,b)
=
\mathbb{E}_{{\bf x}^{{\rm p}}\sim p^{{\rm p}}}
\left[
\overline F_s^{{\rm p}}(s({\bf x}^{{\rm p}}))
I(s({\bf x}^{{\rm p}})\in[b,a])
\right].
\label{eq:B_survival_form}
\end{align}
As in the case of $\widehat A_s(a,b)$,
replacing $\widehat{\overline F}_s^{{\rm p}}$ in
Eq.~\eqref{eq:B_V_survival_form} with
$\overline F_s^{{\rm p}}$ gives an error of at most
$\epsilon_{N^{{\rm p}}}$.
Applying Lemma~\ref{lemma:weighted_interval} to the positive scores
gives an additional error of at most
$2\epsilon_{N^{{\rm p}}}$.
Therefore,
\begin{align}
\left|
\widehat B_s^{\rm V}(a,b)
-
B_s(a,b)
\right|
\le
3\epsilon_{N^{{\rm p}}}.
\label{eq:B_V_error_bound}
\end{align}

The diagonal terms in $\widehat B_s^{\rm V}(a,b)$ are zero since
\begin{align}
I(s({\bf x}_n^{{\rm p}})>s({\bf x}_n^{{\rm p}}))=0.
\end{align}
Therefore,
\begin{align}
\widehat B_s(a,b)
=
\frac{N^{{\rm p}}}{N^{{\rm p}}-1}
\widehat B_s^{\rm V}(a,b).
\label{eq:hard_B_UV}
\end{align}
Using Eq.~\eqref{eq:B_V_error_bound}, we have
\begin{align}
\left|
\widehat B_s(a,b)-B_s(a,b)
\right|
&\le
\frac{N^{{\rm p}}}{N^{{\rm p}}-1}
\left|
\widehat B_s^{\rm V}(a,b)-B_s(a,b)
\right|
+
\frac{B_s(a,b)}{N^{{\rm p}}-1}
\nonumber\\
&\le
3\frac{N^{{\rm p}}}{N^{{\rm p}}-1}
\epsilon_{N^{{\rm p}}}
+
\frac{1}{N^{{\rm p}}-1},
\end{align}
where we used $0\le B_s(a,b)\le1$.
This proves Eq.~\eqref{eq:B_error_bound}.
\end{proof}

\begin{theorem}
\label{thm:pauc_finite_sample}
Suppose that Assumption~\ref{assump:no_point_mass} holds,
$0\le\alpha<\beta<1$, $N^{{\rm p}}\ge2$, and
\begin{align}
\Delta_N
&\le
\epsilon_N,
&
\Delta_{N^{{\rm p}}}
&\le
\epsilon_{N^{{\rm p}}}.
\end{align}
Then, simultaneously for every $s\in\mathcal{S}$,
\begin{align}
\left|
{\widehat {\rm pAUC}}_s(\alpha,\beta)
-
{\rm pAUC}_s(\alpha,\beta)
\right|
\le
B_{N,N^{{\rm p}}},
\label{eq:pauc_finite_sample_bound}
\end{align}
where
\begin{align}
B_{N,N^{{\rm p}}}
:=
\frac{1}{
(\beta-\alpha)(1-\pi)
}
\left[
4\epsilon_N
+
\left(
1+2\pi
+
3\pi\frac{N^{{\rm p}}}{N^{{\rm p}}-1}
\right)
\epsilon_{N^{{\rm p}}}
+
\frac{\pi}{N^{{\rm p}}-1}
\right].
\label{eq:hard_pauc_bound}
\end{align}
\end{theorem}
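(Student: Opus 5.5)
The plan is a two-term decomposition through the population functional $Q_s$ of Eq.~\eqref{eq:Q_population}. By Eq.~\eqref{eq:emp_pauc_pu} and the definitions of $\widehat A_s,\widehat B_s$, the estimator is
\begin{align}
{\widehat {\rm pAUC}}_s(\alpha,\beta)=\widehat Q_s({\hat t}_{\alpha,s},{\hat t}_{\beta,s}),
\qquad
\widehat Q_s(a,b):=\frac{\widehat A_s(a,b)-\pi\widehat B_s(a,b)}{(\beta-\alpha)(1-\pi)}.
\end{align}
By Eq.~\eqref{eq:Q_true_pauc}, ${\rm pAUC}_s(\alpha,\beta)=Q_s(t_{\alpha,s},t_{\beta,s})$. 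I would use the triangle inequality
\begin{align}
\bigl|{\widehat {\rm pAUC}}_s-{\rm pAUC}_s\bigr|
\le
\bigl|\widehat Q_s({\hat t}_{\alpha},{\hat t}_{\beta})-Q_s({\hat t}_{\alpha},{\hat t}_{\beta})\bigr|
+
\bigl|Q_s({\hat t}_{\alpha},{\hat t}_{\beta})-Q_s(t_{\alpha},t_{\beta})\bigr|,
\end{align}
which is legitimate because ${\hat t}_{\beta,s}\le{\hat t}_{\alpha,s}$, as shown before Theorem~\ref{thm:threshold_fpr}. The first term is the \emph{statistical error at fixed thresholds} and the second is the \emph{threshold error}.

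For the first term, Lemma~\ref{lemma:hard_pairwise_terms} holds uniformly over all $b\le a$. It therefore applies at the data-dependent pair $({\hat t}_{\alpha},{\hat t}_{\beta})$ with no extra union bound. This gives
\begin{align}
\frac{1}{(\beta-\alpha)(1-\pi)}\Bigl[2\epsilon_N+\epsilon_{N^{\rm p}}+3\pi\tfrac{N^{\rm p}}{N^{\rm p}-1}\epsilon_{N^{\rm p}}+\tfrac{\pi}{N^{\rm p}-1}\Bigr].
\end{align}
The threshold ${\hat t}_\alpha$ is always finite, since at the largest PU score $\widehat{\rm FPR}=0\le\alpha$. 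If $\alpha=0$ and $t_\alpha=+\infty$, the interval convention stated after Assumption~\ref{assump:no_point_mass} applies.

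For the second term, I would use the negative-density form Eq.~\eqref{eq:Q_negative}. Since $0\le I(s({\bf x}^{\rm p})>s({\bf x}^{\rm n}))\le1$, the difference of the two $Q_s$ values is at most $\frac{1}{\beta-\alpha}$ times the $p^{\rm n}$-mass of the symmetric difference of $[{\hat t}_\beta,{\hat t}_\alpha]$ and $[t_\beta,t_\alpha]$. The symmetric difference of two intervals lies in the union of the interval between the two left endpoints and the interval between the two right endpoints. By Assumption~\ref{assump:no_point_mass}, ${\rm FPR}_s$ is continuous and $p^{\rm n}(s({\bf x}^{\rm n})\in[u,v])={\rm FPR}_s(u)-{\rm FPR}_s(v)$. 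Hence this mass is at most
\begin{align}
|{\rm FPR}_s({\hat t}_\alpha)-{\rm FPR}_s(t_\alpha)|+|{\rm FPR}_s({\hat t}_\beta)-{\rm FPR}_s(t_\beta)|.
\end{align}

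Continuity and the definition of ${\rm FPR}^{-1}$ as an infimum give ${\rm FPR}_s(t_\gamma)=\gamma$; for $\alpha=0$ with $t_\alpha=+\infty$ this holds by the convention ${\rm FPR}_s(+\infty)=0$. Theorem~\ref{thm:threshold_fpr} then bounds each summand by $\epsilon_{\rm FPR}$. So the second term is at most
\begin{align}
\frac{2\epsilon_{\rm FPR}}{\beta-\alpha}=\frac{2\epsilon_N+2\pi\epsilon_{N^{\rm p}}}{(\beta-\alpha)(1-\pi)}.
\end{align}

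Adding the two bounds gives coefficient $4$ on $\epsilon_N$ and $1+2\pi+3\pi N^{\rm p}/(N^{\rm p}-1)$ on $\epsilon_{N^{\rm p}}$, plus $\pi/(N^{\rm p}-1)$. This is exactly $B_{N,N^{\rm p}}$. Every ingredient is uniform in $s$, so the bound holds simultaneously for all $s\in\mathcal{S}$.

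I expect the main obstacle to be the threshold step. It needs the claim ${\rm FPR}_s(t_\gamma)=\gamma$, the careful handling of the $\alpha=0$ case with $t_\alpha=+\infty$, and the fact that interval masses under $p^{\rm n}$ are differences of ${\rm FPR}$ values with no endpoint atoms. All three come from Assumption~\ref{assump:no_point_mass}.
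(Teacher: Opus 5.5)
Your proposal is correct and follows essentially the same route as the paper. It uses the same triangle inequality through $Q_s({\hat t}_{\alpha,s},{\hat t}_{\beta,s})$, bounds the statistical term by Lemma~\ref{lemma:hard_pairwise_terms} applied uniformly at the data-dependent thresholds, and bounds the threshold term by the negative-density form of $Q_s$ together with ${\rm FPR}_s(t_{\gamma,s})=\gamma$ and Theorem~\ref{thm:threshold_fpr}, arriving at the identical constant $B_{N,N^{\rm p}}$ (your symmetric-difference argument just spells out a step the paper states in one line).
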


\begin{proof}
We decompose the estimation error as
\begin{align}
&
\left|
{\widehat {\rm pAUC}}_s(\alpha,\beta)
-
{\rm pAUC}_s(\alpha,\beta)
\right|
\nonumber\\
&\le
\left|
{\widehat {\rm pAUC}}_s(\alpha,\beta)
-
Q_s({\hat t}_{\alpha,s},{\hat t}_{\beta,s})
\right|
\nonumber\\
&\quad+
\left|
Q_s({\hat t}_{\alpha,s},{\hat t}_{\beta,s})
-
Q_s(t_{\alpha,s},t_{\beta,s})
\right|.
\label{eq:pauc_error_decomposition}
\end{align}

The estimated thresholds
${\hat t}_{\alpha,s}$ and ${\hat t}_{\beta,s}$ are selected from the
PU instance scores and are therefore finite.
Thus, Lemma~\ref{lemma:hard_pairwise_terms} can be applied at these
data-dependent thresholds.
Using Eqs.~\eqref{eq:A_error_bound} and \eqref{eq:B_error_bound},
the first term in Eq.~\eqref{eq:pauc_error_decomposition} is bounded by
\begin{align}
&
\left|
{\widehat {\rm pAUC}}_s(\alpha,\beta)
-
Q_s({\hat t}_{\alpha,s},{\hat t}_{\beta,s})
\right|
\nonumber\\
&\le
\frac{1}{
(\beta-\alpha)(1-\pi)
}
\left[
2\epsilon_N
+
\epsilon_{N^{{\rm p}}}
+
3\pi
\frac{N^{{\rm p}}}{N^{{\rm p}}-1}
\epsilon_{N^{{\rm p}}}
+
\frac{\pi}{N^{{\rm p}}-1}
\right].
\label{eq:hard_empirical_error}
\end{align}
The bound in Lemma~\ref{lemma:hard_pairwise_terms} is uniform over
the finite endpoints $a$ and $b$.
Thus, it also holds when the estimated thresholds and the empirical
pAUC are calculated from the same instances.

For the second term, changing either endpoint of the score interval
affects Eq.~\eqref{eq:Q_negative} only through negative instances whose
scores are between the original and changed endpoints.
Since the pairwise indicator is bounded by one,
\begin{align}
&
\left|
Q_s({\hat t}_{\alpha,s},{\hat t}_{\beta,s})
-
Q_s(t_{\alpha,s},t_{\beta,s})
\right|
\nonumber\\
&\le
\frac{1}{\beta-\alpha}
\left[
\left|
{\rm FPR}_s({\hat t}_{\alpha,s})
-
{\rm FPR}_s(t_{\alpha,s})
\right|
+
\left|
{\rm FPR}_s({\hat t}_{\beta,s})
-
{\rm FPR}_s(t_{\beta,s})
\right|
\right].
\end{align}
Under Assumption~\ref{assump:no_point_mass},
${\rm FPR}_s(t)$ is continuous.
Thus, for finite population thresholds,
\begin{align}
{\rm FPR}_s(t_{\gamma,s})=\gamma.
\end{align}
Together with the convention
${\rm FPR}_s(+\infty)=0$ when $t_{\alpha,s}=+\infty$, we have
\begin{align}
{\rm FPR}_s(t_{\alpha,s})=\alpha,
\qquad
{\rm FPR}_s(t_{\beta,s})=\beta.
\end{align}
Then, Theorem~\ref{thm:threshold_fpr} gives
\begin{align}
&
\left|
Q_s({\hat t}_{\alpha,s},{\hat t}_{\beta,s})
-
Q_s(t_{\alpha,s},t_{\beta,s})
\right|
\nonumber\\
&\le
\frac{1}{\beta-\alpha}
\left[
\left|
{\rm FPR}_s({\hat t}_{\alpha,s})-\alpha
\right|
+
\left|
{\rm FPR}_s({\hat t}_{\beta,s})-\beta
\right|
\right]
\nonumber\\
&\le
\frac{2\epsilon_{\rm FPR}}{\beta-\alpha}.
\label{eq:hard_threshold_error}
\end{align}
Combining Eqs.~\eqref{eq:hard_empirical_error} and
\eqref{eq:hard_threshold_error}, and substituting
Eq.~\eqref{eq:epsilon_fpr}, proves
Eq.~\eqref{eq:pauc_finite_sample_bound}.
\end{proof}

For a finite score-function class with $|\mathcal{S}|=M$, the DKW
bounds in Section~\ref{subsec:theory_uniform} and a union bound show
that the conditions of Theorem~\ref{thm:pauc_finite_sample} hold
simultaneously with probability at least $1-\delta$.
Therefore,
\begin{align}
&
\sup_{s\in\mathcal{S}}
\left|
{\widehat {\rm pAUC}}_s(\alpha,\beta)
-
{\rm pAUC}_s(\alpha,\beta)
\right|
\nonumber\\
&=
O\left(
\frac{1}{
(\beta-\alpha)(1-\pi)
}
\left[
\sqrt{\frac{\log(M/\delta)}{N}}
+
\sqrt{\frac{\log(M/\delta)}{N^{{\rm p}}}}
+
\frac{1}{N^{{\rm p}}}
\right]
\right)
\label{eq:hard_finite_class_rate}
\end{align}
with probability at least $1-\delta$.
Thus, the empirical pAUC estimator is uniformly
consistent as $N,N^{{\rm p}}\to\infty$.
The same conclusion applies to infinite score-function classes when
the corresponding uniform errors in
Eq.~\eqref{eq:uniform_conditions} converge to zero.
Since the bound is uniform over $s\in\mathcal{S}$, it also applies to
a score function selected from $\mathcal{S}$ using the training data.

\subsection{Effect of Class-prior Estimation Error}
\label{subsec:theory_class_prior}

We finally analyze the effect of using an inaccurate class-prior at the
population level.
Let $\widetilde\pi\in[0,1)$ be the class-prior value used in the
proposed method, while $\pi$ is the true class-prior.
Replacing $\pi$ with $\widetilde\pi$ in Eq.~\eqref{eq:fpr_pu} gives
\begin{align}
{\rm FPR}_{\widetilde\pi,s}(t)
:=
\frac{
\overline F_s(t)
-
\widetilde\pi\overline F_s^{{\rm p}}(t)
}{
1-\widetilde\pi
}.
\label{eq:fpr_inaccurate_prior}
\end{align}

\begin{proposition}
\label{prop:class_prior_fpr}
For every $s\in\mathcal{S}$ and $t\in\mathbb{R}$,
\begin{align}
&
{\rm FPR}_{\widetilde\pi,s}(t)
-
{\rm FPR}_s(t)
\nonumber\\
&=
\frac{
\pi-\widetilde\pi
}{
1-\widetilde\pi
}
\left[
{\rm TPR}_s(t)
-
{\rm FPR}_s(t)
\right].
\label{eq:class_prior_fpr_identity}
\end{align}
Consequently,
\begin{align}
\sup_{s\in\mathcal{S},\,t\in\mathbb{R}}
\left|
{\rm FPR}_{\widetilde\pi,s}(t)
-
{\rm FPR}_s(t)
\right|
\le
\epsilon_\pi,
\qquad
\epsilon_\pi
:=
\frac{
|\widetilde\pi-\pi|
}{
1-\widetilde\pi
}.
\label{eq:class_prior_fpr_bound}
\end{align}
\end{proposition}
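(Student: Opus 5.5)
The identity is pure algebra. Start from Eq.~\eqref{eq:theory_fpr_pop}, which expresses the marginal survival probability as
\begin{align}
\overline F_s(t)=(1-\pi){\rm FPR}_s(t)+\pi\overline F_s^{{\rm p}}(t).
\end{align}
Here ${\rm TPR}_s(t)=\overline F_s^{{\rm p}}(t)$ by Eqs.~\eqref{eq:tpr} and \eqref{eq:theory_survival_p}.

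Substitute this expression into the definition \eqref{eq:fpr_inaccurate_prior}. This gives
\begin{align}
{\rm FPR}_{\widetilde\pi,s}(t)
=\frac{(1-\pi){\rm FPR}_s(t)+(\pi-\widetilde\pi){\rm TPR}_s(t)}{1-\widetilde\pi}.
\end{align}
Subtract ${\rm FPR}_s(t)=\frac{(1-\widetilde\pi){\rm FPR}_s(t)}{1-\widetilde\pi}$. The coefficient of ${\rm FPR}_s(t)$ in the numerator becomes $(1-\pi)-(1-\widetilde\pi)=\widetilde\pi-\pi$. The difference is therefore
\begin{align}
\frac{(\pi-\widetilde\pi)\left[{\rm TPR}_s(t)-{\rm FPR}_s(t)\right]}{1-\widetilde\pi},
\end{align}
which is exactly Eq.~\eqref{eq:class_prior_fpr_identity}. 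This step needs only $\widetilde\pi<1$, so that we can divide by $1-\widetilde\pi$.

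For the bound, take absolute values in the identity. Both ${\rm TPR}_s(t)$ and ${\rm FPR}_s(t)$ are probabilities and lie in $[0,1]$. Hence $|{\rm TPR}_s(t)-{\rm FPR}_s(t)|\le 1$, and each difference is at most $|\widetilde\pi-\pi|/(1-\widetilde\pi)=\epsilon_\pi$. This holds for every $s\in\mathcal{S}$ and $t\in\mathbb{R}$, so taking the supremum gives Eq.~\eqref{eq:class_prior_fpr_bound}.

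There is no real obstacle. The only point to check is that ${\rm FPR}_s(t)$ here means the true population FPR, which lies in $[0,1]$. This follows because Eq.~\eqref{eq:theory_fpr_pop} equals the negative-class survival probability via Eq.~\eqref{eq:neg_den}. No clipping is involved, since the statement is at the population level.
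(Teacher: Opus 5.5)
Your proposal is correct and follows essentially the same route as the paper: write $\overline F_s(t)=\pi{\rm TPR}_s(t)+(1-\pi){\rm FPR}_s(t)$, substitute into the definition of ${\rm FPR}_{\widetilde\pi,s}(t)$, subtract ${\rm FPR}_s(t)$, and bound $|{\rm TPR}_s(t)-{\rm FPR}_s(t)|\le 1$. The only cosmetic difference is that you obtain this decomposition by rearranging Eq.~\eqref{eq:theory_fpr_pop}, whereas the paper reads it off directly from the mixture in Eq.~\eqref{eq:mar_train2}; the two are equivalent.
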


\begin{proof}
From Eq.~\eqref{eq:mar_train2},
\begin{align}
\overline F_s(t)
=
\pi{\rm TPR}_s(t)
+
(1-\pi){\rm FPR}_s(t).
\end{align}
Substituting this into Eq.~\eqref{eq:fpr_inaccurate_prior} gives
\begin{align}
{\rm FPR}_{\widetilde\pi,s}(t)
=
\frac{
(\pi-\widetilde\pi){\rm TPR}_s(t)
+
(1-\pi){\rm FPR}_s(t)
}{
1-\widetilde\pi
}.
\end{align}
Subtracting ${\rm FPR}_s(t)$ gives
Eq.~\eqref{eq:class_prior_fpr_identity}.
Since ${\rm TPR}_s(t),{\rm FPR}_s(t)\in[0,1]$,
Eq.~\eqref{eq:class_prior_fpr_bound} follows.
\end{proof}

For $b\in\mathbb{R}$ and
$a\in\mathbb{R}\cup\{+\infty\}$ with $b\le a$, define
\begin{align}
Q_{\widetilde\pi,s}(a,b)
:=
\frac{
A_s(a,b)
-
\widetilde\pi B_s(a,b)
}{
(\beta-\alpha)(1-\widetilde\pi)
}.
\label{eq:Q_inaccurate_prior}
\end{align}
Let
\begin{align}
\widetilde t_{\gamma,s}
:=
\inf
\left\{
t\in\mathbb{R}
:
{\rm FPR}_{\widetilde\pi,s}(t)
\le
\gamma
\right\},
\qquad
\gamma\in\{\alpha,\beta\},
\label{eq:threshold_inaccurate_prior}
\end{align}
where we set $\inf\emptyset=+\infty$.
Since $\alpha<\beta$, every threshold satisfying
${\rm FPR}_{\widetilde\pi,s}(t)\le\alpha$ also satisfies
${\rm FPR}_{\widetilde\pi,s}(t)\le\beta$.
Therefore,
\begin{align}
\widetilde t_{\beta,s}
\le
\widetilde t_{\alpha,s}.
\end{align}

\begin{theorem}
\label{thm:class_prior_pauc}
Suppose that Assumption~\ref{assump:no_point_mass} holds and
$0\le\alpha<\beta<1$.
Then, for every $s\in\mathcal{S}$,
\begin{align}
&
\left|
Q_{\widetilde\pi,s}
(
\widetilde t_{\alpha,s},
\widetilde t_{\beta,s}
)
-
{\rm pAUC}_s(\alpha,\beta)
\right|
\nonumber\\
&\le
\frac{
3|\widetilde\pi-\pi|
}{
(\beta-\alpha)(1-\widetilde\pi)
}.
\label{eq:class_prior_pauc_bound}
\end{align}
\end{theorem}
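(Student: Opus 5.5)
The plan is to split the error through the true-prior functional $Q_s$ of Eq.~\eqref{eq:Q_population}, evaluated at the inaccurate-prior thresholds:
\begin{align*}
\left|Q_{\widetilde\pi,s}(\widetilde t_{\alpha,s},\widetilde t_{\beta,s})-{\rm pAUC}_s(\alpha,\beta)\right|
\le
\left|Q_{\widetilde\pi,s}(\widetilde t_{\alpha,s},\widetilde t_{\beta,s})-Q_s(\widetilde t_{\alpha,s},\widetilde t_{\beta,s})\right|
+
\left|Q_s(\widetilde t_{\alpha,s},\widetilde t_{\beta,s})-Q_s(t_{\alpha,s},t_{\beta,s})\right|,
\end{align*}
and then use Eq.~\eqref{eq:Q_true_pauc} to identify the last $Q_s$ with ${\rm pAUC}_s(\alpha,\beta)$. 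I expect the first term to contribute $\epsilon_\pi/(\beta-\alpha)$ and the second $2\epsilon_\pi/(\beta-\alpha)$. Together these give the factor $3$.

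For the first term I will work uniformly in $b\le a$. Write $N_s(a,b)$ for the negative pairwise expectation appearing in Eq.~\eqref{eq:Q_negative}. Decomposing $p=\pi p^{\rm p}+(1-\pi)p^{\rm n}$ gives $A_s=\pi B_s+(1-\pi)N_s$. Hence
\begin{align*}
A_s-\widetilde\pi B_s=(\pi-\widetilde\pi)B_s+(1-\pi)N_s .
\end{align*}
Since $Q_s=N_s/(\beta-\alpha)$, a short computation yields
\begin{align*}
Q_{\widetilde\pi,s}(a,b)-Q_s(a,b)=\frac{(\pi-\widetilde\pi)\,(B_s(a,b)-N_s(a,b))}{(\beta-\alpha)(1-\widetilde\pi)} .
\end{align*}
Both $B_s$ and $N_s$ lie in $[0,1]$, so $|B_s-N_s|\le1$ and the term is at most $\epsilon_\pi/(\beta-\alpha)$. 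This mirrors the identity in Proposition~\ref{prop:class_prior_fpr}.

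For the second term I reuse the endpoint-perturbation argument from the proof of Theorem~\ref{thm:pauc_finite_sample}. It bounds the term by $\frac{1}{\beta-\alpha}\sum_{\gamma}|{\rm FPR}_s(\widetilde t_{\gamma,s})-{\rm FPR}_s(t_{\gamma,s})|$. Under Assumption~\ref{assump:no_point_mass}, together with the convention ${\rm FPR}_s(+\infty)=0$, this equals $\frac{1}{\beta-\alpha}\sum_\gamma|{\rm FPR}_s(\widetilde t_{\gamma,s})-\gamma|$. It then remains to show $|{\rm FPR}_s(\widetilde t_{\gamma,s})-\gamma|\le\epsilon_\pi$, which is the population analogue of Theorem~\ref{thm:threshold_fpr}, with Proposition~\ref{prop:class_prior_fpr} playing the role of Theorem~\ref{thm:fpr_uniform}.

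That analogue is the main obstacle. The reason is that ${\rm FPR}_{\widetilde\pi,s}$ need not be continuous, since the positive scores may have atoms, and it need not be monotone. I would proceed as follows. The survival functions $\overline F_s$ and $\overline F_s^{\rm p}$ are right-continuous, so if $\widetilde t_{\gamma,s}$ is finite, taking points of the defining set decreasing to the infimum gives ${\rm FPR}_{\widetilde\pi,s}(\widetilde t_{\gamma,s})\le\gamma$. Proposition~\ref{prop:class_prior_fpr} then gives the upper bound ${\rm FPR}_s(\widetilde t_{\gamma,s})\le\gamma+\epsilon_\pi$. For every $t<\widetilde t_{\gamma,s}$ we have ${\rm FPR}_{\widetilde\pi,s}(t)>\gamma$, hence ${\rm FPR}_s(t)>\gamma-\epsilon_\pi$. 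Letting $t\uparrow\widetilde t_{\gamma,s}$ and using continuity of ${\rm FPR}_s$ gives the lower bound. If $\widetilde t_{\gamma,s}=+\infty$, the upper bound $0\le\gamma+\epsilon_\pi$ is trivial. For the lower bound, ${\rm FPR}_s(t)>\gamma-\epsilon_\pi$ holds for all $t$, and ${\rm FPR}_s(t)\to0$ as $t\to\infty$ gives $\gamma-\epsilon_\pi\le0={\rm FPR}_s(+\infty)$. Summing the three contributions yields Eq.~\eqref{eq:class_prior_pauc_bound}.
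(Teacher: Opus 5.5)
Your proposal is correct and follows the paper's proof step for step. It uses the same identity $Q_{\widetilde\pi,s}(a,b)-Q_s(a,b)=(\pi-\widetilde\pi)(B_s(a,b)-C_s(a,b))/((\beta-\alpha)(1-\widetilde\pi))$ (your $N_s$ is the paper's $C_s$), the same endpoint-perturbation bound, and the same right-continuity/continuity argument for $|{\rm FPR}_s(\widetilde t_{\gamma,s})-\gamma|\le\epsilon_\pi$; your limit argument in the $+\infty$ case is a harmless variant of the paper's remark that this case forces $\gamma=0$. The one detail you omit, which the paper states, is that $\widetilde t_{\gamma,s}=-\infty$ cannot occur because ${\rm FPR}_{\widetilde\pi,s}(t)\to1$ as $t\to-\infty$ and $\gamma<1$. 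This one line is where the hypothesis $\beta<1$ enters, and without it your split into finite versus $+\infty$ is not exhaustive.
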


\begin{proof}
We fix $s$.
From Proposition~\ref{prop:class_prior_fpr},
\begin{align}
\sup_{t\in\mathbb{R}}
\left|
{\rm FPR}_{\widetilde\pi,s}(t)
-
{\rm FPR}_s(t)
\right|
\le
\epsilon_\pi.
\label{eq:class_prior_uniform_fixed_s}
\end{align}

Since
${\rm FPR}_{\widetilde\pi,s}(t)\to1$ as $t\to-\infty$
and $\gamma<1$, there exists a finite $T$ such that
${\rm FPR}_{\widetilde\pi,s}(t)>\gamma$ for all $t<T$.
Therefore, $\widetilde t_{\gamma,s}=-\infty$ cannot occur.
Suppose first that $\widetilde t_{\gamma,s}$ is finite.
Since the survival probabilities
$\overline F_s(t)$ and $\overline F_s^{{\rm p}}(t)$ are
right-continuous in $t$,
${\rm FPR}_{\widetilde\pi,s}(t)$ is also right-continuous.
Therefore, from the definition of
$\widetilde t_{\gamma,s}$,
\begin{align}
{\rm FPR}_{\widetilde\pi,s}
(\widetilde t_{\gamma,s})
\le
\gamma.
\end{align}
Also, for every $t<\widetilde t_{\gamma,s}$,
\begin{align}
{\rm FPR}_{\widetilde\pi,s}(t)>\gamma.
\end{align}
Together with
Eq.~\eqref{eq:class_prior_uniform_fixed_s}, these inequalities give
\begin{align}
{\rm FPR}_s(\widetilde t_{\gamma,s})
&\le
\gamma+\epsilon_\pi,
\\
{\rm FPR}_s(t)
&>
\gamma-\epsilon_\pi
\qquad
\text{for }t<\widetilde t_{\gamma,s}.
\end{align}
Assumption~\ref{assump:no_point_mass} ensures that
${\rm FPR}_s(t)$ is continuous.
Therefore, taking
$t\uparrow\widetilde t_{\gamma,s}$ in the second inequality gives
\begin{align}
{\rm FPR}_s(\widetilde t_{\gamma,s})
\ge
\gamma-\epsilon_\pi.
\end{align}
Thus,
\begin{align}
\left|
{\rm FPR}_s(\widetilde t_{\gamma,s})
-
\gamma
\right|
\le
\epsilon_\pi.
\label{eq:class_prior_threshold_bound}
\end{align}
If $\widetilde t_{\gamma,s}=+\infty$, then necessarily $\gamma=0$.
With the convention
${\rm FPR}_s(+\infty)=0$,
Eq.~\eqref{eq:class_prior_threshold_bound} also holds.
Therefore, Eq.~\eqref{eq:class_prior_threshold_bound} holds for
$\gamma\in\{\alpha,\beta\}$.

As in the proof of Theorem~\ref{thm:pauc_finite_sample},
Eq.~\eqref{eq:Q_negative} and the fact that the pairwise indicator is
bounded by one give
\begin{align}
&
\left|
Q_s(
\widetilde t_{\alpha,s},
\widetilde t_{\beta,s}
)
-
{\rm pAUC}_s(\alpha,\beta)
\right|
\nonumber\\
&\le
\frac{1}{\beta-\alpha}
\left[
\left|
{\rm FPR}_s(\widetilde t_{\alpha,s})-\alpha
\right|
+
\left|
{\rm FPR}_s(\widetilde t_{\beta,s})-\beta
\right|
\right]
\nonumber\\
&\le
\frac{2\epsilon_\pi}{\beta-\alpha}.
\label{eq:class_prior_threshold_error}
\end{align}

Next, from Eq.~\eqref{eq:mar_train2},
\begin{align}
A_s(a,b)
=
\pi B_s(a,b)
+
(1-\pi)C_s(a,b),
\end{align}
where
\begin{align}
C_s(a,b)
:=
&
\mathbb{E}_{{\bf x}^{{\rm p}}\sim p^{{\rm p}}}
\mathbb{E}_{{\bf x}^{{\rm n}}\sim p^{{\rm n}}}
\left[
I(s({\bf x}^{{\rm p}})>s({\bf x}^{{\rm n}}))
I(s({\bf x}^{{\rm n}})\in[b,a])
\right].
\end{align}
Substituting
$A_s(a,b)=\pi B_s(a,b)+(1-\pi)C_s(a,b)$
into $Q_{\widetilde\pi,s}$ and $Q_s$ gives
\begin{align}
Q_{\widetilde\pi,s}(a,b)
&=
\frac{
(\pi-\widetilde\pi)B_s(a,b)
+
(1-\pi)C_s(a,b)
}{
(\beta-\alpha)(1-\widetilde\pi)
},
\\
Q_s(a,b)
&=
\frac{C_s(a,b)}{\beta-\alpha}.
\end{align}
Therefore,
\begin{align}
&
Q_{\widetilde\pi,s}(a,b)
-
Q_s(a,b)
\nonumber\\
&=
\frac{
\pi-\widetilde\pi
}{
(\beta-\alpha)(1-\widetilde\pi)
}
\left[
B_s(a,b)-C_s(a,b)
\right].
\end{align}
Since $B_s(a,b),C_s(a,b)\in[0,1]$,
\begin{align}
\left|
Q_{\widetilde\pi,s}(a,b)
-
Q_s(a,b)
\right|
\le
\frac{\epsilon_\pi}{\beta-\alpha}.
\label{eq:class_prior_rewriting_error}
\end{align}
Applying Eq.~\eqref{eq:class_prior_rewriting_error} at
\begin{align}
(a,b)
=
(\widetilde t_{\alpha,s},\widetilde t_{\beta,s})
\end{align}
and combining it with
Eq.~\eqref{eq:class_prior_threshold_error} gives
\begin{align}
&
\left|
Q_{\widetilde\pi,s}
(
\widetilde t_{\alpha,s},
\widetilde t_{\beta,s}
)
-
{\rm pAUC}_s(\alpha,\beta)
\right|
\nonumber\\
&\le
\frac{3\epsilon_\pi}{\beta-\alpha}
=
\frac{
3|\widetilde\pi-\pi|
}{
(\beta-\alpha)(1-\widetilde\pi)
},
\end{align}
which proves Eq.~\eqref{eq:class_prior_pauc_bound}.
\end{proof}

Theorem~\ref{thm:class_prior_pauc} shows that the population-level
error caused by an inaccurate class-prior is controlled by
$|\widetilde\pi-\pi|/(1-\widetilde\pi)$.

\section{Experimental Settings}
\label{subsec:settings}

For MNIST, FMNIST, SUSY, CODRNA, IJCNN, EPSILON,
we used a three-layered feed-forward neural network with ReLU activations for the classifier (score function),
where each hidden layer consisted of $128$ units.
For SVHN and CIFAR10, we used a convolutional neural network composed of two convolutional blocks followed by a three-layer feed-forward neural network for the classifier (score function). 
The first (second) convolutional block comprised a 6 (16) filter $5 \times 5$ convolution, the ReLU activation, and a $2 \times 2$ max-pooling layer.
The numbers of nodes in the two hidden layers were
$120$ and $84$, and ReLU activations were used.
For all comparison methods, the same neural network architecture was used.
The PU mini-batch sizes $P$ and $U$ were set to $50$ and $1,000$, respectively.
For all methods, we used the Adam optimizer
\citep{kingma2014adam}
with a learning rate of $10^{-3}$. 
The maximum number of epochs was $200$.
The loss with validation PU data was used for early-stopping to mitigate overfitting.
All methods were implemented using PyTorch
\citep{paszke2017automatic},
and all experiments were conducted on a Linux server with an Intel Xeon CPU and A100 GPU.

\section{Additional Experimental Results}

\subsection{Results with $\alpha=0.1$ and $\beta=0.2$}
\label{apend:adres}

Table \ref{table:all_results_3} shows the test average pAUCs and their standard deviations with $\alpha=0.1$, $\beta=0.2$.
The proposed method performed well in many cases.
Compared to case $\alpha=0$ (Tables \ref{table:all_results_1} and \ref{table:all_results_2}),
PAUC and 2step tended to perform well.
We describe possible reasons for their good performance below.
These methods treat unlabeled data (or pseudo N data) whose scores fall within a certain range defined by $\alpha$ and $\beta$ 
as negatives for training. In general, unlabeled data with high scores are more likely to be positive. 
Therefore, when $\alpha=0$, such high score positive data tend to fall into the range, making it difficult for these methods to perform strongly. 
In contrast, when $\alpha>0$, these positive data in the unlabeled data are less likely to be included in the range, which can lead to better performance. 
Nevertheless, the proposed method performed comparably to or better than these methods.
Note that the proposed method outperformed these methods by a large margin in the case of $\alpha=0$ as shown in Tables \ref{table:all_results_1} and \ref{table:all_results_2}.

\begin{table*}[t]
\caption{Test average pAUCs and their standard deviations with $\alpha=0.1$, $\beta=0.2$ over different labeled positive data sizes $N^{\rm p} = 30$, $50$, and $100$.
Values in bold are not statistically different at the $5\%$ level from the best performing method in each row according to a paired t-test.
`\# Best' row represents the number of results of each method that are the best or comparable to it.}
\label{table:all_results_3}
\centering
\centering
\scalebox{0.78}{
\begin{tabular}{l r r r r r r r}
\hline
Data & $\pi$ & Ours & NNPU & PUAUC & PAUC & PUpAUCh & 2step \\
\hline
MNIST   & 0.1 & \textbf{0.838 $\pm$ 0.043} & 0.698 $\pm$ 0.126  & 0.823 $\pm$ 0.041  & \textbf{0.839 $\pm$ 0.048}  & \textbf{0.839 $\pm$ 0.042}  & 0.826 $\pm$ 0.036  \\
        & 0.2 & \textbf{0.821 $\pm$ 0.048} & 0.776 $\pm$ 0.063 & 0.800 $\pm$ 0.046  & 0.776 $\pm$ 0.041 & \textbf{0.818 $\pm$ 0.050} & 0.808 $\pm$ 0.050  \\
FMNIST  & 0.1 & 0.958 $\pm$ 0.021 & 0.795 $\pm$ 0.011 & 0.955 $\pm$ 0.020 & \textbf{0.964 $\pm$ 0.019} & 0.958 $\pm$ 0.021 & \textbf{0.963 $\pm$ 0.018} \\
        & 0.2 & \textbf{0.954 $\pm$ 0.024} & 0.806 $\pm$ 0.034 & 0.943 $\pm$ 0.022 & 0.910 $\pm$ 0.037 & 0.951 $\pm$ 0.025 & \textbf{0.958 $\pm$ 0.018} \\
SVHN    & 0.1 & 0.423 $\pm$ 0.098 & 0.154 $\pm$ 0.006 & 0.402 $\pm$ 0.092 & \textbf{0.451 $\pm$ 0.104} & 0.377 $\pm$ 0.092 & 0.410 $\pm$ 0.097 \\
        & 0.2 & 0.360 $\pm$ 0.104 & 0.154 $\pm$ 0.007 & 0.353 $\pm$ 0.074 & \textbf{0.391 $\pm$ 0.085} & 0.258 $\pm$ 0.063 &  0.351 $\pm$ 0.081 \\
CIFAR10 & 0.1 & \textbf{0.744 $\pm$ 0.063} & 0.127 $\pm$ 0.077 & 0.731 $\pm$ 0.069 & 0.721 $\pm$ 0.062 & \textbf{0.743 $\pm$ 0.056} & \textbf{0.752 $\pm$ 0.063} \\
        & 0.2 & \textbf{0.722 $\pm$ 0.071} & 0.162 $\pm$ 0.118 & 0.698 $\pm$ 0.080 & 0.623 $\pm$ 0.081 & 0.697 $\pm$ 0.069 & \textbf{0.712 $\pm$ 0.081} \\
SUSY    & 0.1 & \textbf{0.647 $\pm$ 0.029} & 0.557 $\pm$ 0.074 & 0.629 $\pm$ 0.035 & \textbf{0.651 $\pm$ 0.024} & \textbf{0.642 $\pm$ 0.030} & 0.632 $\pm$ 0.034 \\
        & 0.2 & \textbf{0.629 $\pm$ 0.038} & 0.578 $\pm$ 0.047 & 0.610 $\pm$ 0.043 & 0.606 $\pm$ 0.044 & \textbf{0.628 $\pm$ 0.035} & 0.617 $\pm$ 0.040 \\
CODRNA  & 0.1 & 0.959 $\pm$ 0.025 & 0.454 $\pm$ 0.319 & 0.951 $\pm$ 0.030 & 0.949 $\pm$ 0.033 & 0.959 $\pm$ 0.027 & \textbf{0.965 $\pm$ 0.023} \\        
        & 0.2 & 0.946 $\pm$ 0.037 & 0.892 $\pm$ 0.056 & 0.935 $\pm$ 0.037 & 0.853 $\pm$ 0.061 & 0.949 $\pm$ 0.031 & \textbf{0.959 $\pm$ 0.028} \\
IJCNN   & 0.1 & \textbf{0.792 $\pm$ 0.100} & 0.469 $\pm$ 0.108 & 0.778 $\pm$ 0.086 & \textbf{0.803 $\pm$ 0.078} & 0.760 $\pm$ 0.125 & \textbf{0.811 $\pm$ 0.086} \\
        & 0.2 & 0.730 $\pm$ 0.136 & 0.636 $\pm$ 0.098 & \textbf{0.750 $\pm$ 0.094} & 0.714 $\pm$ 0.086 & 0.620 $\pm$ 0.217 & \textbf{0.769 $\pm$ 0.101} \\
EPSILON   & 0.1 & \textbf{0.411 $\pm$ 0.065} & 0.177 $\pm$ 0.008 & \textbf{0.404 $\pm$ 0.067} & \textbf{0.410 $\pm$ 0.060} & \textbf{0.406 $\pm$ 0.068} & \textbf{0.399 $\pm$ 0.079}  \\
        & 0.2 & \textbf{0.360 $\pm$ 0.063} & 0.184 $\pm$ 0.015 & \textbf{0.368 $\pm$ 0.062} & \textbf{0.363 $\pm$ 0.048} & 0.295 $\pm$ 0.066 & \textbf{0.368 $\pm$ 0.071} \\
\hline
\# Best   & & 10 & 0 & 3 & 8 & 6 & 10 \\
\hline
\end{tabular}
}
\end{table*}

\subsection{Results with $\pi=0.01$ and $\pi=0.3$}
\label{apend:smaller_larger_priors}

In the main paper, we evaluated the proposed method with $\pi=0.1$ and $\pi=0.2$.
Here, we investigated the performance of the proposed method under smaller and larger class-prior values.
Table~\ref{table:all_results_smaller_larger} shows the average test pAUCs and their standard deviations with $\alpha=0$ , $\beta=0.2$. The proposed method performed well.
These results show that the proposed method is effective across a wide range of class-prior values.

\begin{table*}[t]
\caption{The results with $\pi=0.01$ and $\pi=0.3$: test average pAUCs and their standard deviations with $\alpha=0$, $\beta=0.2$ over different labeled positive data sizes $N^{\rm p} = 30$, $50$, and $100$.
Values in bold are not statistically different at the $5\%$ level from the best performing method in each row according to a paired t-test.
`\# Best' row represents the number of results of each method that are the best or comparable to it.}
\label{table:all_results_smaller_larger}
\centering
\scalebox{0.78}{
\begin{tabular}{l r r r r r r r}
\hline
Data & $\pi$ & Ours & NNPU & PUAUC & PAUC & PUpAUCh & 2step \\
\hline
MNIST   & 0.01 & \textbf{0.799 $\pm$ 0.054} & 0.338 $\pm$ 0.009  & 0.768 $\pm$ 0.055  & 0.794 $\pm$ 0.052  & \textbf{0.799 $\pm$ 0.053}  & 0.792 $\pm$ 0.055  \\
FMNIST  & 0.01 & 0.949 $\pm$ 0.021 & 0.731 $\pm$ 0.112 & 0.945 $\pm$ 0.021 & 0.947 $\pm$ 0.022 & 0.949 $\pm$ 0.021 & \textbf{0.954 $\pm$ 0.017} \\
SVHN    & 0.01 & \textbf{0.411 $\pm$ 0.089} & 0.105 $\pm$ 0.005 & 0.363 $\pm$ 0.089 & \textbf{0.410 $\pm$ 0.089} & \textbf{0.412 $\pm$ 0.083} & 0.398 $\pm$ 0.087 \\
CIFAR10 & 0.01 & 0.661 $\pm$ 0.067 & 0.081 $\pm$ 0.053 & 0.634 $\pm$ 0.079 & 0.662 $\pm$ 0.072 & 0.656 $\pm$ 0.070 & \textbf{0.679 $\pm$ 0.063} \\
SUSY    & 0.01 & \textbf{0.571 $\pm$ 0.022} & 0.295 $\pm$ 0.038 & 0.528 $\pm$ 0.041 & \textbf{0.570 $\pm$ 0.022} & \textbf{0.569 $\pm$ 0.022} & 0.552 $\pm$ 0.033 \\
CODRNA  & 0.01 & \textbf{0.882 $\pm$ 0.025} & 0.124 $\pm$ 0.031 & 0.875 $\pm$ 0.032 & \textbf{0.883 $\pm$ 0.024} & \textbf{0.882 $\pm$ 0.024} & \textbf{0.881 $\pm$ 0.027} \\
IJCNN   & 0.01 & \textbf{0.766 $\pm$ 0.070} & 0.188 $\pm$ 0.023 & 0.714 $\pm$ 0.085 & \textbf{0.765 $\pm$ 0.067} & 0.763 $\pm$ 0.070 & 0.753 $\pm$ 0.077 \\
EPSILON   & 0.01 & \textbf{0.353 $\pm$ 0.055} & 0.131 $\pm$ 0.005 & 0.324 $\pm$ 0.059 & 0.350 $\pm$ 0.052 & \textbf{0.354 $\pm$ 0.054} & 0.338 $\pm$ 0.064  \\
\hline
MNIST   & 0.3 & \textbf{0.688 $\pm$ 0.068} & 0.665 $\pm$ 0.067  & 0.663 $\pm$ 0.051  & 0.565 $\pm$ 0.061  & \textbf{0.689 $\pm$ 0.060}  & 0.664 $\pm$ 0.052  \\
FMNIST  & 0.3 & \textbf{0.923 $\pm$ 0.029} & 0.824 $\pm$ 0.070 & 0.883 $\pm$ 0.042 & 0.747 $\pm$ 0.083 & \textbf{0.922 $\pm$ 0.029} & 0.888 $\pm$ 0.042 \\
SVHN    & 0.3 & \textbf{0.239 $\pm$ 0.079} & 0.105 $\pm$ 0.005 & \textbf{0.233 $\pm$ 0.056} & \textbf{0.237 $\pm$ 0.045} & 0.145 $\pm$ 0.032 & \textbf{0.229 $\pm$ 0.056} \\
CIFAR10 & 0.3 & \textbf{0.544 $\pm$ 0.079} & 0.229 $\pm$ 0.148 & 0.513 $\pm$ 0.083 & 0.373 $\pm$ 0.058 & \textbf{0.522 $\pm$ 0.080} & 0.518 $\pm$ 0.085 \\
SUSY    & 0.3 & \textbf{0.502 $\pm$ 0.050} & 0.465 $\pm$ 0.053 & 0.478 $\pm$ 0.051 & 0.366 $\pm$ 0.051 & 0.483 $\pm$ 0.043 & 0.474 $\pm$ 0.049 \\
CODRNA  & 0.3 & \textbf{0.824 $\pm$ 0.045} & 0.804 $\pm$ 0.050 & 0.802 $\pm$ 0.060 & 0.623 $\pm$ 0.094 & \textbf{0.819 $\pm$ 0.053} & 0.808 $\pm$ 0.060 \\
IJCNN   & 0.3 & \textbf{0.580 $\pm$ 0.134} & 0.518 $\pm$ 0.136 & \textbf{0.593 $\pm$ 0.087} & 0.480 $\pm$ 0.066 & 0.415 $\pm$ 0.226 & \textbf{0.593 $\pm$ 0.134} \\
EPSILON   & 0.3 & 0.231 $\pm$ 0.054 & 0.180 $\pm$ 0.057 & \textbf{0.245 $\pm$ 0.044} & 0.213 $\pm$ 0.032 & 0.169 $\pm$ 0.046 & \textbf{0.245 $\pm$ 0.046}  \\
\hline
\# Best   & & 13 & 0 & 3 & 5 & 9 & 6 \\
\hline
\end{tabular}
}
\end{table*}

\subsection{Results using the Logistic Loss}
\label{apend:logistic}

In the main paper, we used the sigmoid function as the surrogate function in Eq. \eqref{eq:emp_pauc_pu_dif}.
Here, we evaluated the (p)AUC-based methods including the proposed method using the logistic loss as the surrogate function.
Table \ref{table:all_results_logistic} shows the results.
The results are similar to those obtained with the sigmoid function in Table \ref{table:all_results_2}, 
and the proposed method still outperforms the other methods.
These results suggest that the proposed method is effective with multiple surrogate functions.

\begin{table*}[t]
\caption{The results using the logistic loss: test average pAUCs and their standard deviations with $\alpha=0$, $\beta=0.2$ over different labeled positive data sizes $N^{\rm p} = 30$, $50$, and $100$.
Values in bold are not statistically different at the $5\%$ level from the best performing method in each row according to a paired t-test.
`\# Best' row represents the number of results of each method that are the best or comparable to it.}
\label{table:all_results_logistic}
\centering
\scalebox{0.78}{
\begin{tabular}{l r r r r r r}
\hline
Data & $\pi$ & Ours & PUAUC & PAUC & PUpAUCh & 2step \\
\hline
MNIST   & 0.1 & \textbf{0.759 $\pm$ 0.058}  & 0.727 $\pm$ 0.047  & 0.743 $\pm$ 0.057  & 0.752 $\pm$ 0.055  & 0.732 $\pm$ 0.048  \\
        & 0.2 & \textbf{0.727 $\pm$ 0.066} & 0.699 $\pm$ 0.050  & 0.652 $\pm$ 0.060 & \textbf{0.719 $\pm$ 0.057} & 0.700 $\pm$ 0.049  \\
FMNIST  & 0.1 & \textbf{0.945 $\pm$ 0.023} & 0.915 $\pm$ 0.034 & 0.933 $\pm$ 0.027 & \textbf{0.944 $\pm$ 0.024} & 0.929 $\pm$ 0.029 \\
        & 0.2 & \textbf{0.939 $\pm$ 0.024} & 0.907 $\pm$ 0.036 & 0.854 $\pm$ 0.057 & \textbf{0.936 $\pm$ 0.026} & 0.915 $\pm$ 0.034 \\
SVHN    & 0.1 & \textbf{0.346 $\pm$ 0.095} & 0.302 $\pm$ 0.070 & 0.328 $\pm$ 0.071 & 0.330 $\pm$ 0.096 & 0.308 $\pm$ 0.073 \\
        & 0.2 & \textbf{0.285 $\pm$ 0.091}  & 0.262 $\pm$ 0.061 & \textbf{0.268 $\pm$ 0.050} & 0.227 $\pm$ 0.069 & 0.259 $\pm$ 0.062 \\
CIFAR10 & 0.1 & \textbf{0.613 $\pm$ 0.070} & 0.592 $\pm$ 0.072 & 0.552 $\pm$ 0.078 & 0.602 $\pm$ 0.066 & \textbf{0.606 $\pm$ 0.068} \\
        & 0.2 & \textbf{0.579 $\pm$ 0.072} & 0.559 $\pm$ 0.079 & 0.452 $\pm$ 0.069 & \textbf{0.572 $\pm$ 0.065} & \textbf{0.564 $\pm$ 0.079} \\
SUSY    & 0.1 & \textbf{0.551 $\pm$ 0.033} & 0.516 $\pm$ 0.042 & 0.514 $\pm$ 0.048 & 0.541 $\pm$ 0.028 & 0.521 $\pm$ 0.040 \\
        & 0.2 & \textbf{0.525 $\pm$ 0.044} & 0.493 $\pm$ 0.043 & 0.427 $\pm$ 0.053 & 0.509 $\pm$ 0.042 & 0.495 $\pm$ 0.042 \\
CODRNA  & 0.1 & \textbf{0.853 $\pm$ 0.034}  & 0.849 $\pm$ 0.043 & 0.824 $\pm$ 0.048 & \textbf{0.854 $\pm$ 0.035} & \textbf{0.856 $\pm$ 0.038} \\
        & 0.2 & \textbf{0.837 $\pm$ 0.040} & 0.828 $\pm$ 0.049 & 0.707 $\pm$ 0.078 & \textbf{0.839 $\pm$ 0.039} & \textbf{0.834 $\pm$ 0.047} \\
IJCNN   & 0.1 & \textbf{0.712 $\pm$ 0.081} & 0.663 $\pm$ 0.089 & 0.685 $\pm$ 0.071 & 0.688 $\pm$ 0.100 & 0.680 $\pm$ 0.085 \\
        & 0.2 & \textbf{0.652 $\pm$ 0.122} & 0.628 $\pm$ 0.092 & 0.572 $\pm$ 0.069 & 0.595 $\pm$ 0.145 & \textbf{0.633 $\pm$ 0.094} \\
EPSILON   & 0.1 & \textbf{0.303 $\pm$ 0.058} & \textbf{0.297 $\pm$ 0.055} & 0.290 $\pm$ 0.047 & \textbf{0.305 $\pm$ 0.054} & 0.295 $\pm$ 0.056  \\
        & 0.2 & \textbf{0.272 $\pm$ 0.049} & \textbf{0.269 $\pm$ 0.048} & 0.249 $\pm$ 0.040 & \textbf{0.258 $\pm$ 0.054} & \textbf{0.269 $\pm$ 0.055} \\
\hline
\# Best   & & 16 & 2 & 1 & 8 & 6 \\
\hline
\end{tabular}
}
\end{table*}

\subsection{Results with Different Numbers of Unlabeled Training Data}
\label{apend:tru}

Figure \ref{fig:result_undata} shows the average test pAUCs with their standard errors when changing the number of unlabeled training data $N$.
The proposed method tended to perform better as $N$ increased and consistently performed well compared to other methods across different $N$.

\begin{figure*}[t]
    \centering
    \begin{minipage}{0.24\linewidth}
        \centering
        \includegraphics[width=3.3cm]{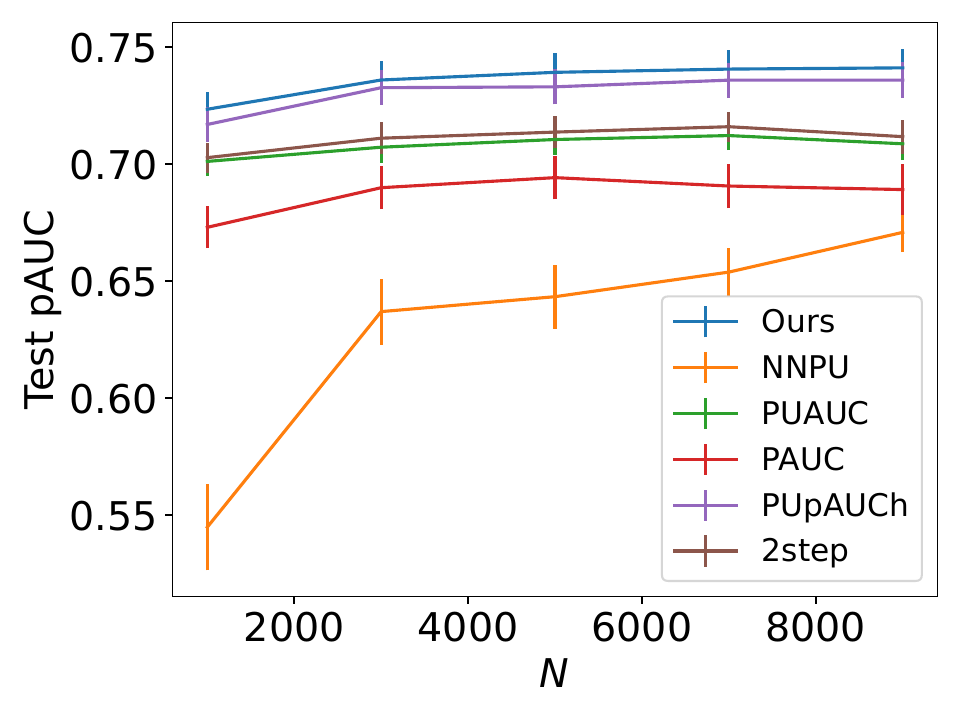}
        \subcaption{MNIST}
    \end{minipage}
    \hfill
    \begin{minipage}{0.24\linewidth}
        \centering
        \includegraphics[width=3.3cm]{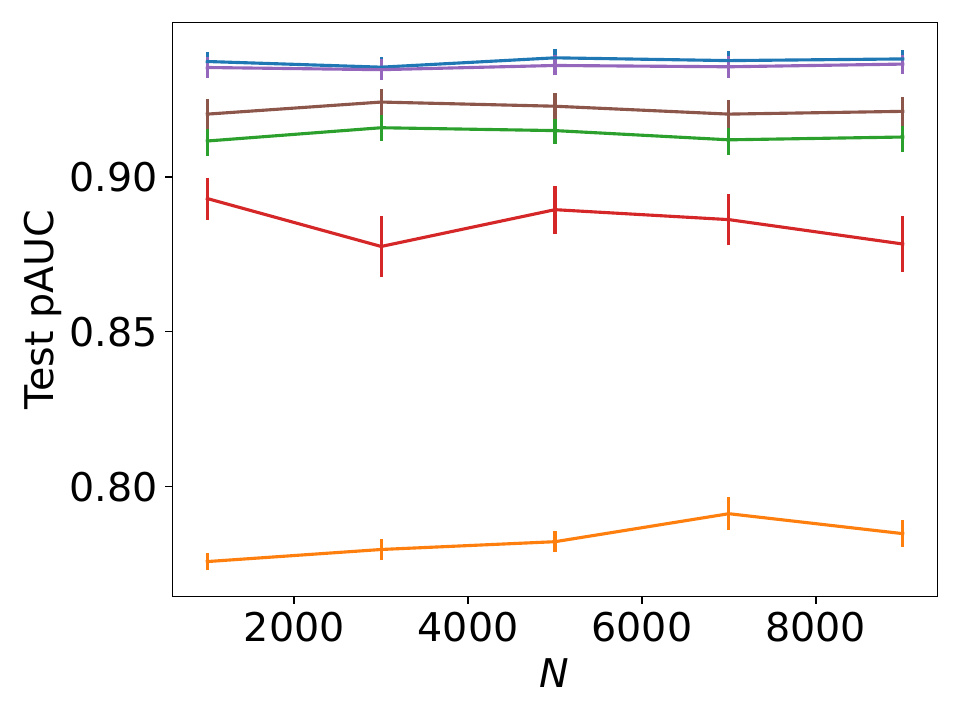}
        \subcaption{FMNIST}
    \end{minipage}
    \hfill
    \begin{minipage}{0.24\linewidth}
        \centering
        \includegraphics[width=3.3cm]{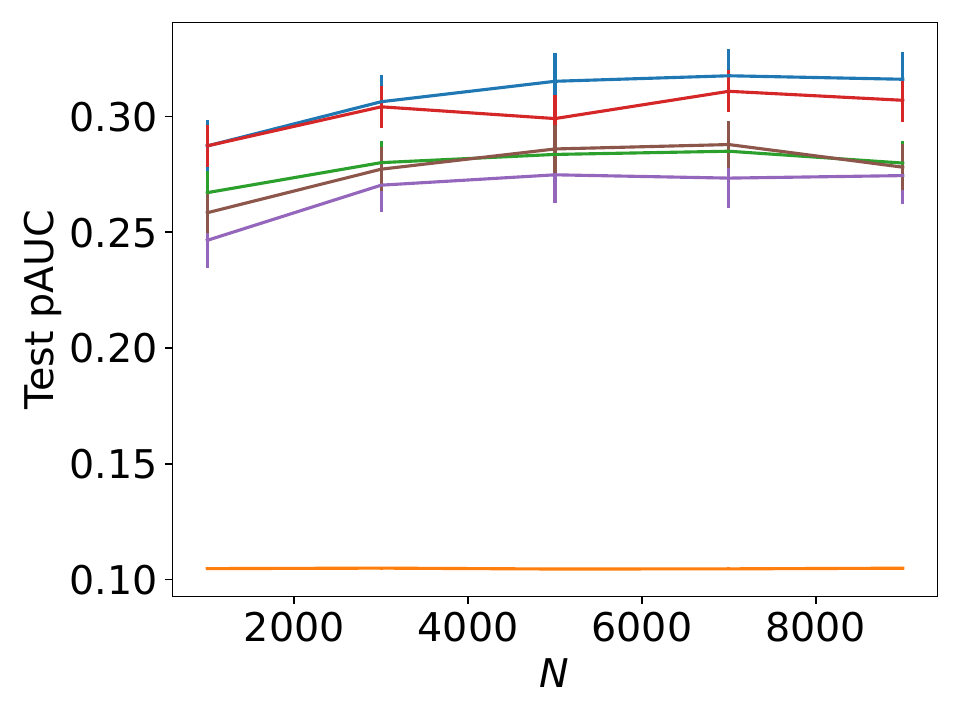}
        \subcaption{SVHN}
    \end{minipage}
    \hfill
    \begin{minipage}{0.24\linewidth}
        \centering
        \includegraphics[width=3.3cm]{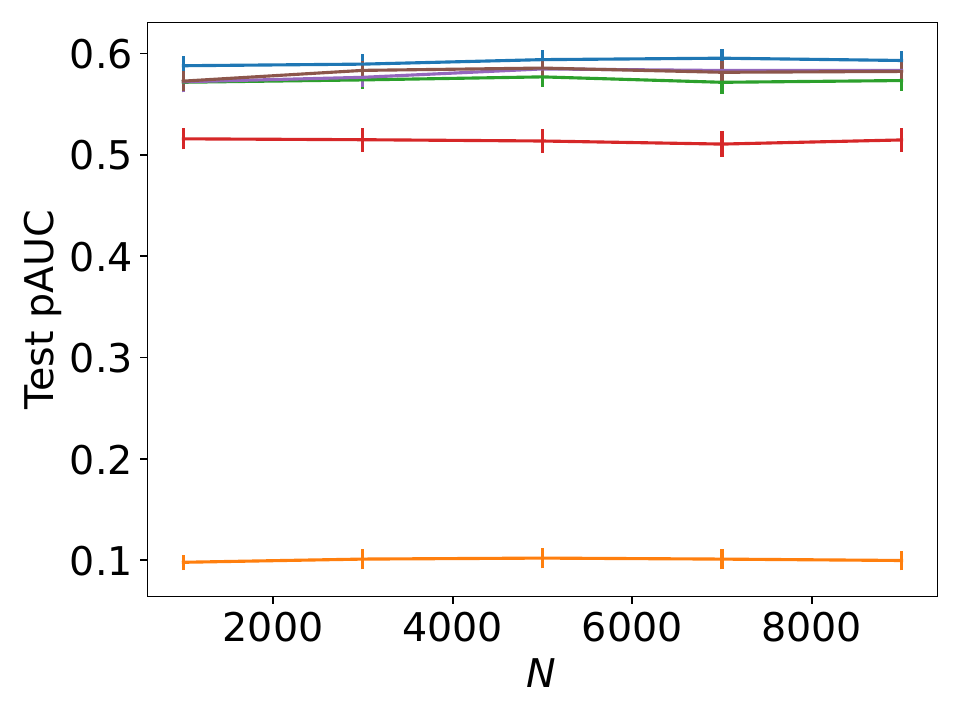}
        \subcaption{CIFAR10}
    \end{minipage}
    \hfill
    \begin{minipage}{0.24\linewidth}
        \centering
        \includegraphics[width=3.3cm]{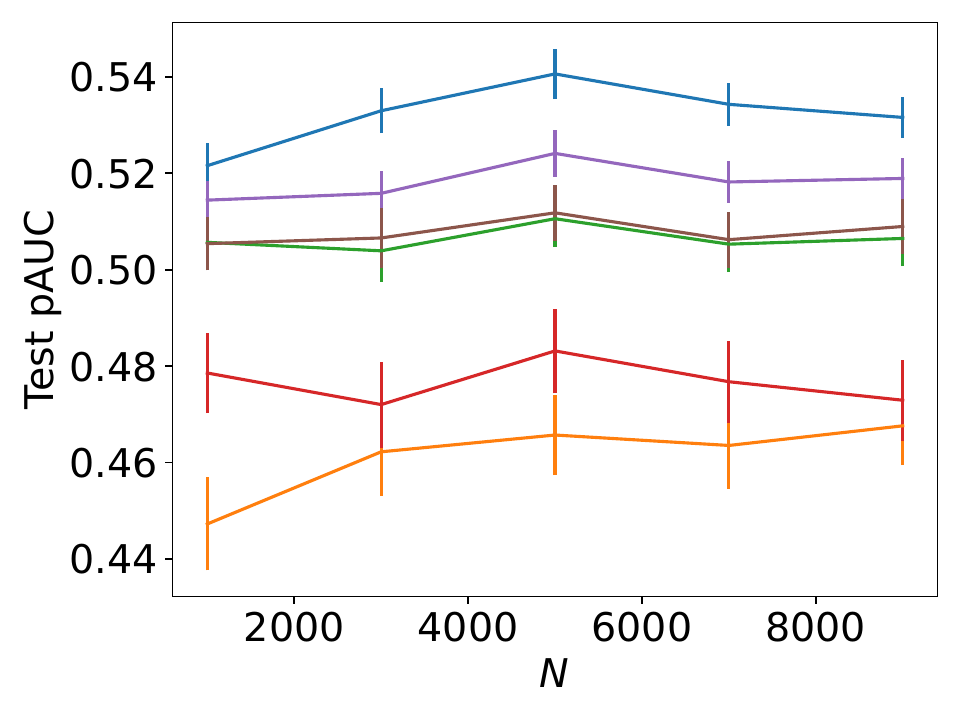}
        \subcaption{SUSY}
    \end{minipage}
    \hfill
    \begin{minipage}{0.24\linewidth}
        \centering
        \includegraphics[width=3.3cm]{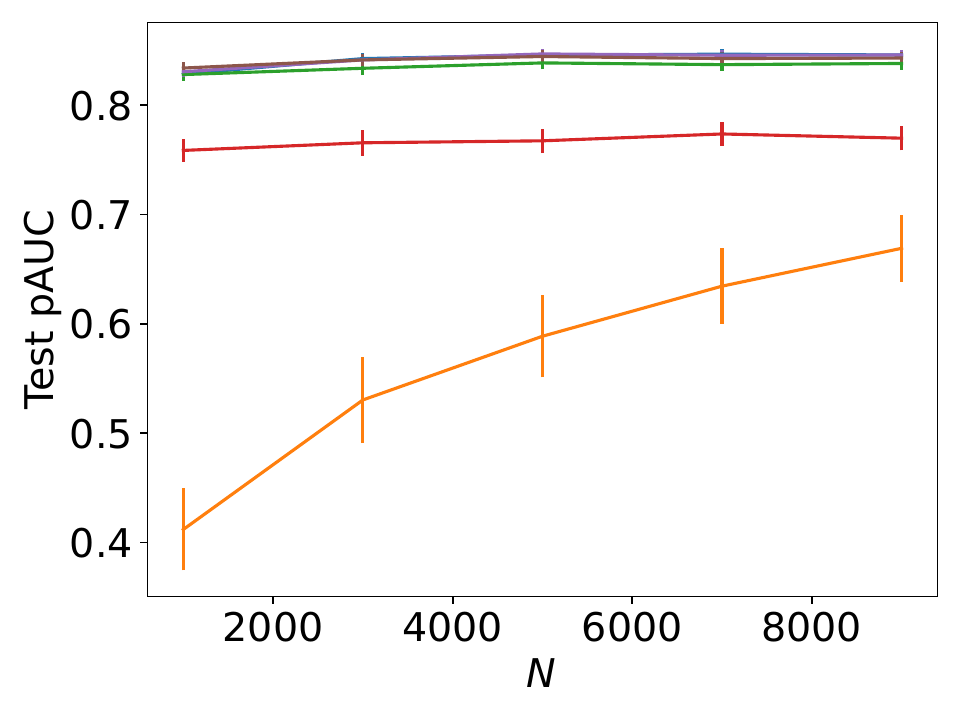}
        \subcaption{CODRNA}
    \end{minipage}
    \hfill
    \begin{minipage}{0.24\linewidth}
        \centering
        \includegraphics[width=3.3cm]{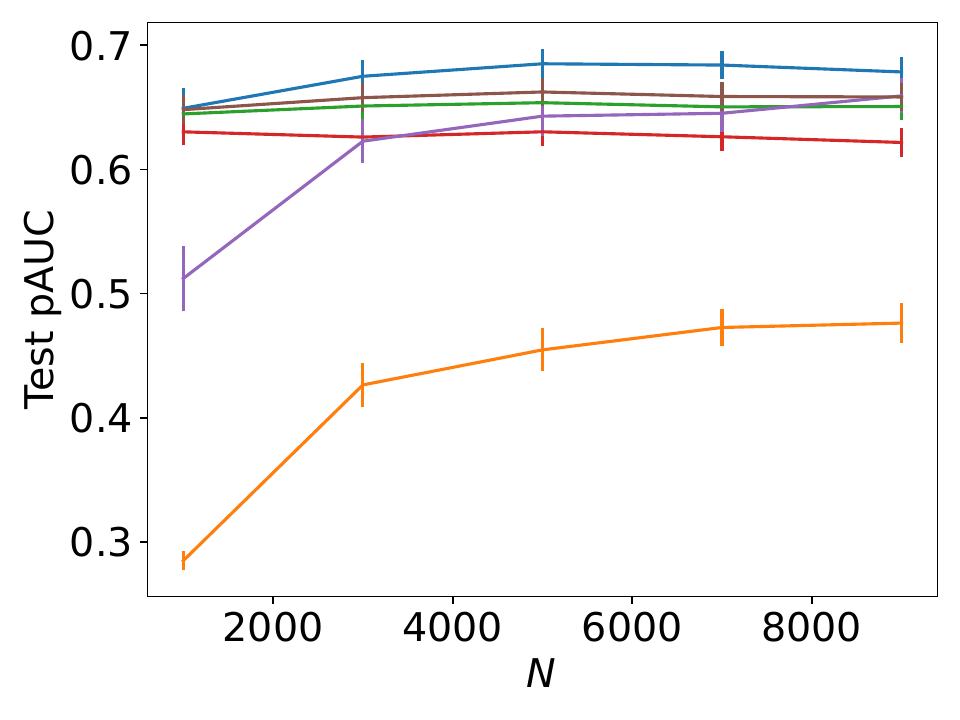}
        \subcaption{IJCNN}
    \end{minipage}
    \hfill
    \begin{minipage}{0.24\linewidth}
        \centering
        \includegraphics[width=3.3cm]{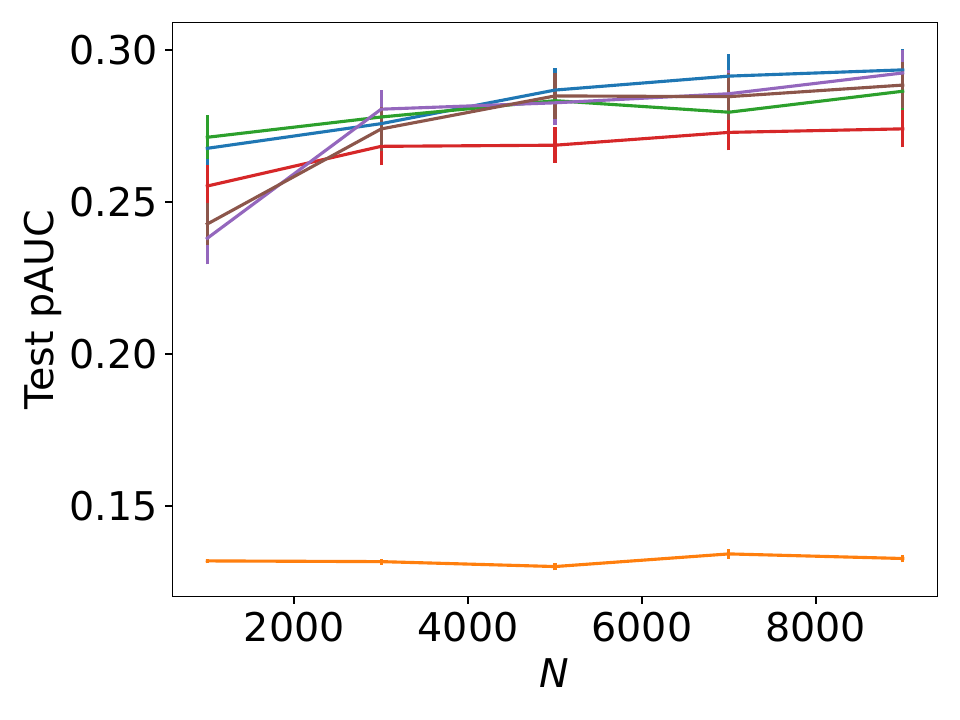}
        \subcaption{EPSILON}
    \end{minipage}
    \caption{The average test pAUCs and their standard errors with $\alpha=0$, $\beta=0.2$ over different class-priors $\pi \in \{0.1, 0.2\}$ and positive data sizes $N^{{\rm p}} \in \{ 30, 50, 100 \}$ when changing the number of unlabeled data $N$.}
    \label{fig:result_undata}
\end{figure*} 

\subsection{Computation Costs}
\label{append:comp_cost}

We investigated the training time of the proposed method on the MNIST and CIFAR10 datasets. 
We used a Linux server with a 2.20 GHz CPU.
Table \ref{table:comptime} shows the results.
Since 2step additionally learns the classifier to create pseudo PN data, its training time was long.
For the remaining methods, the training times were all similar.
This result indicates that the proposed method is sufficiently practical in terms of the computation cost.

\begin{table*}[t!]
\caption{Training time [sec] of each method on the MNIST and CIFAR10 datasets, where $(\alpha, \beta)=(0, 0.2)$, $\pi=0.2$, $N^{{\rm p}}=100$.}
\label{table:comptime}
\centering
\begin{tabular}{llcccccc}
\hline
Data & $N$ & Ours & NNPU & PUAUC & PAUC & PUpAUCh & 2step \\
\hline
MNIST & 5,000 & 90.460 & 90.054 & 90.036 & 90.978 & 89.618 & 184.040 \\
CIFAR10 & 9,000 & 305.871 & 304.184 & 302.897 & 305.471 & 302.469 & 608.849 \\
\hline
\end{tabular}
\end{table*}

\subsection{Evaluation on Real-World Medical and Advertising Datasets}
\label{sec:medical_advertising}

In Section \ref{sec:intro}, we presented examples from the medical and advertising domains to motivate our problem setting.
In this section, we evaluated the proposed method on Mammography
\citep{vanschoren2014openml} 
and Online-shoppers
\citep{sakar2019real}, 
which are real-world medical and advertising-related datasets, respectively.

Mammography is a dataset for detecting microcalcifications, which are treated as positives, and is highly class-imbalanced (i.e., the positive class-prior is $0.023$).
Due to the limited (positive) data size, we used $30$ and $20$ positive data for training and validation, respectively; $5,000$ and $1,000$ unlabeled data for training and validation, respectively; and $3,000$ test data. The unlabeled and test data were created to preserve the original class-prior value (i.e., $\pi=0.023$).
Online-shoppers is a dataset for predicting whether an online browsing session results in a purchase, with purchase sessions treated as positives. The positive class-prior is $0.155$.
This task is closely related to the advertising scenario motivating our work: it is important to identify sessions likely to result in purchases while maintaining a low FPR, since falsely targeting non-purchase sessions can waste advertising resources and degrade user experience.
For this dataset, we changed the positive training data size within $\{30, 50, 100\}$ as in the other datasets and used $1,000$ test data. The sizes of other datasets were the same as those for Mammography.
The unlabeled data were created to preserve the original class-prior (i.e., $\pi=0.155$).

Table \ref{table:mammo_online} shows the results.
On Mammography, the proposed method performed comparably to the other pAUC maximization methods (PAUC, PUpAUCh, and 2step).
Since this dataset has a very small positive class-prior, treating unlabeled instances as negative introduces only limited bias, making such approaches naturally competitive.
In contrast, on Online-shoppers, the proposed method consistently outperformed the other methods.
This dataset has a larger positive class-prior, making it more important to properly account for positive instances contained in the unlabeled data.
Such non-negligible class-priors can naturally arise when both training and test data are drawn from a candidate population enriched through upstream screening, filtering, or retrieval.
Overall, these results show that the proposed method is particularly effective in practically relevant settings where the unlabeled data contain a non-negligible proportion of positive instances, while remaining competitive even when the positive class-prior is very small.

\begin{table*}[t]
\caption{The results on Mammography and Online-shoppers: test average pAUCs and their standard deviations with $\alpha=0$, $\beta=0.1$ and $\alpha=0$, $\beta=0.2$.
Values in bold are not statistically different at the $5\%$ level from the best performing method in each row according to a paired t-test.}
\label{table:mammo_online}
\centering
\scalebox{0.76}{
\begin{tabular}{l r r r r r r r}
\hline
Data & $\beta$ & Ours & NNPU & PUAUC & PAUC & PUpAUCh & 2step \\
\hline
Mammography   & 0.1 & \textbf{0.784 $\pm$ 0.041} & 0.001 $\pm$ 0.001  & 0.754 $\pm$ 0.041 & \textbf{0.780 $\pm$ 0.045}  & \textbf{0.786 $\pm$ 0.037}  & \textbf{0.784 $\pm$ 0.037} \\
Mammography   & 0.2 & \textbf{0.821 $\pm$ 0.039} & 0.004 $\pm$ 0.004  & 0.810 $\pm$ 0.032 & \textbf{0.831 $\pm$ 0.039}  & \textbf{0.827 $\pm$ 0.036}  & \textbf{0.834 $\pm$ 0.029} \\
Online-shoppers & 0.1 & \textbf{0.329 $\pm$ 0.081} & 0.222 $\pm$ 0.072 & 0.259 $\pm$ 0.069 & 0.226 $\pm$ 0.032 & 0.308 $\pm$ 0.085 & 0.263 $\pm$ 0.071 \\
Online-shoppers & 0.2 & \textbf{0.442 $\pm$ 0.079} & 0.331 $\pm$ 0.079 & 0.371 $\pm$ 0.073 & 0.386 $\pm$ 0.063 & 0.425 $\pm$ 0.079 & 0.379 $\pm$ 0.073 \\
\hline
\end{tabular}
}
\end{table*}

\subsection{Empirical Analysis of FPRs Induced by the Estimated Thresholds}
\label{apen:fpr_eval}

We tracked the FPRs with the estimated thresholds, ${{\rm FPR}} (\hat{t}_{\alpha})$ and ${{\rm FPR}} (\hat{t}_{\beta})$, during training, and examined whether they remained close to the target values $\alpha$ and $\beta$.\footnote{We did not track the estimated thresholds themselves because they may change substantially when the scale or offset of the score function changes, even when the resulting pAUC remains unchanged.}
In this experiment, $\hat{t}_{\alpha}$ and $\hat{t}_{\beta}$ were estimated from training PU mini-batch as described in Algorithm \ref{arg}, and the FPRs induced by the estimated thresholds were calculated on different fixed $2,500$ negative instances.

Table \ref{tab:fpr_thresholds} shows the average FPRs induced by the estimated thresholds at selected training epochs for FMNIST and CODRNA, an image dataset and a tabular dataset, respectively.
For both datasets, ${\rm FPR}(\hat{t}_{\beta})$ remained close to the target value $\beta$ throughout training. In contrast, ${\rm FPR}(\hat{t}_{\alpha})$ showed larger deviations from the target $\alpha=0$, indicating that estimating the threshold corresponding to the extreme FPR ($\alpha=0$) is more challenging.
Nevertheless, increasing the number of positive training instances from $N^p=30$ to $N^p=100$ consistently reduced the deviations from the target FPRs for both thresholds.
These results suggest that the proposed method can appropriately estimate the target FPR region from PU data during training, and the estimation becomes more accurate as the number of positive training data increases.

\begin{table}[t]
    \centering
    \caption{FPRs induced by the estimated thresholds during training.}
    \label{tab:fpr_thresholds}
    \scalebox{0.85}{
    \begin{tabular}{lrrrlrrrr}
        \hline
        Data & $\alpha$ & $\beta$ & $N^p$ & FPR
        & Epoch 1 & Epoch 10 & Epoch 50 & Epoch 100 \\
        \hline
        FMNIST
        & 0.0 & 0.1 & 30
        & $\mathrm{FPR}(\hat{t}_{\alpha})$
        & 0.025 & 0.031 & 0.031 & 0.031 \\
        & 0.0 & 0.1 & 100
        & $\mathrm{FPR}(\hat{t}_{\alpha})$
        & 0.017 & 0.020 & 0.022 & 0.023 \\
        & 0.0 & 0.1 & 30
        & $\mathrm{FPR}(\hat{t}_{\beta})$
        & 0.118 & 0.115 & 0.115 & 0.116 \\
        & 0.0 & 0.1 & 100
        & $\mathrm{FPR}(\hat{t}_{\beta})$
        & 0.107 & 0.109 & 0.108 & 0.109 \\
        \cline{2-9}
        & 0.0 & 0.2 & 30
        & $\mathrm{FPR}(\hat{t}_{\alpha})$
        & 0.022 & 0.030 & 0.031 & 0.031 \\
        & 0.0 & 0.2 & 100
        & $\mathrm{FPR}(\hat{t}_{\alpha})$
        & 0.013 & 0.021 & 0.021 & 0.022 \\
        & 0.0 & 0.2 & 30
        & $\mathrm{FPR}(\hat{t}_{\beta})$
        & 0.217 & 0.208 & 0.208 & 0.208 \\
        & 0.0 & 0.2 & 100
        & $\mathrm{FPR}(\hat{t}_{\beta})$
        & 0.210 & 0.204 & 0.200 & 0.200 \\
        \hline
        CODRNA
        & 0.0 & 0.1 & 30
        & $\mathrm{FPR}(\hat{t}_{\alpha})$
        & 0.061 & 0.064 & 0.058 & 0.058 \\
        & 0.0 & 0.1 & 100
        & $\mathrm{FPR}(\hat{t}_{\alpha})$
        & 0.027 & 0.033 & 0.044 & 0.046 \\
        & 0.0 & 0.1 & 30
        & $\mathrm{FPR}(\hat{t}_{\beta})$
        & 0.140 & 0.138 & 0.129 & 0.130 \\
        & 0.0 & 0.1 & 100
        & $\mathrm{FPR}(\hat{t}_{\beta})$
        & 0.114 & 0.115 & 0.113 & 0.115 \\
        \cline{2-9}
        & 0.0 & 0.2 & 30
        & $\mathrm{FPR}(\hat{t}_{\alpha})$
        & 0.058 & 0.058 & 0.056 & 0.057 \\
        & 0.0 & 0.2 & 100
        & $\mathrm{FPR}(\hat{t}_{\alpha})$
        & 0.027 & 0.027 & 0.041 & 0.041 \\
        & 0.0 & 0.2 & 30
        & $\mathrm{FPR}(\hat{t}_{\beta})$
        & 0.223 & 0.216 & 0.214 & 0.215 \\
        & 0.0 & 0.2 & 100
        & $\mathrm{FPR}(\hat{t}_{\beta})$
        & 0.208 & 0.201 & 0.204 & 0.204 \\
        \hline
    \end{tabular}
    }
\end{table}

\subsection{Empirical Analysis of the Heuristic Scoring Assumption}
\label{apen:iwata_assumption}

The existing pAUC maximization method, PUpAUCh, trains the score function 
by heuristically treating all unlabeled instances whose ranks based on their current scores fall within a specified range as negatives
\citep{iwata2020semi}.
Here, we evaluate the quality of these pseudo-negatives to investigate to what extent their heuristic assumption that the selected unlabeled instances are negative holds during training.

Specifically, we first define
\begin{align}
e = \frac{|\{{\bf x} \in B:y=+1\}|}{|B|},
\end{align}
where $B$ denotes the set of selected pseudo-negative instances within the mini-batch unlabeled data.
Thus, $e$ represents the proportion of selected instances that are actually positive.
If instances are selected uniformly at random from the unlabeled data, the expected proportion of positives among them is $\pi$.
Thus, we define
\begin{align}
c = \frac{e}{\pi},
\end{align}
where $c=1$ indicates that the selection is random,
$c<1$ indicates that it is better than the random selection, and $c>1$ indicates that it is worse than the random selection.
A larger $c$ means a greater violation of the PUpAUCh's heuristic scoring assumption. 

Table \ref{tab:oracle_gap} reports the average $c$ during PUpAUCh training, together with the test pAUC gap from an oracle method that explicitly uses the true negative instances contained in the unlabeled data, i.e., Oracle $-$ PUpAUCh.
Here, we fixed $\pi=0.1$, $\alpha=0$, and $\beta=0.1$.
We first observe that $c>1$ for most datasets, indicating that the pseudo-negatives selected by PUpAUCh often contain more positive instances than expected under random selection.
This result empirically shows that the heuristic scoring assumption of PUpAUCh is frequently violated during training.
Moreover, we observe a positive association between $c$ and the performance gap between Oracle and PUpAUCh (Spearman's $\rho=0.714$).
This suggests that poorer pseudo-negative selection tends to be associated with a larger performance degradation relative to the oracle.

\begin{table}[t]
    \centering
    \caption{Pseudo-negative selection quality of PUpAUCh~\citep{iwata2020semi} and the test pAUC gap between Oracle and PUpAUCh. Larger $c$ indicates greater positive contamination in the selected pseudo-negatives relative to random selection.}
    \label{tab:oracle_gap}
    \begin{tabular}{lrr}
        \hline
        Data & $c$ & Oracle $-$ PUpAUCh \\
        \hline
        MNIST   & 1.5421 & 0.08379 \\
        FMNIST  & 0.6666 & 0.01786 \\
        SVHN    & 1.5432 & 0.09771 \\
        CIFAR10 & 1.5502 & 0.09186 \\
        SUSY    & 1.1720 & 0.04063 \\
        CODRNA  & 2.2437 & 0.04952 \\
        IJCNN   & 1.6922 & 0.13128 \\
        EPSILON & 1.5265 & 0.03976 \\
        \hline
    \end{tabular}
\end{table}

\end{document}